\documentclass[a4paper,11pt]{article}
\usepackage{geometry}
\usepackage{amsfonts}
\usepackage{url,multirow}
\usepackage[dvipsnames]{xcolor}
\usepackage{subfigure}
\usepackage[ruled,vlined]{algorithm2e}
\usepackage{algorithmic}
\usepackage{enumitem}
\usepackage{amsmath,bm}
\usepackage{amssymb,graphicx,booktabs,mathtools}
\usepackage{float}
\usepackage{longtable}
\usepackage{amsthm}
\newtheorem{theorem}{Theorem}

\newtheorem{assumption}[theorem]{Assumption}

\usepackage[capitalize,noabbrev]{cleveref}

\usepackage[textsize=tiny]{todonotes}

\title{Preconditioned One-Step Generative Modeling\\ for Bayesian Inverse Problems in Function Spaces}
\author{
Zilan Cheng\thanks{Division of Mathematical Sciences, School of Physical and Mathematical Sciences, Nanyang Technological University, 637371, Singapore. Email: \texttt{zilan001@e.ntu.edu.sg}.}
\and Li-Lian Wang\thanks{Division of Mathematical Sciences, School of Physical and Mathematical Sciences, Nanyang Technological University, 637371, Singapore. Email: \texttt{lilian@ntu.edu.sg}.}
\and Zhongjian Wang\thanks{Corresponding author. Division of Mathematical Sciences, School of Physical and Mathematical Sciences, Nanyang Technological University, 637371, Singapore. Email: \texttt{zhongjian.wang@ntu.edu.sg}.}
}

\begin{document}
\maketitle
\begin{abstract}
We propose a machine-learning algorithm for Bayesian inverse problems in the function-space regime. Based on one-step generative transport, the method learns an amortized neural operator whose pushforward of a Gaussian source approximates the posterior distribution conditioned on each new observation. We show that white-noise sources are incompatible with the function-space limit, and therefore adopt a prior-aligned GRF as the source. We justify this choice through the Lipschitz regularity of the resulting one-step conditional posterior transport and numerical experiments on linear inverse and PDE-based inverse problems. The method is not distilled from MCMC: it is trained only with prior samples and simulated partial noisy observations. Once trained, it generates a \(64\times64\) posterior sample in \(\sim 10^{-3}\)s, avoiding repeated forward-model evaluations in MCMC and repeated network evaluations in multistep generative samplers while matching key posterior summaries.
\end{abstract}
\section{Introduction}
Bayesian inference provides a principled framework for uncertainty quantification in scientific computing \cite{stuart2010inverse,dashti2015bayesian}, characterizing uncertainty induced by noisy, incomplete, or indirect observations. This paradigm underpins uncertainty-aware prediction and decision-making in applications such as porous media \cite{iglesias2014well}, imaging \cite{zhou2020bayesian}, geophysics \cite{malinverno2002parsimonious}, structural model updating \cite{beck1998updating}, and model calibration \cite{kennedy2001bayesian}.
This perspective is particularly important for PDE inverse problems, where the unknowns are functions such as coefficients, source terms, or initial conditions; and observational and modeling bias are unavoidable. In such settings, posterior distributions are typically high- or infinite-dimensional, and efficient sampling is central to practical Bayesian inversion.

Classical sampling methods for Bayesian inverse problems, most notably Markov Chain Monte Carlo (MCMC), are theoretically well understood and asymptotically exact. Function-space theory shows that their efficiency depends crucially on the geometry of the prior and posterior measures \cite{stuart2010inverse,cotter2013mcmc}.
In particular, preconditioned samplers---such as the preconditioned Crank--Nicolson (pCN) algorithm \cite{cotter2013mcmc}---replace white-noise random-walk proposals by Gaussian perturbations aligned with the prior covariance, yielding dimension-robust behavior and making them a cornerstone of modern Bayesian inversion. Nevertheless, MCMC methods rely on long correlated chains and repeated forward solves, and can be prohibitively expensive in practice for large-scale PDE inverse problems (see Table~\ref{tab:sampling_time} for runtime comparisons).

Generative models offer a data-driven alternative for sampling complex distributions. Diffusion and score-based generative models \cite{NEURIPS2020_4c5bcfec,song2021scorebased,cardoso2024monte} have become standard paradigms for transforming simple source distributions into complex distributions. Closely related, flow matching learns the transport velocity field directly from an ODE viewpoint \cite{lipman2023flow,albergo2023building,liu2023flow}. These approaches are powerful because, after training, they can generate samples much faster than MCMC. However, they are typically multistep: sampling requires a time-discretized SDE or ODE, and hence many network evaluations, with possible error accumulation along the trajectory. 

For inverse problems, diffusion models are often adapted by first learning a reusable unconditional generative prior and then incorporating observations only at inference time, following guidance originally developed for diffusion-based image synthesis \cite{dhariwal2021diffusion}. DPS \cite{chung2023diffusion} and related posterior-sampling methods \cite{dou2024diffusion} follow this principle by combining a pretrained diffusion prior with measurement-dependent sampling corrections. This modularity is attractive, but the conditioning is typically an approximation, so the resulting samples are not guaranteed to follow the exact Bayesian posterior. Other diffusion- or flow-prior methods also reuse learned generative priors, but most of them are still closer to test-time optimization or MAP-style inversion than to posterior sampling \cite{wang2024dmplug,zhang2024flow}. While such methods are useful for vision-oriented reconstruction tasks, Bayesian inference problems require quantifying posterior uncertainty rather than merely producing a plausible reconstruction. 
 
Narrowing down to the PDE inverse problems, there is an additional function-space constraint: the unknown is a function whose posterior law should be stable under discretization refinement \cite{stuart2010inverse,cotter2013mcmc}. Many recent diffusion-based approaches for PDEs operate after discretization and use grid-based finite-dimensional parameterizations, such as U-Nets \cite{ronneberger2015u}, with iterative conditional sampling or guidance procedures \cite{huang2024diffusionpde,shysheya2024on}. Neural operators, including DeepONet~\cite{lu2021learning}, Fourier neural operators (FNO), and their variants \cite{li2021fourier,liu2024render,cheng2026podno}, provide a more natural function-to-function parameterization. Beyond parameterization of the score, the source measure in the generative models must also be compatible with the function-space geometry. It is not merely a finite-dimensional numerical choice, but it affects the well-posedness and discretization robustness of the sampling problem. Recent function-space flow-matching, diffusion, and score-based models make this point explicit through measure-theoretic constructions and geometry-compatible Gaussian source or noising measures \cite{pmlr-v238-kerrigan24a,pidstrigach2024infinite,lim2025score}. However, these methods typically require multistep ODE/SDE or Langevin sampling, leaving the discussion of efficient and accurate algorithms for function-space Bayesian inverse problems comparatively underexplored.

\medskip
To summarize, three obstacles remain when applying deep generative models to function-space Bayesian inverse problems: sampling efficiency, posterior consistency, and function-space regularity. In this work, we address these issues through a one-step conditional sampler for Bayesian inverse problems in function spaces.

First, to improve sampling efficiency, we employ an operator approach to model the conditional-averaged velocity field proposed in Mean Flows \cite{geng2025mean}. This yields a one-step sampler that avoids the time discretization of SDE- or ODE-based generative samplers and requires only a single model evaluation at inference time. Second, to ensure posterior consistency, we train the sampler on pairs of prior samples and their noisy observations generated by the simulator. We prove that, for each fixed observation, this conditional training corresponds exactly to learning from the associated posterior, without MCMC-generated posterior labels or inference-time guidance corrections. Third, for function-space regularity, we use a prior-aligned anisotropic Gaussian source rather than a discretized white-noise source. We analyze the role of the source measure in the function-space limit and establish Lipschitz regularity of the induced one-step transport under the Gaussian tail assumption \cite{xixian2026wasserstein,meng2025pathway}. Furthermore, in the numerical experiments, we validate the proposed source choice and demonstrate the accuracy and efficiency of the overall sampler across synthetic linear and practical PDE inverse problems.

The rest of the paper is organized as follows. Section~\ref{sec:preliminaries} reviews vanilla flow matching, Mean Flows, and the functional Bayesian inverse-problem formulation. Section~\ref{sec:methodology} derives the conditional one-step posterior transport and describes its neural operator parameterization. Section~\ref{sec:theory} analyzes the role of the source measure and establishes regularity properties of the prior-aligned transport. Section~\ref{sec:experiments} presents numerical experiments on a synthetic linear inverse problem and various practical PDE inverse problems. Section~\ref{sec:conclusion} concludes the paper.

\section{Preliminaries}
\label{sec:preliminaries}
This section provides the background for the proposed conditional one-step posterior sampler. We first recall flow matching as standard multistep generative samplers in Section~\ref{subsec:multistep_generative_samplers}. We then review the one-step Mean-Flow construction in Section~\ref{subsec:one_step_mean_flows} and finally formulate the functional Bayesian inverse-problem setting in Section~\ref{subsec:functional_bayesian_inverse_problems}. 

\subsection{Flow matching as a multistep generative sampler}
\label{subsec:multistep_generative_samplers}
Multistep generative samplers generate target samples from source samples through time-dependent dynamics. In this subsection, we focus on flow matching with a linear interpolation schedule \cite{liu2023flow}; a brief review of the score-based diffusion model with variance preserving properties \cite{song2021scorebased} is deferred to
Appendix~\ref{app:multistep}.

Let \(X\) denote a random variable under the target distribution and let \(\Xi\) denote a random variable from an easy-to-sample source distribution independent of $X$, usually Gaussian. In flow matching, we consider a probability-flow dynamics,
\begin{equation}
    \tfrac{d}{dt} Z_t = v(Z_t,t), \quad t\in[0,1],
    \label{eq:fm_probability_flow_ode}
\end{equation}
with some $v$ such that the marginal distribution of $Z_t$ follows a linear interpolation between the target and the source distribution, i.e.,
\begin{equation}
    Z_t\sim(1-t)X+t\Xi, \quad X\perp\Xi,
    \quad t\in[0,1].
    \label{eq:prelim_linear_path}
\end{equation}
This defines a family of marginal distributions connecting the target side \((t=0)\) to the source side \((t=1)\). Along each paired path, the velocity is \(\Xi-X\). However, the marginal sampling dynamics must depend only on the
current state \(z\), rather than on the unknown endpoint pair \((X,\Xi)\). 

To this end, we consider a Markovian projection at each
time \(t\), 
\begin{equation}\label{projected v}
    v(z,t)
    :=\frac{\int \frac{z-x}{t}p_X(x)p_\Xi(\frac{z-(1-t)x}{t})dx}{\int p_X(x)p_\Xi(\frac{z-(1-t)x}{t})dx}=
    \mathbb E[
        \Xi-X
        \mid
        Z_t=z
    ].
\end{equation}
 Then, the population velocity field \(v\) defines a flow that starts from a draw \(z_1\) of the source distribution, and the marginal distribution of the solution matches the marginal distribution induced by \eqref{eq:prelim_linear_path}. One can show that such  consistency of the marginal law holds when $\Xi$ is a centered Gaussian distribution, following a  similar argument as \cite{lu2025mathematical}.

From a practical learning perspective, the proposed \(v(\cdot,t)\), as a conditional expectation, is the population minimizer of the standard flow-matching \(L^2\) objective
\begin{equation}
    \mathcal L_{\mathrm{FM}}(\theta)
    :=
    \mathbb E[
        \|
        v_\theta(Z_t,t)-(\Xi-X)
        \|^2
    ].
    \label{eq:standard_fm_loss}
\end{equation}
In practice, \(v\) is replaced by the learned approximation \(v_\theta\), and target samples are generated by solving the resulting ODE backward from \(t=1\) to \(t=0\).

The construction of the dynamics \eqref{eq:fm_probability_flow_ode} with \eqref{projected v} can be viewed as a reversed heat-flow-type transport \cite{kim2012generalization}. Although \eqref{eq:prelim_linear_path} is linear along each paired path, the induced $v$ from the projection is generally nonlinear. Thus, standard flow-matching samplers still require multiple ODE discretization steps, motivating the one-step Mean-Flow \cite{geng2025mean} construction, which is reviewed next.
\subsection{One-step generative sampling based on Mean Flows}
\label{subsec:one_step_mean_flows}
Very recently, Mean Flows \cite{geng2025mean} modified flow-matching by learning an averaged velocity rather than the instantaneous velocity. Let \(v(\cdot,t)\) be the marginal velocity field associated with the interpolation path \eqref{eq:prelim_linear_path}. For \(0\le r<t\le 1\), define the trajectory-averaged velocity by
$w(z,r,t)
:=
\frac1{t-r}\int_r^t v(z_\tau,\tau)\,d\tau$ .
Formally, along a trajectory \(\{Z_\tau\}_{\tau\in[r,t]}\), one has
\begin{equation}
Z_t-Z_r = \int_r^t v(Z_\tau,\tau)\,d\tau = (t-r)w(Z_t,r,t).
\label{eq:mean_flow_integral_relation}    
\end{equation}
Differentiating \eqref{eq:mean_flow_integral_relation} with respect to \(t\) along the trajectory gives $w(Z_t,r,t) + (t-r)\frac{d}{dt}w(Z_t,r,t) = v(Z_t,t),$ where $\frac{d}{dt}w(Z_t,r,t) = \partial_t w(Z_t,r,t) + \nabla_z w(Z_t,r,t)\cdot v(Z_t,t)$.
Equivalently,
\begin{equation}   w(Z_t,r,t)   =    v(Z_t,t)    -    (t-r)\frac{d}{dt}w(Z_t,r,t).    \label{eq:mean_flow_self_consistency}
\end{equation}
During training, Mean Flows constructs the regression target from \eqref{eq:mean_flow_self_consistency} by replacing the marginal velocity term \(v(Z_t,t)\) with the path velocity \(\Xi-X\). This trains the model to learn the interval-averaged
velocity field \(w\) directly, instead of learning an instantaneous
velocity field and repeatedly evaluating it along a discretized trajectory,
as one may have done in other distilled models;
we provide a modified approach whose target distribution becomes a Bayesian posterior with greater details in Section \ref{subsec:algebraic_derivation}.

At inference time, a source draw \(\xi\) is mapped back to the target side
by evaluating the averaged field at \((r,t)=(0,1)\):
$    \mathcal T(\xi):=\xi-w(\xi,0,1).$
\subsection{Functional Bayesian inverse problems}
\label{subsec:functional_bayesian_inverse_problems}
We now recall the functional Bayesian formulation of inverse problems \cite{stuart2010inverse}. Let \(\mathcal H\) be a separable Hilbert space, and let \(X\) be an \(\mathcal H\)-valued unknown with realizations denoted by \(x\in\mathcal H\). For example, \(X\) may represent a PDE coefficient, a source term, or an initial condition. Let $\mathcal G:\mathcal H\to\mathcal Y$ be a forward operator, and let $\mathcal O:\mathcal Y\to\mathbb R^m$ be a finite-dimensional observation operator. We consider the observation model with an additive Gaussian observation noise,
\begin{equation}
    Y_{\mathrm{obs}}
    :=
    \mathcal G_{\mathrm{obs}}(X)+\eta
    =
    \mathcal O(\mathcal G(X))+\eta,
    \quad
    \eta\sim\mathcal N(0,\Gamma),
    \label{eq:obs_model}
\end{equation}
where $\mathcal G_{\mathrm{obs}}:=\mathcal O\,\circ\, \mathcal G$ is the observed forward map and \(\Gamma\) is the observation-noise covariance matrix. In the experiments reported in this work, we often set \(\Gamma = \sigma_\eta^2 I_m\) with a problem-specific \(\sigma\).

Following the standard Bayesian formulation of inverse problems  \cite{stuart2010inverse,dashti2015bayesian}, we place a Gaussian prior \(\gamma=\mathcal N(0,\Lambda)\) on \(X\). In this work, \(\Lambda\) is chosen as a Mat\'ern-type covariance operator, $\Lambda=(-\Delta+\kappa^2 I)^{-\alpha}$, where \(\kappa>0\) controls the correlation length and \(\alpha>0\) controls the prior smoothness.

For a fixed observation \(y_{\mathrm{obs}}\), the Bayesian posterior \(\pi(\cdot\mid y_{\mathrm{obs}})\) is the conditional distribution of \(X\) given \(Y_{\mathrm{obs}}=y_{\mathrm{obs}}\). In function space, it is written as
\begin{equation}
\pi(\mathrm dx\mid y_{\mathrm{obs}})
:=
\mathbb P(X\in \mathrm dx\mid Y_{\mathrm{obs}}=y_{\mathrm{obs}})
\propto
\exp(R_\Lambda(x;y_{\mathrm{obs}}))\gamma(\mathrm dx),
\label{eq:posterior_phi}
\end{equation}
where $    R_\Lambda(x;y_{\mathrm{obs}})
    =
    -\tfrac{1}{2\sigma_\eta^2}
\|
    y_{\mathrm{obs}}-\mathcal G_{\mathrm{obs}}(x)
\|_2^2  .$

\section{Conditional One-Step Posterior Transport}
\label{sec:methodology}
This section is devoted to the construction of an amortized one-step sampler for \(\pi(\cdot\mid y_{\mathrm{obs}})\) raised from the functional Bayesian formulation in Section~\ref{subsec:functional_bayesian_inverse_problems}. We first use the joint-law viewpoint to motivate the conditional posterior transport and derive the corresponding Mean-Flow objective and one-step sampling map in Section~\ref{subsec:algebraic_derivation}. We then describe its neural-operator realization and summarize the training and sampling algorithms in Section~\ref{subsec:operator_algorithms}.

\subsection{Derivation of the one-step sampler}
\label{subsec:algebraic_derivation}
In the functional Bayesian inverse-problem setting, our goal is to construct a conditional sampler that, for each observation \(y_{\mathrm{obs}}\), maps a draw $\xi$ from the source distribution $\rho$ to a posterior sample following $\pi(\cdot|y_{\mathrm{obs}})$. Concretely, this amounts to a conditional map with learnable variable $\theta\in\Theta$,
\begin{equation}\label{transportmap}
\mathcal T_\theta:\mathcal H\times\mathbb R^m\to\mathcal H,
~
(\xi,y_{\mathrm{obs}})
\mapsto
\mathcal T_\theta(\xi;y_{\mathrm{obs}}),  
\end{equation}
or equivalently, a family of maps
\(\{\mathcal T_\theta(\cdot;y_{\mathrm{obs}})\}_{y_{\mathrm{obs}}}:\mathcal H\to\mathcal H\), such that
\begin{equation}
    \bigl(\mathcal T_\theta(\cdot;y_{\mathrm{obs}})\bigr)_\#\rho
    \approx
    \pi(\cdot\mid y_{\mathrm{obs}}).
    \label{eq:target_transport}
\end{equation} 
Note that under the formulation of \eqref{transportmap} and \eqref{eq:target_transport}, the same parameters $\theta$ are shared across observations \(y_{\mathrm{obs}}\), and therefore, once trained, for a new observation, the posterior samples can be retrieved from one forward pass of $\mathcal{T}_\theta$.

We first describe the observation-conditioned flow-based model under posterior-to-source interpolation. Let $(X,Y_{\mathrm{obs}})\sim\mathbb P_{X,Y_{\mathrm{obs}}}$ denote a joint simulator draw. By the definition of the posterior, conditioning on
\(Y_{\mathrm{obs}}=y_{\mathrm{obs}}\) makes the corresponding \(X\)-component
distributed as \(\pi(\cdot\mid y_{\mathrm{obs}})\). 
We further sample \(\Xi\) from the prior-aligned source distribution \(\rho_C:=\mathcal N(0,C)\), and take the linear interpolation path
\eqref{eq:prelim_linear_path}, namely,
\begin{equation}\label{eq:linear_path}
    Z_t = (1-t)X|y_{\mathrm{obs}} + t\Xi.
\end{equation}

Then, let \(p_t(\cdot\mid y_{\mathrm{obs}})\) denote the law of \(Z_t\), and define
$S_t(z;y_{\mathrm{obs}})
:=
\nabla_z\log p_t(z\mid y_{\mathrm{obs}}).$ The marginal velocity field associated with this interpolation is 
$v(z,t;y_{\mathrm{obs}}):=
\mathbb E[\Xi-X\mid Z_t=z,y_{\mathrm{obs}}].$ 
Therefore, by $\Xi-X=\tfrac{Z_t-X}{t}$, we have $v(z,t;y_{\mathrm{obs}})=\tfrac1t
(z-\mathbb E[X\mid Z_t=z,y_{\mathrm{obs}}]
)$. 
Since, conditional on \(X=x\),
$Z_t\mid X=x \sim \mathcal N((1-t)x,t^2C)$, the Tweedie's formula yields 
$C S_t(z;y_{\mathrm{obs}})
=
-\frac1{t^2}
\left(
z-(1-t)\mathbb E[X\mid Z_t=z,y_{\mathrm{obs}}]
\right),$
equivalently,
\[
\mathbb E[X\mid Z_t=z,y_{\mathrm{obs}}]
=\frac{z+t^2 C S_t(z;y_{\mathrm{obs}})}{1-t}.
\]
Thus, we could write the corresponding marginal velocity as
\begin{equation}
\label{eq:velocity_score_identity}
    v(z,t;y_{\mathrm{obs}})
    =
    -\frac{1}{1-t}
    (
        z+tC S_t(z;y_{\mathrm{obs}})
    ).
\end{equation}
This identity relies on the consistency between the Gaussian source
\(\rho_C=\mathcal N(0, C)\) and the posterior $X$, and the importance of the source choice will be justified theoretically in
Section~\ref{sec:theory} and demonstrated experimentally in Section~\ref{sec:experiments}.

We can now construct the conditional sampler in \eqref{eq:target_transport} with $\rho_C$ by
adapting the Mean-Flow construction~\cite{geng2025mean} to the observation-conditioned setting. 
To be specific, we learn a conditional Mean-Flow predictor \cite{geng2025mean}
\(w_\theta:\mathcal H\times[0,1]\times[0,1]\times\mathbb R^m\to\mathcal H\),
\((z,r,t, y_{\mathrm{obs}})\mapsto
w_\theta(z,r,t; y_{\mathrm{obs}})\),
which approximates the observation-conditioned averaged velocity over the
interval \([r,t]\).

Following the Mean-Flow identity \eqref{eq:mean_flow_self_consistency}, we define the stop-gradient target from the path velocity \(V^{\mathrm{path}}:=\Xi-X\):
\begin{equation}
W_{\mathrm{tgt}}
:=
V^{\mathrm{path}}
-
(t-r)\,
\mathrm{JVP}_{(z,r,t)}
\bigl(w_\theta;(V^{\mathrm{path}},0,1)\bigr),
\label{eq:w_target_mf}
\end{equation}
where the JVP is evaluated at \((z,r,t)=(Z_t,r,t)\), and \(r\) is held fixed
in the derivative. Equivalently,
\begin{equation}
\begin{aligned}
\mathrm{JVP}_{(z,r,t)}
\bigl(w_\theta;(V^{\mathrm{path}},0,1)\bigr)
:=&
\left.
\frac{d}{d\epsilon}
w_\theta
\bigl(
Z_t+\epsilon V^{\mathrm{path}},
r,
t+\epsilon;
Y_{\mathrm{obs}}
\bigr)
\right|_{\epsilon=0}\\
=&
D_z w_\theta(Z_t,r,t; Y_{\mathrm{obs}})[V^{\mathrm{path}}]
+
\partial_t w_\theta(Z_t,r,t; Y_{\mathrm{obs}}).    
\end{aligned}
\label{eq:w_target_jvp_def}
\end{equation}
Here \(D_z w_\theta(\cdot)[V^{\mathrm{path}}]\) denotes the directional derivative with respect
to the state variable along the direction \(V^{\mathrm{path}}\).

Consequently, the training objective in our one-step generative modeling of Bayesian posterior reads,
\begin{equation}
\mathcal L(\theta)
:=
\mathbb E_{(X,Y_{\mathrm{obs}}),\,\Xi,\,r,\,t}
\big[
\|
w_\theta(Z_t,r,t;Y_{\mathrm{obs}})
-
\mathrm{sg}(W_{\mathrm{tgt}})
\|^2
\big],
\label{eq:mf_loss}
\end{equation}
where \(\mathrm{sg}(\cdot)\) denotes the stop-gradient operator. The expectation is defined over the joint simulator pairs \((X,Y_{\mathrm{obs}})\). Since these pairs are generated from the prior predictive law, it is not immediate that minimizing \eqref{eq:mf_loss} learns a posterior sampler for each fixed observation.  The following theorem provides the validity of \eqref{eq:mf_loss}.
\begin{theorem}[Exactness of the posterior sampling from the forward data]
\label{thm:posterior_conditioned_population_minimizer}
Let \((X,Y_{\mathrm{obs}})\) be sampled from the joint prior-predictive law, \(\Xi\sim\rho\) be independent of \((X,Y_{\mathrm{obs}})\), and 
\((r,t)\) from a joint distribution $\lambda$ on
\(\{0\le r\le t\le 1\}\). Assume that the loss integrand in
\eqref{eq:mf_loss} is integrable. Then the unique minimizer $w^{\star}$ of the joint-pair population objective
\(\mathcal L(\theta)\) constructs the pushforward map to the exact posterior.
\end{theorem}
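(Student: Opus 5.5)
The proof disintegrates the joint-pair objective over the observation, then identifies the fixed point of the stop-gradient regression with the conditional average velocity. The key tool is the tower property. Write $\mathcal L(\theta)=\mathbb E_{Y_{\mathrm{obs}}}\big[\mathcal L_{y}(\theta)\big]$ with $y=Y_{\mathrm{obs}}$. Here $\mathcal L_y$ is the same integrand, with the expectation taken over $X\sim\pi(\cdot\mid y)$, $\Xi\sim\rho$ independent, and $(r,t)\sim\lambda$. This uses the fact that the joint prior-predictive law factorizes as the marginal of $Y_{\mathrm{obs}}$ times the regular conditional $\pi(\cdot\mid y)$. Regular conditional probabilities exist because $\mathcal H\times\mathbb R^m$ is Polish. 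Independence of $\Xi$ and $(r,t)$ from $(X,Y_{\mathrm{obs}})$ is preserved under conditioning. Integrability lets us apply Fubini. Since $w$ takes $y$ as an input, minimizing over measurable functions of $(z,r,t,y)$ decouples into a pointwise-in-$y$ minimization. Hence a population minimizer of $\mathcal L$ restricts, for $\mathbb P_{Y_{\mathrm{obs}}}$-a.e.\ $y$, to a minimizer of $\mathcal L_y$. It is unique as an element of $L^2$ of the law of $(Z_t,r,t,Y_{\mathrm{obs}})$.

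Next, fix $y$ and characterize the minimizer of $\mathcal L_y$. Because the target is stop-gradiented, the minimization is an $L^2$ regression. At a stationary point $w^\star$ we have
\[
w^\star(z,r,t;y)=\mathbb E\big[W_{\mathrm{tgt}}\mid Z_t=z,r,t,y\big].
\]
The JVP term is linear in $V^{\mathrm{path}}$ and depends otherwise only on $(Z_t,r,t,y)$. Conditioning therefore replaces $V^{\mathrm{path}}$ by $v(z,t;y)=\mathbb E[\Xi-X\mid Z_t=z,y]$. This yields the self-consistency identity \eqref{eq:mean_flow_self_consistency},
\[
w^\star(z,r,t;y)=v(z,t;y)-(t-r)\big(\partial_t w^\star+D_zw^\star[v]\big)(z,r,t;y),
\]
now for the conditional velocity field.

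Then show that this identity forces $w^\star$ to be the true averaged velocity. Let $\phi_{s}$ be the flow of $v(\cdot,\cdot;y)$, and set $g(t)=(t-r)\,w^\star(\phi_t(z_0),r,t;y)$ along a trajectory. Then $g'(t)=v(\phi_t,t)$ and $g(r)=0$, so $(t-r)w^\star=\phi_t-\phi_r$. This is exactly \eqref{eq:mean_flow_integral_relation}, which gives both uniqueness (the linear transport ODE with zero initial datum has a unique solution) and identification. Taking $(r,t)=(0,1)$ gives $\mathcal T^\star(\xi;y)=\xi-w^\star(\xi,0,1;y)=\phi_0(\phi_1^{-1}(\xi))$, the backward flow map from source to target. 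Marginal consistency of the flow-matching ODE, recalled in Section~\ref{subsec:multistep_generative_samplers} for a centered Gaussian source (following \cite{lu2025mathematical}), gives $\mathrm{Law}(\phi_t)=p_t(\cdot\mid y)$. Its endpoint is $p_0=\pi(\cdot\mid y)$, so $\mathcal T^\star(\cdot;y)_\#\rho=\pi(\cdot\mid y)$ for a.e.\ $y$.

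The main obstacle is the second-to-last step. The stop-gradient makes the "minimizer" a fixed point, not a true minimizer, so uniqueness must come from the transport-equation argument. That argument requires well-posedness of the flow of $v(\cdot,t;y)$: existence, uniqueness, and regularity near $t=0,1$. I would assume enough regularity, for instance Lipschitz $v$ on $[r,t]\subset(0,1)$, via the Tweedie representation \eqref{eq:velocity_score_identity}. I would then extend to the endpoints by continuity, and this regularity is deferred to Section~\ref{sec:theory}.
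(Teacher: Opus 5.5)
Your proposal is correct and follows the paper's route. The paper's proof is exactly your first step: it factorizes $\mathbb P_{X,Y_{\mathrm{obs}}}(\mathrm dx,\mathrm dy)=\pi(\mathrm dx\mid y)\,\mathbb P_{Y_{\mathrm{obs}}}(\mathrm dy)$ to get $\mathcal L(\theta)=\mathbb E_{Y_{\mathrm{obs}}}[\mathcal L_{Y_{\mathrm{obs}}}(\theta)]$, and then simply asserts that the per-$y_{\mathrm{obs}}$ minimizer gives $\mathcal T^\star(\cdot;y_{\mathrm{obs}})_\#\rho=\pi(\cdot\mid y_{\mathrm{obs}})$, whereas you go further and justify that assertion via the stop-gradient fixed-point characterization, the characteristic argument for the self-consistency identity, and marginal consistency of the flow, with the regularity caveats you correctly flag.
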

\begin{proof}
By the definition of the posterior as the conditional law of \(X\) given
\(Y_{\mathrm{obs}}=y\), the joint law factorizes as
\[
    \mathbb P_{X,Y_{\mathrm{obs}}}(\mathrm dx,\mathrm dy)
    =
    \pi(\mathrm dx\mid y)\,
    \mathbb P_{Y_{\mathrm{obs}}}(\mathrm dy).
\]
Substituting this factorization into the population expectation
\eqref{eq:mf_loss} gives
\[
\begin{aligned}
\mathcal L(\theta)
&=
\int
\|
w_\theta(Z_t,r,t;\mathcal E_y(y))
-
\mathrm{sg}(w_{\mathrm{tgt}})
\|^2
\,
\pi(\mathrm dx\mid y)\,
\mathbb P_{Y_{\mathrm{obs}}}(\mathrm dy)\,
\rho(\mathrm d\xi)\,
\mathrm d\lambda(r,t) \\
&=
\int
[
\int
\|
w_\theta(Z_t,r,t;\mathcal E_y(y))
-
\mathrm{sg}(w_{\mathrm{tgt}})
\|^2
\,
\pi(\mathrm dx\mid y)\,
\rho(\mathrm d\xi)\,
\mathrm d\lambda(r,t)
]
\mathbb P_{Y_{\mathrm{obs}}}(\mathrm dy) \\
&=
\int
\mathcal L_y(\theta)\,
\mathbb P_{Y_{\mathrm{obs}}}(\mathrm dy)
=
\mathbb E_{Y_{\mathrm{obs}}}
[
    \mathcal L_{Y_{\mathrm{obs}}}(\theta)
],
\end{aligned}
\]
where, for each fixed observation \(y_{\mathrm{obs}}\),
\begin{equation}
    \label{Lyobs}\mathcal L_{y_{\mathrm{obs}}}(\theta)
:=
\mathbb E_{X\sim\pi(\cdot\mid y_{\mathrm{obs}}),\,\Xi,\,r,\,t}
[
\|
w_\theta(Z_t,r,t;y_{\mathrm{obs}})
-
\mathrm{sg}(W_{\mathrm{tgt}})
\|^2
]
\end{equation}
For each fixed $y_{\mathrm{obs}}$, the unique minimizer of \eqref{Lyobs}, \(w^\star(\cdot,r,t;\mathcal y_{\mathrm{obs}})\), constructs an exact posterior sampler $(\mathcal T^\star(\cdot;y_{\mathrm{obs}}))_\#\rho
=
\pi(\cdot\mid y_{\mathrm{obs}})$ by
$\mathcal T^\star(\xi;y_{\mathrm{obs}})
=
    \xi
    -
    w^\star(\xi,0,1;y_{\mathrm{obs}})$. Finally, note that taking the expectation of \eqref{Lyobs} over $y_{\mathrm{obs}}$ makes the overall minimizer of $\mathcal{L}$, $w^{\star}$ constructs the exact pushforward map for any possible $y_{\mathrm{obs}}$. 
\end{proof}
We use the learned predictor \(w_\theta\) as an approximation
of the exact averaged velocity \(w^\star\), giving the approximated one-step sampler in \eqref{transportmap} and \eqref{eq:target_transport}:
\begin{equation}
\label{eq:learned_transport}
\mathcal T_\theta(\xi;y_{\mathrm{obs}})
=
    \xi
    -
    w_\theta(\xi,0,1;y_{\mathrm{obs}}),
\quad
\xi\sim\rho .
\end{equation}

\begin{figure}[ht!]
  \centering
  \includegraphics[width=1\textwidth]{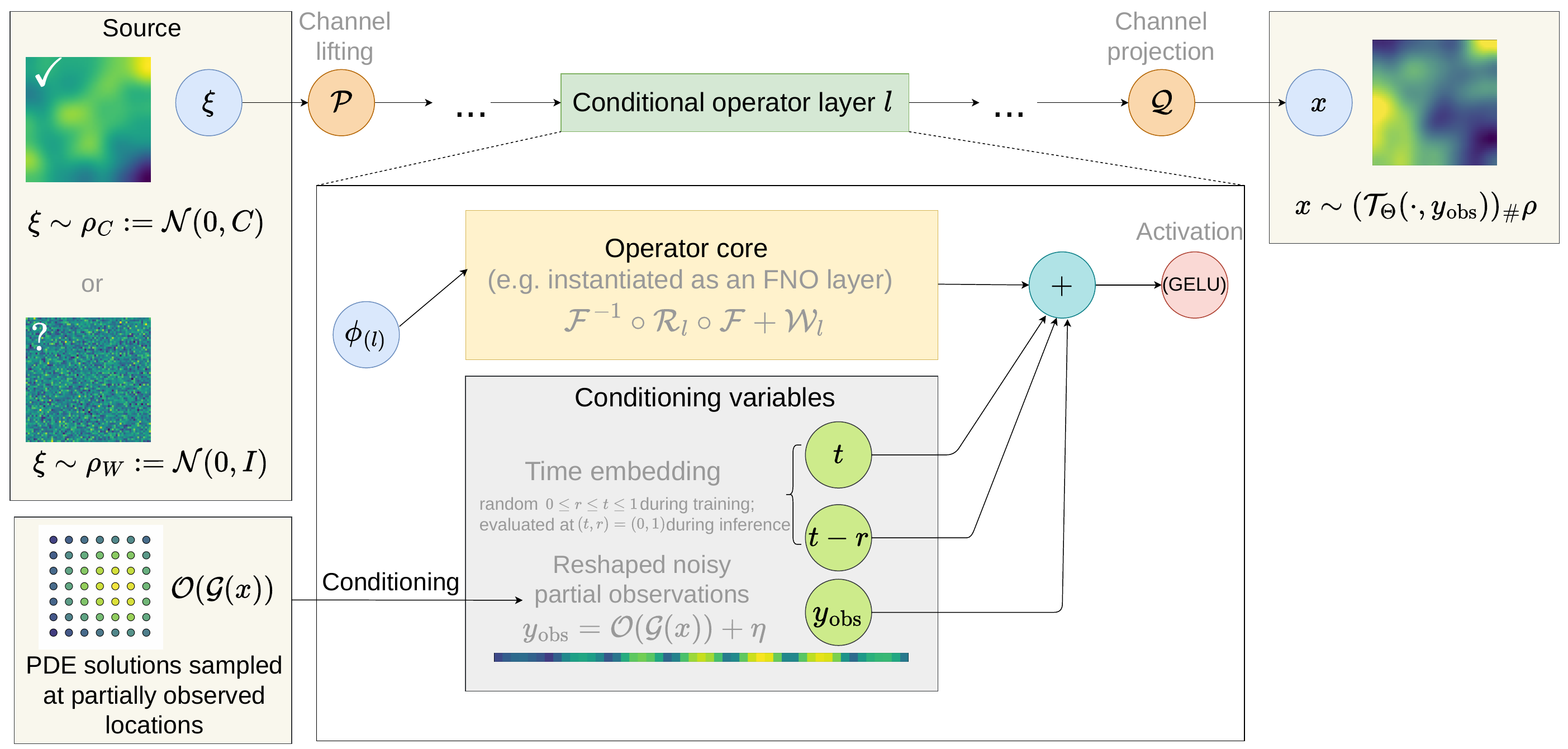}
  \caption{
  Conditional neural-operator parameterization of the mean-flow predictor.
A source draw \(\xi\sim\rho\) is lifted by \(\mathcal P\), passed through
\(L\) conditional operator layers, and projected by \(\mathcal Q\) to produce
the mean-flow correction. During training, time pairs \(0\le r\le t\le 1\)
are sampled randomly; at inference, the learned predictor is evaluated at
\((r,t)=(0,1)\) to generate one-step posterior samples.}
  \label{fig:conditional_operator_generator}
\end{figure}

\subsection{Neural-operator realization and algorithms}
\label{subsec:operator_algorithms}
\begin{algorithm}[!htb]
  \caption{Training the Conditional Mean-Flow Predictor}
  \label{alg:mf_training_encode_only}
  \begin{algorithmic}
    \STATE {\bfseries Input:} Training pairs
    \(\{(x^{(i)},y_{\mathrm{obs}}^{(i)})\}_{i=1}^{N_{\mathrm{train}}}\) sampled from the joint law of
    \((X,Y_{\mathrm{obs}})\);
    source measure \(\rho\).
    \STATE {\bfseries Preprocessing:}
    Estimate encoding statistics \((\mu_x,\sigma_x)\) and
    \((\mu_y,\sigma_y)\) from the training set, and define
    \(\tilde y_{\mathrm{obs}}=\mathcal E_y(y_{\mathrm{obs}})\) and \(\tilde x=\mathcal E_x(x)\) by
    \eqref{eq:normalization}.
    \STATE {\bfseries Initialize:} network parameters \(\theta\).
    \REPEAT
      \STATE Sample a mini-batch \((x,y_{\mathrm{obs}})\) from the training set.
      \STATE Encode
      \(\tilde x=\mathcal E_x(x)\) and
      \(\tilde y_{\mathrm{obs}}=\mathcal E_y(y_{\mathrm{obs}})\).
      \STATE Sample an independent source draw \(\xi\sim\rho\).
      \STATE Sample a time pair \(0\le r\le t\le 1\).
      \STATE Form
          \(z_t=(1-t)\tilde x+t\xi\),
          \(v^{\mathrm{path}}=\xi-\tilde x\).
      \STATE Predict the averaged velocity
      \(w_\theta(z_t,r,t;\tilde y_{\mathrm{obs}})\) as in
      \eqref{eq:conditional_operator_composition}.
      \STATE Construct the stop-gradient target \(w_{\mathrm{tgt}}\) using
      \eqref{eq:w_target_mf}.
      \STATE Update \(\theta\) by minimizing
        $\|
        w_\theta(z_t,r,t;\tilde y_{\mathrm{obs}})
        -
        \mathrm{sg}(w_{\mathrm{tgt}})
        \|^2 $.
    \UNTIL{convergence}
    \STATE {\bfseries Output:} trained parameters \(\hat\theta\).
  \end{algorithmic}
\end{algorithm}
\begin{algorithm}[!htb]
  \caption{One-Step Conditional Posterior Sampling}
  \label{alg:mf_sampling_encode_only}
  \begin{algorithmic}
    \STATE {\bfseries Input:} Observation \(y_{\mathrm{obs}}\); trained
    parameters \(\hat\theta\); source measure \(\rho\); encoders
    \(\mathcal E_x,\mathcal E_y\).
    \STATE Encode the observation:
        \(\tilde y_{\mathrm{obs}}
        =
        \mathcal E_y(y_{\mathrm{obs}})\).
    \STATE Sample \(\xi\sim\rho\).
    \STATE Generate an encoded posterior sample:
        \(\tilde x
        =
        \xi
        -
        w_{\hat\theta}(\xi,0,1;\tilde y_{\mathrm{obs}})\).
    \STATE Decode to physical space:
        \(x=\mathcal E_x^{-1}(\tilde x)\).
    \STATE {\bfseries Output:} Posterior sample \(x\).
  \end{algorithmic}
\end{algorithm}
Since both the input state \(Z_t\) and the output averaged velocity are functions on a spatial domain, we parameterize \(w_\theta\) as a FNO-style conditional neural operator. 

Compared with a standard FNO parameterization, our implementation uses source-dependent encoded coordinates for the function variables. Specifically,
we use the encoded coordinates
\begin{equation}
    \tilde Y_{\mathrm{obs}}
    =
    \mathcal E_y(Y_{\mathrm{obs}})
    =
    \frac{Y_{\mathrm{obs}}-\mu_y}{\sigma_y},
    \qquad
    \tilde X
    =
    \mathcal E_x(X)
    =
    \begin{cases}
    X-\mu_x, & \rho=\rho_C,\\[2pt]
    (X-\mu_x)/\sigma_x, & \rho=\rho_W,
    \end{cases}
    \label{eq:normalization}
\end{equation}
where the statistics are estimated from the training set. For the
prior-aligned source \(\rho_C\), which is used in the proposed sampler, we
only center the discretized field to preserve the prior covariance structure.
For the white source \(\rho_W=\mathcal N(0,I)\), included only as an implementation-level comparison baseline, we additionally standardize the discretized field componentwise, so as to prevent the white source from having to learn the physical scales of the discretized field in addition to the posterior-to-source transformation. 
This normalization is the most effective
practical setting for each source choice, but it does not alter the source covariance structure.



Then, given an encoded input state \(z\), viewed as a realization of \(Z_t\) in
\eqref{eq:linear_path}, we first lift \(z\) to hidden features using a pointwise
lifting map \(\mathcal P\), apply \(L\) conditional operator layers
\(\Psi_1,\ldots,\Psi_L\), and then project the hidden features back to the
state variable using \(\mathcal Q\). Equivalently,
\begin{equation}\label{eq:conditional_operator_composition}
w_\theta(z,r,t;\tilde y_{\mathrm{obs}})
=
(
\mathcal Q
\,\circ\,
\Psi_L^{r,t,\tilde y_{\mathrm{obs}}}
\,\circ\,
\cdots
\,\circ\,
\Psi_1^{r,t,\tilde y_{\mathrm{obs}}}
\,\circ\,
\mathcal P
)(z).    
\end{equation}
Concretely, if \(\psi_\ell\) denotes the input hidden feature to the
\(\ell\)-th layer, then the conditional FNO layer
\(\Psi_\ell^{r,t,\tilde y_{\mathrm{obs}}}\) \cite{li2021fourier,cheng2026podno} takes the form
\begin{equation}
\psi_{\ell+1}
=
\Psi_\ell^{r,t,\tilde y_{\mathrm{obs}}}(\psi_\ell)
=
\mathcal F^{-1}
(
    \mathcal R_\ell(\mathcal F\psi_\ell)
)
+
\mathcal W_\ell\psi_\ell
+
\mathcal I_\ell(t,t-r,\tilde y_{\mathrm{obs}}),
\label{eq:fno_layer}
\end{equation}
where \(\mathcal F\) denotes the Fourier transform, \(\mathcal R_\ell\) is a
learned spectral multiplier on truncated Fourier modes, and
\(\mathcal W_\ell\) is a pointwise linear map.
The injection term
\(\mathcal I_\ell(t,t-r,\tilde y_{\mathrm{obs}})\) embeds the time variables
and the encoded observation, maps them onto the spatial grid, and adds the
resulting features to the hidden representation. See Figure~\ref{fig:conditional_operator_generator} for details.

The complete training and sampling procedures are then summarized in
Algorithms~\ref{alg:mf_training_encode_only}--\ref{alg:mf_sampling_encode_only} by a combination of analysis in Section~\ref{subsec:algebraic_derivation} and the encoding \eqref{eq:normalization}.

\section{Theory: noise geometry and transport regularity}
\label{sec:theory}
This section explains why the source geometry is central to
resolution-robust one-step posterior transport. 

For a white source variable
$\Xi_W=\sum_{k\ge1}g_k e_k,
~
g_k\sim\mathcal N(0,1)~\text{i.i.d.},$
we have
$\mathbb E\|\Xi_W\|_{\mathcal H}^2
=
\sum_{k\ge1}\mathbb E|g_k|^2
=
\infty,$
so \(\Xi_W\notin\mathcal H\) almost surely. By contrast, when by \eqref{eq:posterior_phi},
\(\pi(\cdot\mid y_{\mathrm{obs}})\ll\gamma=\mathcal N(0,\Lambda)\) with
\(\Lambda\) trace-class, posterior samples live in $\mathcal{H}$, the same function-space geometry
as the prior. Thus, the source distribution should
respect this prior-induced geometry.

This requirement is reminiscent of pCN
\cite{cotter2013mcmc}, where the Gaussian reference is used to construct a
dimension-robust Markov chain whose invariant measure is the target posterior.
In contrast, our sampler is a finite-time amortized transport map. For such a
one-step sampler, measure-level absolute continuity only identifies a common
measure-theoretic setting; it does not control the smoothness of the finite-time
transport map. Additional regularity is therefore needed to ensure that the
transport remains stable under discretization refinement.

Building on the Gaussian-tail framework \cite{xixian2026wasserstein} and the corresponding velocity-regularity analysis \cite{meng2025pathway}, we therefore extend this mechanism to the conditional posterior transport in Bayesian inverse problems. Under Assumption~\ref{assu:gaussian_tail_function_space}, the endpoint singularity along the linear path cancels in the \(C\)-weighted geometry, yielding dimension-independent Lipschitz bounds for the averaged velocity and the one-step map.
\begin{assumption}[Gaussian-tail assumption in Bayesian inverse problems]\label{assu:gaussian_tail_function_space}
Fix $y_{\mathrm{obs}}$ and let $\pi(\cdot| y_{\mathrm{obs}})$ be the target posterior on $\mathcal{H}$. Let \(\rho_C=\mathcal N(0,C)\) be the Gaussian source measure, where \(C:\mathcal H\to\mathcal H\) is self-adjoint, nonnegative, and trace-class.
Assume $\pi(\cdot|y_{\mathrm{obs}})$ satisfies a Gaussian-tail condition under the $C$-geometry:
there exists a (possibly degenerate) self-adjoint, nonnegative, trace-class operator
$A:\mathcal{H}\to \mathcal{H}$ such that $\pi(\cdot|y_{\mathrm{obs}})\ll \mathcal N(0,A)$ and
\begin{equation}\label{eqn:gtail}
\frac{d\pi(\cdot| y_{\mathrm{obs}})}{d\mathcal N(0,A)}(x) \propto \exp\big(R_A(x;y_{\mathrm{obs}})\big).    
\end{equation}
Here \(R_A(\cdot;y_{\mathrm{obs}})\) is twice Fr\'echet differentiable and
satisfies
\begin{equation}
\label{eq:M1_M2}
    M_1(y_{\mathrm{obs}})
    :=
    \|
    C^{1/2}\nabla R_A(\cdot;y_{\mathrm{obs}})
    \|_\infty
    <\infty,
    \quad
    M_2(y_{\mathrm{obs}})
    :=
    \|
    C\nabla^2 R_A(\cdot;y_{\mathrm{obs}})
    \|_\infty
    <\infty .
\end{equation}
Moreover, \(A\) and \(C\) induce the same Cameron--Martin space, and \(AC^{-1}\)
and \(A^{-1}C\) extend to bounded operators on this common Cameron--Martin
space.
\end{assumption}

For common PDE inverse problems, the Gaussian-tail assumption holds when \(\mathcal G_{\mathrm{obs}}:=\mathcal O\,\circ\,\mathcal G\in C_b^2\) and the source covariance is consistent with the prior covariance. The following theorem gives this sufficient condition.
\begin{theorem}[A sufficient condition for the Gaussian-tail assumption]
\label{thm:gaussian_tail_sufficient_condition}
Consider the additive Gaussian noise model whose posterior can be written
relative to the prior \(\gamma=\mathcal N(0,\Lambda)\) as in
\eqref{eq:posterior_phi}. Suppose that the source covariance \(C\) is consistent
with the prior covariance \(\Lambda\), in the sense that \(\Lambda\) and \(C\)
induce the same Cameron--Martin space and the comparison operators
\(\Lambda C^{-1}\) and \(\Lambda^{-1}C\) extend to bounded operators on this
common Cameron--Martin space. Assume further that
$\mathcal G_{\mathrm{obs}}
:=
\mathcal O\,\circ\,\mathcal G
:
\mathcal H\to\mathbb R^m$
is bounded with bounded first and second Fr\'echet derivatives:
\[
\|\mathcal G_{\mathrm{obs}}\|_{\infty}
+
\|D\mathcal G_{\mathrm{obs}}\|_{\infty}
+
\|D^2\mathcal G_{\mathrm{obs}}\|_{\infty}
<\infty .
\]
Then Assumption~\ref{assu:gaussian_tail_function_space} holds with
Gaussian reference covariance \(A=\Lambda\).
\end{theorem}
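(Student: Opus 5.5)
Taking $A=\Lambda$, the potential is $R_A=R_\Lambda$. The prior $\gamma=\mathcal N(0,\Lambda)$ has a trace-class covariance, since it is Mat\'ern with $\alpha$ large enough for the prior to live on $\mathcal H$. The density identity \eqref{eqn:gtail} is then just \eqref{eq:posterior_phi}, after checking that the normalizing constant $Z(y_{\mathrm{obs}})=\int \exp(R_\Lambda)\,d\gamma$ lies in $(0,1]$. This holds because $R_\Lambda\le 0$ and $R_\Lambda\ge -\tfrac{1}{2\sigma_\eta^2}(|y_{\mathrm{obs}}|+\|\mathcal G_{\mathrm{obs}}\|_\infty)^2$. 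The structural part of Assumption~\ref{assu:gaussian_tail_function_space}, namely the common Cameron--Martin space and the boundedness of $AC^{-1}$ and $A^{-1}C$, is exactly the consistency hypothesis on $C$ and $\Lambda$. It remains to verify twice Fr\'echet differentiability and the finiteness of $M_1$ and $M_2$.

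By the chain rule, with $r(x):=y_{\mathrm{obs}}-\mathcal G_{\mathrm{obs}}(x)\in\mathbb R^m$,
\[
DR_\Lambda(x)[h]=\tfrac{1}{\sigma_\eta^2}\langle r(x),D\mathcal G_{\mathrm{obs}}(x)h\rangle,
\]
\[
D^2R_\Lambda(x)[h,k]=\tfrac{1}{\sigma_\eta^2}\bigl(\langle r(x),D^2\mathcal G_{\mathrm{obs}}(x)[h,k]\rangle-\langle D\mathcal G_{\mathrm{obs}}(x)h,D\mathcal G_{\mathrm{obs}}(x)k\rangle\bigr).
\]
So $R_\Lambda\in C^2$, and using $|r|\le |y_{\mathrm{obs}}|+\|\mathcal G_{\mathrm{obs}}\|_\infty$,
\[
\|\nabla R_\Lambda\|_\infty\le \sigma_\eta^{-2}\bigl(|y_{\mathrm{obs}}|+\|\mathcal G_{\mathrm{obs}}\|_\infty\bigr)\|D\mathcal G_{\mathrm{obs}}\|_\infty ,
\]
\[
\|\nabla^2 R_\Lambda\|_\infty\le \sigma_\eta^{-2}\bigl((|y_{\mathrm{obs}}|+\|\mathcal G_{\mathrm{obs}}\|_\infty)\|D^2\mathcal G_{\mathrm{obs}}\|_\infty+\|D\mathcal G_{\mathrm{obs}}\|_\infty^2\bigr),
\]
both in the $\mathcal H$ operator norms. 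Since $C$ is trace-class, $C^{1/2}$ and $C$ are bounded on $\mathcal H$. Hence
\[
M_1\le\|C^{1/2}\|\,\|\nabla R_\Lambda\|_\infty,\qquad M_2\le \|C\|\,\|\nabla^2R_\Lambda\|_\infty ,
\]
which gives explicit constants depending only on $y_{\mathrm{obs}}$, $\sigma_\eta$ and the $C^2_b$ norms of $\mathcal G_{\mathrm{obs}}$. For general $\Gamma$, one replaces $\sigma_\eta^{-2}$ by $\|\Gamma^{-1}\|$.

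The main subtlety is interpretational rather than computational: how the $C$-weighted norms in \eqref{eq:M1_M2} are read. The gradient $\nabla R_\Lambda$ is the Riesz representer in $\mathcal H$ of $DR_\Lambda$, and the norms are operator norms on $\mathcal H$. The bounds above use exactly this reading. If instead $M_1$ and $M_2$ were meant as norms on the Cameron--Martin space of $C$, I would transfer the $\mathcal H$ bounds using the boundedness of $\Lambda C^{-1}$ and $\Lambda^{-1}C$. I would also note that $D\mathcal G_{\mathrm{obs}}(x)^*$ maps $\mathbb R^m$ into $\mathcal H$, so the gradient has finite rank. This makes the $C$-weighting harmless.
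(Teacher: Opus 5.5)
Your proposal is correct and follows essentially the same route as the paper: you set $A=\Lambda$, read off $R_A$ from the posterior formula, differentiate the data misfit twice by the chain rule, and bound $DR_A$ and $D^2R_A$ using the $C_b^2$ bounds on $\mathcal G_{\mathrm{obs}}$. You then conclude via boundedness of $C^{1/2}$ and $C$ together with the assumed Cameron--Martin consistency of $\Lambda$ and $C$; your check that the normalizing constant lies in $(0,1]$ and your explicit bounds on $M_1,M_2$ are refinements the paper leaves implicit.
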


\begin{proof}
Recall the posterior expression \eqref{eq:posterior_phi}. Taking
\(A=\Lambda\), the posterior density relative to \(\mathcal N(0,A)\) is given
by the likelihood correction \(R_A(\cdot;y_{\mathrm{obs}})\). For arbitrary
perturbations \(\delta x,\delta x'\in\mathcal H\), differentiating the
log-likelihood term gives
\[
\begin{dcases}
DR_A(x;y_{\mathrm{obs}})[\delta x]
=
\left\langle
\Gamma^{-1}
\bigl(y_{\mathrm{obs}}-\mathcal G_{\mathrm{obs}}(x)\bigr),
D\mathcal G_{\mathrm{obs}}(x)[\delta x]
\right\rangle_{\mathbb R^m},\\
\begin{aligned}
D^2R_A(x;y_{\mathrm{obs}})[\delta x,\delta x']
=&
-
\left\langle
D\mathcal G_{\mathrm{obs}}(x)[\delta x],
\Gamma^{-1}
D\mathcal G_{\mathrm{obs}}(x)[\delta x']
\right\rangle_{\mathbb R^m}\\
+&
\left\langle
\Gamma^{-1}
\bigl(y_{\mathrm{obs}}-\mathcal G_{\mathrm{obs}}(x)\bigr),
D^2\mathcal G_{\mathrm{obs}}(x)[\delta x,\delta x']
\right\rangle_{\mathbb R^m}.    
\end{aligned}
\end{dcases}
\]
The boundedness of
\(\mathcal G_{\mathrm{obs}}\), \(D\mathcal G_{\mathrm{obs}}\), and
\(D^2\mathcal G_{\mathrm{obs}}\) therefore implies that \(DR_A\) and
\(D^2R_A\) are uniformly bounded as linear and bilinear forms on
\(\mathcal H\). By the Riesz representation theorem, the corresponding
Hilbert-space gradient \(\nabla R_A\) and Hessian \(\nabla^2R_A\) are
uniformly bounded.
Since \(C\) is a bounded operator, the weighted bounds
$\|C^{1/2}\nabla R_A(\cdot;y_{\mathrm{obs}})\|_\infty<\infty,
~
\|C\nabla^2R_A(\cdot;y_{\mathrm{obs}})\|_\infty<\infty$
follow. 

Finally, the assumed Cameron--Martin consistency between \(\Lambda\) and \(C\)
implies the Cameron--Martin compatibility between \(A\) and \(C\), since
\(A=\Lambda\). Therefore Assumption~\ref{assu:gaussian_tail_function_space}
holds.
\end{proof}

For linear inverse problems, however, the sufficient condition in Theorem~\ref{thm:gaussian_tail_sufficient_condition} generally does not hold. To see this, consider the linear-Gaussian inverse problem 
\begin{equation}\label{eq:linear_obs}
X\sim \mathcal N(0,\Lambda), \quad y=GX+\eta, \quad
\eta\sim\mathcal N(0,\Gamma).    
\end{equation}
As shown in \eqref{eq:posterior_phi}, the posterior can be written relative to the prior as
$    \pi(dx\mid y)
    \propto
    \exp\big(
        -\tfrac12\|Gx-y\|_{\Gamma^{-1}}^2
    \big)
    \mathcal N(0,\Lambda)(dx).$ Thus, if we take $A=\Lambda$, and $R_A(x):=-\frac12\|Gx-y\|_{\Gamma^{-1}}^2$,
then $\nabla R_A(x)
    =
    -G^*\Gamma^{-1}Gx+G^*\Gamma^{-1}y$,
which grows linearly in \(x\) unless \(G^*\Gamma^{-1}G=0\). This shows that the linear case requires a separate condition ensuring
Gaussian tail assumption. We present such a condition in the following Theorem \ref{thm:linear_source}.
\begin{theorem}[Linear-Gaussian auxiliary source]
\label{thm:linear_source}
Consider the linear-Gaussian inverse problem \eqref{eq:linear_obs}. Suppose
$    \Lambda=(-\Delta+\kappa^2I)^{-\alpha},
    ~\alpha>d,$
and suppose the source covariance \(C\) is consistent with the prior covariance
\(\Lambda\). Then there exists a Gaussian reference
\(\mathcal N(0,A)\), with
$    A:=\bigl(\Lambda^{-1}+G^*\Gamma^{-1}G\bigr)^{-1},
$ such that the posterior satisfies
Assumption~\ref{assu:gaussian_tail_function_space}.
\end{theorem}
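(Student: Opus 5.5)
Since the posterior of the linear-Gaussian problem \eqref{eq:linear_obs} is itself Gaussian, the proof strategy is to absorb the entire quadratic part of the likelihood into the reference measure, so that the residual $R_A$ is affine and its gradient does not grow.

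\textbf{Step 1: identify the posterior and $R_A$.} Completing the square in $-\tfrac12\|Gx-y\|_{\Gamma^{-1}}^2-\tfrac12\|\Lambda^{-1/2}x\|^2$ shows that $\pi(\cdot\mid y)=\mathcal N(m_y,A)$ with $A=(\Lambda^{-1}+G^*\Gamma^{-1}G)^{-1}$ and $m_y=AG^*\Gamma^{-1}y$. The Feldman--H\'ajek theorem (or a direct Cameron--Martin shift argument) then gives $\pi(\cdot\mid y)\ll\mathcal N(0,A)$. The density has the form \eqref{eqn:gtail} with the affine function
\[
R_A(x;y)=\langle A^{-1}m_y,x\rangle=\langle G^*\Gamma^{-1}y,x\rangle .
\]
This expression is well defined on $\mathcal H$, since $G^*\Gamma^{-1}y$ lies in the range of the finite-rank operator $G^*$ and $G$ has finite-dimensional range $\mathbb R^m$. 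Consequently $\nabla R_A\equiv G^*\Gamma^{-1}y$ is constant and $\nabla^2R_A\equiv0$. This gives $M_2=0$ and $M_1=\|C^{1/2}G^*\Gamma^{-1}y\|<\infty$, using that $C$ is bounded and $G^*$ is bounded from $\mathbb R^m$. The same differentiation used in the proof of Theorem~\ref{thm:gaussian_tail_sufficient_condition} verifies these formulas.

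\textbf{Step 2: $A$ is trace-class, self-adjoint and nonnegative.} Because $G^*\Gamma^{-1}G\ge0$, we have $\Lambda^{-1}+G^*\Gamma^{-1}G\ge\Lambda^{-1}$, and operator monotonicity of inversion gives $0\le A\le\Lambda$. Hence $\operatorname{tr}A\le\operatorname{tr}\Lambda$. This trace is finite because the eigenvalues of $\Lambda$ behave like $(|k|^2+\kappa^2)^{-\alpha}$, which is summable over $\mathbb Z^d$ when $2\alpha>d$; the hypothesis $\alpha>d$ is more than enough.

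\textbf{Step 3 (main obstacle): Cameron--Martin compatibility of $A$ and $C$.} First I compare $A$ with $\Lambda$. The inequality $A\le\Lambda$ gives $\operatorname{Ran}A^{1/2}\subseteq\operatorname{Ran}\Lambda^{1/2}$. For the reverse inclusion, write $A^{-1}=\Lambda^{-1/2}(I+K)\Lambda^{-1/2}$ with
\[
K=\Lambda^{1/2}G^*\Gamma^{-1}G\Lambda^{1/2}.
\]
The operator $K$ is finite rank, bounded and nonnegative, provided $G\Lambda^{1/2}$ is bounded; this holds since $G$ is bounded on $\mathcal H$ and $\Lambda^{1/2}$ is bounded. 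It follows that $A=\Lambda^{1/2}(I+K)^{-1}\Lambda^{1/2}$, where $(I+K)^{-1}$ is bounded with bounded inverse $I+K$. Therefore $A$ and $\Lambda$ have the same Cameron--Martin space, and $A\Lambda^{-1}$ and $\Lambda A^{-1}$ extend to bounded operators on it, being conjugate to $(I+K)^{\mp1}$.

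Second, the hypothesis that $C$ is consistent with $\Lambda$ supplies the same equivalence between $\Lambda$ and $C$. The factorisations
\[
AC^{-1}=(A\Lambda^{-1})(\Lambda C^{-1}),\qquad A^{-1}C=(A^{-1}\Lambda)(\Lambda^{-1}C)
\]
then show that both comparison operators are bounded on the common Cameron--Martin space.

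The delicate point is making these operator identities rigorous on unbounded inverses: one has to work on the common Cameron--Martin space and check that composition preserves the dense domains. I would do this by expressing everything through the bounded operators $(I+K)^{\pm1}$ and the equivalence operators given by the hypothesis on $C$.
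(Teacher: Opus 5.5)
You have the same architecture as the paper. You absorb the quadratic part of the likelihood into $A$, so that $R_A$ is affine with $\nabla^2 R_A=0$. You write $\Lambda^{-1/2}A\Lambda^{-1/2}=(I+K)^{-1}$ with $K=L_\Lambda^*L_\Lambda$ and $L_\Lambda=\Gamma^{-1/2}G\Lambda^{1/2}$. You then factor $AC^{-1}$ and $A^{-1}C$ through $\Lambda$. Your check $0\le A\le\Lambda$ for trace class and your Feldman--H\'ajek step are useful additions that the paper leaves implicit.

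\textbf{The gap.} It is the sentence ``$G$ is bounded on $\mathcal H$''. The theorem is meant for point observations $Gx=(x(q_1),\ldots,x(q_m))$ on $\mathcal H=L^2(\Omega)$; the paper's proof says so explicitly, and that is why the Mat\'ern hypothesis $\alpha>d$ is there. Point evaluation is not defined on $L^2$, let alone bounded. Consequently $G^*\Gamma^{-1}y=\sum_j(\Gamma^{-1}y)_j\delta_{q_j}$ is a combination of Dirac masses, not an element of $\mathcal H$. This leaves several of your claims unsupported:
\begin{itemize}
\item that $R_A(x;y)=\langle G^*\Gamma^{-1}y,x\rangle$ is well defined on all of $\mathcal H$;
\item that $M_1<\infty$ ``because $C$ is bounded and $G^*$ is bounded'';
\item that $K$ is bounded ``because $G$ and $\Lambda^{1/2}$ are bounded''.
\end{itemize}
Your remark that $\alpha>d$ is ``more than enough'' for trace class is the symptom. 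Trace class needs only $\alpha>d/2$, so in your argument the hypothesis does no real work, whereas it is exactly the ingredient the proof needs.

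\textbf{The missing idea.} You need to use the smoothing of the prior covariance. The paper does this via Sobolev embedding. $G$ is bounded on $H^s(\Omega)$ for $s>d/2$, and one can choose $d/2<s<\alpha-d/2$ precisely because $\alpha>d$. This gives boundedness of $G\Lambda^{1/2}$, and hence of $L_\Lambda$ and of the finite-rank $K$. More directly, $\Lambda^{1/2}=(-\Delta+\kappa^2I)^{-\alpha/2}$ maps $L^2$ boundedly into $H^{\alpha}\hookrightarrow C^0(\bar\Omega)$.

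You need the same for $C$ to get $M_1<\infty$. By the assumed Cameron--Martin equivalence, $\operatorname{Ran}C^{1/2}=\operatorname{Ran}\Lambda^{1/2}$, so Douglas' lemma gives $C^{1/2}=\Lambda^{1/2}T$ with $T$ bounded. Hence $GC^{1/2}$ is bounded, and $C^{1/2}\nabla R_A=(GC^{1/2})^*\Gamma^{-1}y\in\mathcal H$.

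Likewise, $R_A$ should be read as $x\mapsto\langle\Gamma^{-1}y,Gx\rangle_{\mathbb R^m}$, defined either on the full-measure set of continuous fields or as a Paley--Wiener functional. It is not an inner product on all of $\mathcal H$.

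Once $G\Lambda^{1/2}$ and $GC^{1/2}$ are shown bounded this way, the rest of your Step~3 goes through as written. The real analytic difficulty is this unboundedness of $G$, not the bookkeeping with unbounded inverses that you flagged.
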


\begin{proof}
We first verify the Cameron--Martin compatibility between \(A\) and \(C\).
To this end, we compare \(A\) with the prior covariance \(\Lambda\). Let
$    L_\Lambda:=\Gamma^{-1/2}G\Lambda^{1/2}.$
For point observations
$    Gx=(x(q_1),\ldots,x(q_m)),$
Sobolev embedding \(H^s(\Omega)\hookrightarrow C^0(\bar\Omega)\) gives
\[
    \|Gx\|_{\mathbb R^m}
    \le \sqrt m\,\|x\|_{L^\infty}
    \lesssim \|x\|_{H^s},
    \quad s>d/2 .
\]
Thus \(G\) is bounded on \(H^s(\Omega)\) for any \(s>d/2\). Since
$    \Lambda=(-\Delta+\kappa^2I)^{-\alpha},
~\alpha>d$, prior samples satisfy $X\in H^s(\Omega)$ a.s. for every $s<\alpha-d/2$. so that the point observations are well defined.
Hence, choosing \(s\) with \(d/2<s<\alpha-d/2\), the composition
\(G\Lambda^{1/2}\) is bounded. Since \(\Gamma^{-1/2}\) is bounded on the
finite-dimensional observation space, \(L_\Lambda\) is bounded.
Then, in the \(\Lambda\)-Cameron--Martin coordinates,
$    \Lambda^{-1/2}A\Lambda^{-1/2}
    =
    (I+L_\Lambda^*L_\Lambda)^{-1}.$
Since \(L_\Lambda\) is bounded, \(I+L_\Lambda^*L_\Lambda\) and
\((I+L_\Lambda^*L_\Lambda)^{-1}\) are boundedly invertible. Therefore \(A\)
and \(\Lambda\) have the same Cameron--Martin space with equivalent norms.

By the source-consistency assumption, \(\Lambda\) and \(C\) also have the same
Cameron--Martin space with equivalent norms, and \(\Lambda C^{-1}\) and
\(\Lambda^{-1}C\) are bounded on this common space. Hence \(A\) and \(C\) have
the same Cameron--Martin space with equivalent norms. Furthermore,
$    AC^{-1}
    =
    A\Lambda^{-1}\Lambda C^{-1},
$ and $
    A^{-1}C
    =
    A^{-1}\Lambda\Lambda^{-1}C$
are bounded on the same common Cameron--Martin space. Thus the
Cameron--Martin compatibility required in
Assumption~\ref{assu:gaussian_tail_function_space} holds. 

It remains to verify the density form and the weighted derivative bounds. With \(A:=(\Lambda^{-1}+G^*\Gamma^{-1}G)^{-1}\), we have
\[
-\tfrac12\langle x,\Lambda^{-1}x\rangle
    -\tfrac12\|Gx-y\|_{\Gamma^{-1}}^2 
=
    -\tfrac12
    \left\langle
        x,
        A^{-1}x
    \right\rangle
    +
    \langle x,G^*\Gamma^{-1}y\rangle
    + \mathrm{const}.
\]
Hence the quadratic term is absorbed into the Gaussian reference
\(\mathcal N(0,A)\), and the remaining correction is
$    R_A(x;y)=\langle x,G^*\Gamma^{-1}y\rangle+\mathrm{const}.$
Consequently,
$    \nabla R_A(x;y)=G^*\Gamma^{-1}y,
~
    \nabla^2 R_A(x;y)=0.$
Thus the weighted derivative bounds in
Assumption~\ref{assu:gaussian_tail_function_space} hold:
$    \|C^{1/2}\nabla R_A(\cdot;y)\|_\infty<\infty$,
    $\|C\nabla^2 R_A(\cdot;y)\|_\infty=0.$
\end{proof}
We now state the regularity result under the Gaussian tail assumption \ref{assu:gaussian_tail_function_space}, which is valid either \(\mathcal G_{\mathrm{obs}}\in C_b^2\), as in
Theorem~\ref{thm:gaussian_tail_sufficient_condition}, or the linear-Gaussian
point-observation setting with
\(\Lambda=(-\Delta+\kappa^2I)^{-\alpha}\), \(\alpha>d\), as in
Theorem~\ref{thm:linear_source}. 
For readability, the derivation is written in
finite-dimensional Galerkin notation. The estimates are expressed in the
\(C\)-weighted geometry, with constants independent of the Galerkin dimension
under the compatibility assumptions above.

\begin{theorem}[Resolution-robust regularity of the one-step transport]
\label{thm:one_step_transport_lipschitz}
Under Assumption~\ref{assu:gaussian_tail_function_space}, there exist finite
constants \(\Theta_0,\Theta_1\), depending only on the comparability
constants of \(A\) and \(C\), such that,
the averaged velocity \(w(\cdot,r,t;y_{\mathrm{obs}}):=
    \frac{1}{t-r}
    \int_r^t v(z,\tau;y_{\mathrm{obs}})\,\mathrm d\tau,
~
    0\le r<t\le 1\) is globally Lipschitz \(C\)-geometry and satisfies
\[
    \operatorname{Lip}
    \left(
        w(\cdot,r,t;y_{\mathrm{obs}})
    \right)
    \le
    L_v(y_{\mathrm{obs}}),\quad\mathrm{where}~L_v(y_{\mathrm{obs}})
:=
\Theta_0
+
\Theta_1\cdot(M_2(y_{\mathrm{obs}})
+
M_1(y_{\mathrm{obs}})^2).
\]
In particular, the one-step transport
$\mathcal T(z;y_{\mathrm{obs}})
:=
z-w(z,0,1;y_{\mathrm{obs}})$
is globally Lipschitz \(C\)-geometry and satisfies
\[
    \operatorname{Lip}
    \left(
        \mathcal T(\cdot;y_{\mathrm{obs}})
    \right)
    \le
    1+L_v(y_{\mathrm{obs}}).
\]
\end{theorem}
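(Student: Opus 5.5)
Since $w(\cdot,r,t;y_{\mathrm{obs}})$ is a time average of $v(\cdot,\tau;y_{\mathrm{obs}})$, the plan is to show that $\sup_{\tau\in(0,1)}\operatorname{Lip}(v(\cdot,\tau;y_{\mathrm{obs}}))\le L_v(y_{\mathrm{obs}})$ in the $C$-geometry, with $\Theta_0,\Theta_1$ independent of the Galerkin dimension. The bound for $w$ then follows from $D_z w=\frac1{t-r}\int_r^t D_z v\,\mathrm d\tau$. The bound for $\mathcal T$ follows from the triangle inequality $\operatorname{Lip}(\mathcal T)\le 1+\operatorname{Lip}(w(\cdot,0,1))$. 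So everything reduces to a uniform-in-$\tau$ bound on $D_z v$.

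To get that bound, I will work in Galerkin coordinates and start from the identity $v(z,t)=\frac1t\bigl(z-\mathbb E[X\mid Z_t=z,y_{\mathrm{obs}}]\bigr)$. The conditional law of $X$ given $Z_t=z$ has density proportional to
\[
\exp\bigl(R_A(x)\bigr)\,\exp\Bigl(-\tfrac12\langle x,A^{-1}x\rangle-\tfrac{1}{2t^2}\|C^{-1/2}(z-(1-t)x)\|^2\Bigr).
\]
Differentiating in $z$ gives the covariance formula
\[
D_z\mathbb E[X\mid Z_t=z]=\tfrac{1-t}{t^2}\operatorname{Cov}(X\mid Z_t=z)\,C^{-1}.
\]
Hence
\[
D_zv=\tfrac1t I-\tfrac{1-t}{t^3}\,\Sigma_t(z)C^{-1},\qquad \Sigma_t(z):=\operatorname{Cov}(X\mid Z_t=z).
\]
For the pure Gaussian reference ($R_A=0$), $\Sigma_t$ equals the explicit operator $G_t:=\bigl(A^{-1}+\tfrac{(1-t)^2}{t^2}C^{-1}\bigr)^{-1}$. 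In that case the $\frac1t$ and $\frac{1}{t^3}$ singularities cancel exactly, because $\tfrac1t\bigl(I-\tfrac{1-t}{t^2}G_tC^{-1}\bigr)$ is expressible through $(t^2A^{-1}C+(1-t)^2)^{-1}$, and similarly near $t=1$. Conjugating by $C^{1/2}$, this Gaussian part is bounded uniformly in $t$ by a constant depending only on $\|AC^{-1}\|$ and $\|A^{-1}C\|$ on the Cameron--Martin space. That constant gives $\Theta_0$.

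For the non-Gaussian correction, I view the conditional law as a tilt of $\mathcal N(m_t(z),G_t)$ by $e^{R_A}$. I will then control $C^{-1/2}(\Sigma_t-G_t)C^{-1/2}$ using Gaussian integration by parts (Stein/Brascamp--Lieb-type identities). Writing $\operatorname{Cov}$ under the tilted measure in terms of the Gaussian covariance, one obtains a bound of the form
\[
\|C^{-1/2}(\Sigma_t-G_t)C^{-1/2}\|\lesssim \|C^{-1/2}G_tC^{-1/2}\|^2\bigl(\|C\nabla^2R_A\|_\infty+\|C^{1/2}\nabla R_A\|_\infty^2\bigr).
\]
The factor $G_t$ appears twice, once from each integration by parts, and the prefactor $\frac{1-t}{t^3}$ must be absorbed by these factors. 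Since $G_t\lesssim \frac{t^2}{(1-t)^2}C$ in the $C$-geometry, $G_t^2$ contributes $t^4/(1-t)^2$. This kills $t^{-3}$ near $t=0$. Near $t=1$ I will instead use $G_t\le A\lesssim C$, so that the factor $(1-t)$ helps. Together these give the $\Theta_1(M_2+M_1^2)$ term.

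The main obstacle is this last step: proving the covariance-perturbation bound uniformly in $t$ with the correct powers of $t$, and doing so without log-concavity. I would first try the exact second-order Stein identity
\[
\Sigma_t=G_t+G_t\,\mathbb E[\nabla^2R_A]\,G_t+G_t\operatorname{Cov}(\nabla R_A)\,G_t
\]
under the tilted measure. This identity follows from two Gaussian integrations by parts. It bounds the correction directly by $M_2+M_1^2$ after conjugation by $C^{\pm1/2}$ and use of the comparability of $A$ and $C$, so no convexity is required.
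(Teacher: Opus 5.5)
Your proposal is correct and is essentially the paper's argument. Your $G_t$ is the paper's $\Sigma_t=A-(1-t)^2A\widehat B_t^{-1}A$ (by Woodbury), and since $K_t=\frac{1-t}{t^2}G_tC^{-1}$, your second-order Stein identity gives exactly the paper's correction $\frac{t}{1-t}CK_t^*\bigl[\mathbb E_{\nu_z}\nabla^2R_A+\operatorname{Cov}_{\nu_z}(\nabla R_A)\bigr]K_t$, while your Gaussian part equals the paper's $-\frac{1}{1-t}(I-tC\widehat B_t^{-1})$. The paper obtains that correction by differentiating $\log Q_t$, where $z$ enters only through the shift $K_tz$, so no integration by parts is needed; it then bounds the time weights by functional calculus in $D=C^{-1/2}AC^{-1/2}$ rather than your two-regime split. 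In that split, the small-$t$ bound should read $\|C^{-1/2}G_tC^{-1/2}\|^2\le t^4/(1-t)^4$, not $t^4/(1-t)^2$; this is harmless, since multiplying by $\frac{1-t}{t^3}$ still gives $O(t)$ as $t\to 0$.
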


\begin{proof}
We first derive a bound for the marginal instantaneous velocity
\(v(\cdot,t;y_{\mathrm{obs}})\). 
Define
$\widehat B_t:=(1-t)^2A+t^2C$,
$K_t:=(1-t)A\widehat B_t^{-1}$,
$\Sigma_t:=A-(1-t)^2A\widehat B_t^{-1}A$.    
By Gaussian conditioning, the density of \(Z_t\) can be written as $p_t(z\mid y_{\mathrm{obs}})
\propto
\varphi_{\widehat B_t}(z)Q_t(z;y_{\mathrm{obs}}),$
where \(\varphi_{\widehat B_t}\) denotes the density of the Gaussian \(\mathcal N(0,\widehat B_t)\) and 
$Q_t(z;y_{\mathrm{obs}})
:=
\mathbb E_{\zeta_t\sim\mathcal N(0,\Sigma_t)}
\left[
    \exp\bigl(
        R_A(K_tz+\zeta_t;y_{\mathrm{obs}})
    \bigr)
\right].$
Hence
$S_t(z;y_{\mathrm{obs}})
=
-\widehat B_t^{-1}z
+
\nabla_z\log Q_t(z;y_{\mathrm{obs}}).$
Define the modified \(C\)-score
\begin{equation}
\label{eq:modified_score}
    \widetilde S_t(z;y_{\mathrm{obs}})
    :=
    C S_t(z;y_{\mathrm{obs}})
    +
    C\widehat B_t^{-1}z
    =
    C\nabla_z\log Q_t(z;y_{\mathrm{obs}}).
\end{equation}
Substituting this decomposition into \eqref{eq:velocity_score_identity} gives
\begin{equation}
\label{eq:v_modified_score}
    v(z,t;y_{\mathrm{obs}})
    =
    -\frac1{1-t}
    \big[
        \big(I-tC\widehat B_t^{-1}\big)z
        +
        t\widetilde S_t(z;y_{\mathrm{obs}})
    \big].
\end{equation}

Define the tilted probability measure \(\nu_z\) by
$    d\nu_z(\zeta)
    =
    \tfrac{
        \exp\bigl(R_A(K_tz+\zeta;y_{\mathrm{obs}})\bigr)
    }{
        Q_t(z;y_{\mathrm{obs}})
    }
    d\mathcal N(0,\Sigma_t)(\zeta).$
Then by \eqref{eq:modified_score},
$\widetilde S_t(z;y_{\mathrm{obs}})
=
CK_t^*
\mathbb E_{\nu_z}
\left[
    \nabla R_A(K_tz+\zeta;y_{\mathrm{obs}})
\right].$
Differentiating once more gives
\[
\nabla \widetilde S_t(z;y_{\mathrm{obs}})
=
CK_t^*
\left[
    \mathbb E_{\nu_z}
    \nabla^2R_A(K_tz+\zeta;y_{\mathrm{obs}})
    +
    \operatorname{Cov}_{\nu_z}
    \left(
        \nabla R_A(K_tz+\zeta;y_{\mathrm{obs}})
    \right)
\right]
K_t .
\]
Using the definitions of \(M_1(y_{\mathrm{obs}})\) and
\(M_2(y_{\mathrm{obs}})\) from \eqref{eq:M1_M2}, we have
$    \|
    C
    \mathbb E_{\nu_z}
    \nabla^2R_A(K_tz+\zeta;y_{\mathrm{obs}})
    \|
    \le
    M_2(y_{\mathrm{obs}}),
$ and $
    \|
    C^{1/2}
    \operatorname{Cov}_{\nu_z}
    (
        \nabla R_A(K_tz+\zeta;y_{\mathrm{obs}})
    )
    C^{1/2}
    \|
    \le
    M_1(y_{\mathrm{obs}})^2 .$
Therefore, \eqref{eq:v_modified_score} gives 
\[
    \operatorname{Lip}
    (
        v(\cdot,t;y_{\mathrm{obs}})
    )
    \le
    \Big\|
    \frac{1}{1-t}
    (
        I-tC\widehat B_t^{-1}
    )
    \Big\|_C 
    +
    \frac{t}{1-t}\cdot\chi_t\cdot\big(
    M_2(y_{\mathrm{obs}})
    +
    M_1(y_{\mathrm{obs}})^2\big)
\]
with $\chi_t
:=\|CK_t^*C^{-1}\|_C\|K_t\|_C=
\|C^{1/2}K_t^*C^{-1/2}\|
\|C^{-1/2}K_tC^{1/2}\|$. 

Set
$    D:=C^{-1/2}AC^{-1/2}$. We can write $C^{-1/2}(C\widehat B_t^{-1})C^{1/2}
    =
    ((1-t)^2D+t^2I)^{-1}
$ and $
    C^{-1/2}K_tC^{1/2}
    =
    (1-t)D((1-t)^2D+t^2I)^{-1}.$
 Since by compatibility, \(D\) is
bounded, positive, and boundedly invertible, the functional calculus gives
finite constants \(\Theta_0,\Theta_1\), depending only on the
comparability constants of \(A\) and \(C\), such that, for all \(t\in(0,1)\),
$    \|
    \frac{1}{1-t}
    (
        I-tC\widehat B_t^{-1}
    )
    \|_C
    \le
    \Theta_0
$, $
    \frac{t}{1-t}\cdot\chi_t
    \le
    \Theta_1
$.
Consequently,
\[
    \operatorname{Lip}
    \left(
        v(\cdot,t;y_{\mathrm{obs}})
    \right)
    \le
    \Theta_0
    +
    \Theta_1\cdot(M_2(y_{\mathrm{obs}})
    +
    M_1(y_{\mathrm{obs}})^2)
    =
    L_v(y_{\mathrm{obs}}).
\]

The averaged-velocity bound follows directly:
\[
\begin{aligned}
    \|
    w(z_1;r,t;y_{\mathrm{obs}})
    -
    w(z_2;r,t;y_{\mathrm{obs}})
    \|_C
    \le&
    \frac{1}{t-r}
    \int_r^t
    \|
    v(z_1,\tau;y_{\mathrm{obs}})
    -
    v(z_2,\tau;y_{\mathrm{obs}})
    \|_C
    d\tau  \\
    \le&
    L_v(y_{\mathrm{obs}})
    \|z_1-z_2\|_C.    
\end{aligned}
\]

Finally, the definition $\mathcal T(z;y_{\mathrm{obs}})=z-w(z,0,1;y_{\mathrm{obs}})$
implies
\begin{equation*}
\operatorname{Lip}
\left(
    \mathcal T(\cdot;y_{\mathrm{obs}})
\right)
\le
1+
\operatorname{Lip}
\left(
    w(\cdot,0,1;y_{\mathrm{obs}})
\right)
\le
1+L_v(y_{\mathrm{obs}}).    
\end{equation*}
\end{proof}
This theorem implies that when the source is consistent with the Matérn-type prior under the required regularity conditions, the averaged velocity and the resulting one-step transport remain uniformly Lipschitz under grid refinement. This supports the use of prior-aligned sources in function-space Bayesian inverse problems.




\section{Experiments}\label{sec:experiments}
In this section, we present numerical experiments to evaluate the performance of the proposed approach in solving Bayesian inverse problems. We first validate the choice of kernels in our model on a synthetic linear inverse problem in Section \ref{subsec:identity}. Then we turn to Bayesian inverse problems arising from various PDE models, including the  Darcy flow, Advection, Reaction--Diffusion, and Navier--Stokes in Section \ref{subsec:pde}.
\subsection{Validation on the linear inverse problem on choice of source kernels}\label{subsec:identity}
We first consider a synthetic linear inverse problem as a validation of our claim on source consistency in Theorem \ref{thm:one_step_transport_lipschitz}. The unknown field is drawn from the Gaussian prior
\(x\sim\mathcal N(0,(-\Delta+3^2I)^{-2.5})\). We consider a linear observation model, so that \eqref{eq:obs_model} becomes
\begin{equation}\label{eq:identity_forward}
\begin{aligned}
y_\mathrm{obs}=\mathcal{O}_\mathrm{rand}(x)+\eta,\quad \mathcal{O}_\mathrm{rand}: x \mapsto (x(q))_{q\in \mathcal{M}},\quad \eta\sim\mathcal{N}(0,0.1^2I).
\end{aligned}    
\end{equation}
where \(\mathcal{M}\) denotes the set of sensor locations. Here, we use noisy observations at

\begin{figure}[htbp!]
    \centering

    \subfigure[$x$]{
        \includegraphics[width=0.18\linewidth]{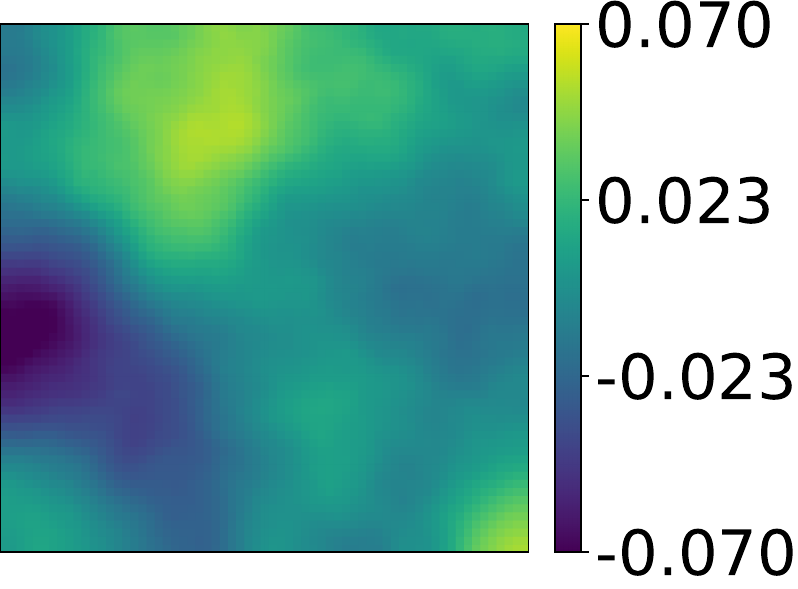}
        \label{fig:identity_x}
    }%
    \subfigure[sensors]{
        \includegraphics[width=0.18\linewidth]{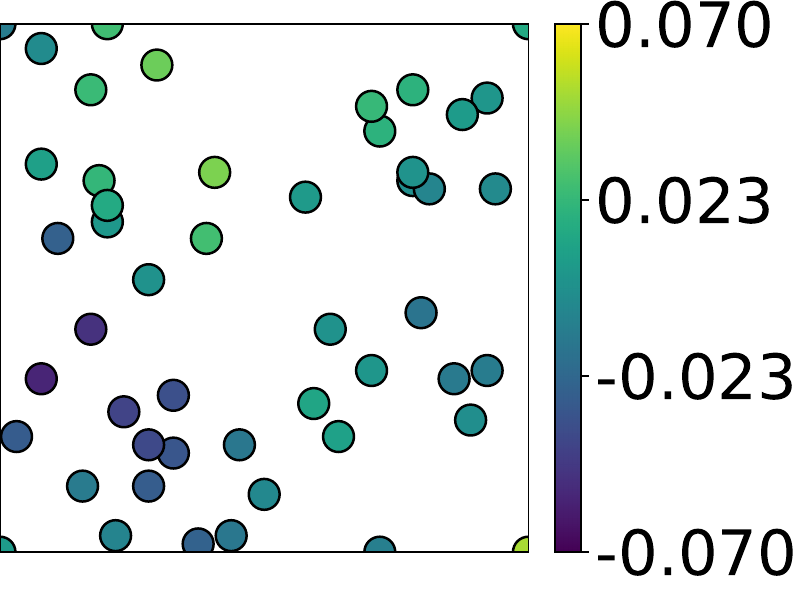}
        \label{fig:identity_Gobs_sensors}
    }%
    \subfigure[$\mu_{\mathrm{ref}}$]{
        \includegraphics[width=0.18\linewidth]{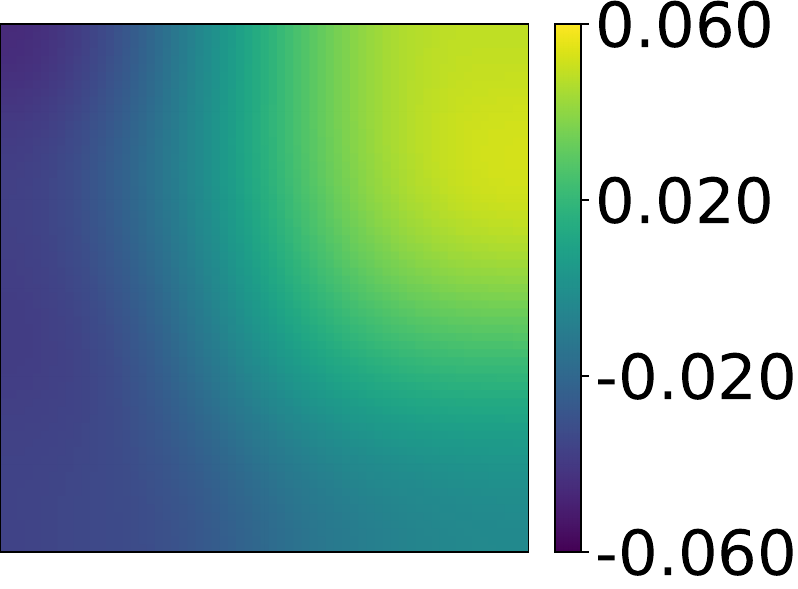}
        \label{fig:identity_mu_ref}
    }%
    \subfigure[$\mu_{\mathrm{pred}}$]{
        \includegraphics[width=0.18\linewidth]{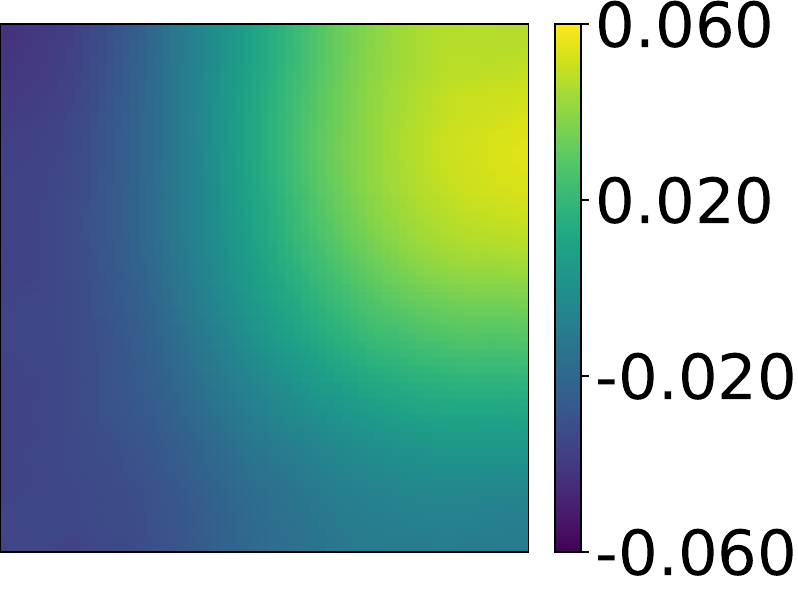}
        \label{fig:identity_mu_pred}
    }%
    \subfigure[$\mu_{\mathrm{pred}}-\mu_{\mathrm{ref}}$]{
        \includegraphics[width=0.18\linewidth]{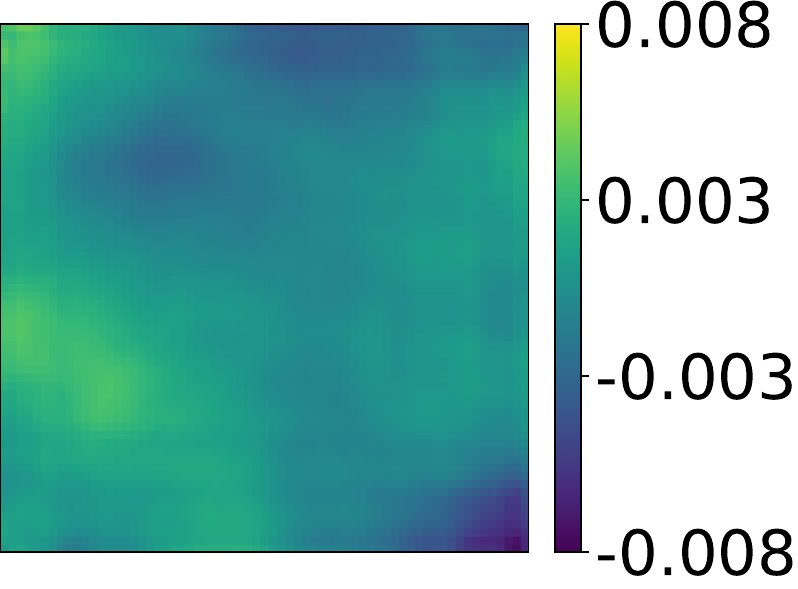}
        \label{fig:identity_mu_diff}
    }
    \subfigure[Posterior samples]{
            \centering
            \includegraphics[width=0.18\linewidth]{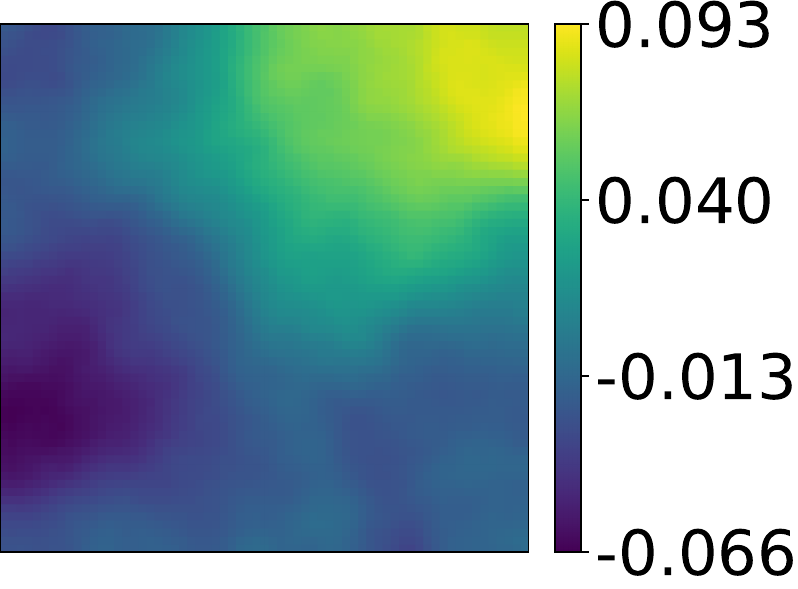}\hspace{2.5mm}
            \includegraphics[width=0.18\linewidth]{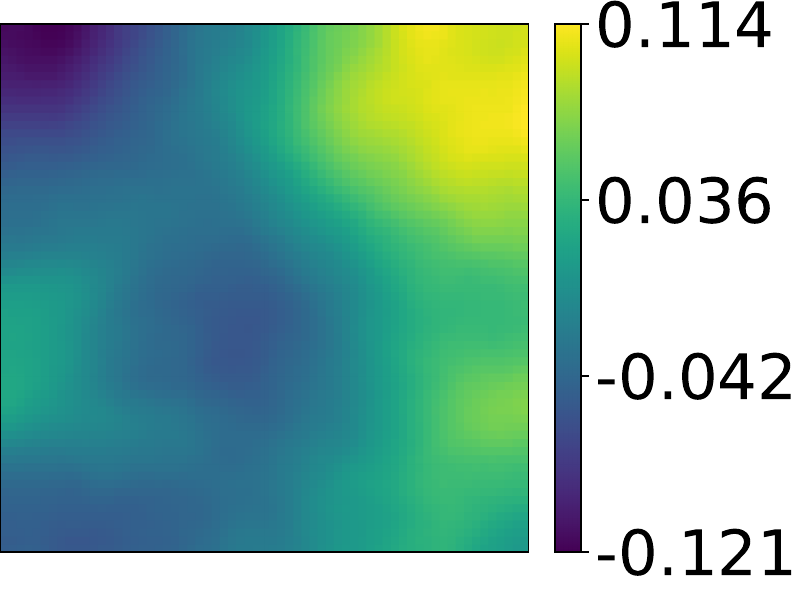}\hspace{2.5mm}
            \includegraphics[width=0.18\linewidth]{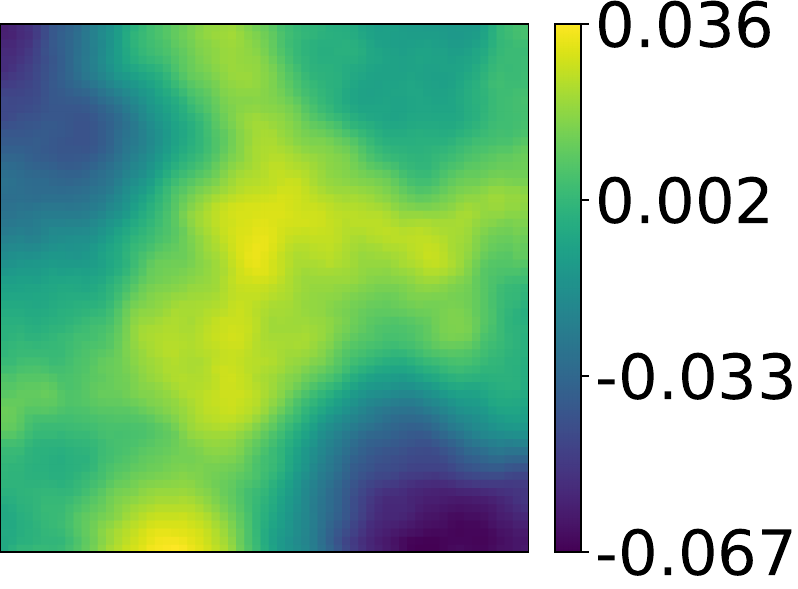}\hspace{2.5mm}
            \includegraphics[width=0.18\linewidth]{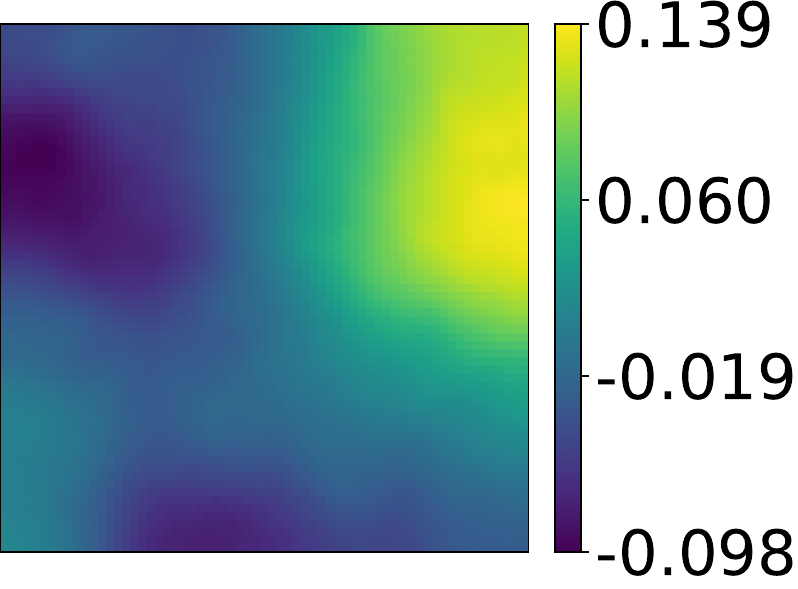}\hspace{2.5mm}
            \includegraphics[width=0.18\linewidth]{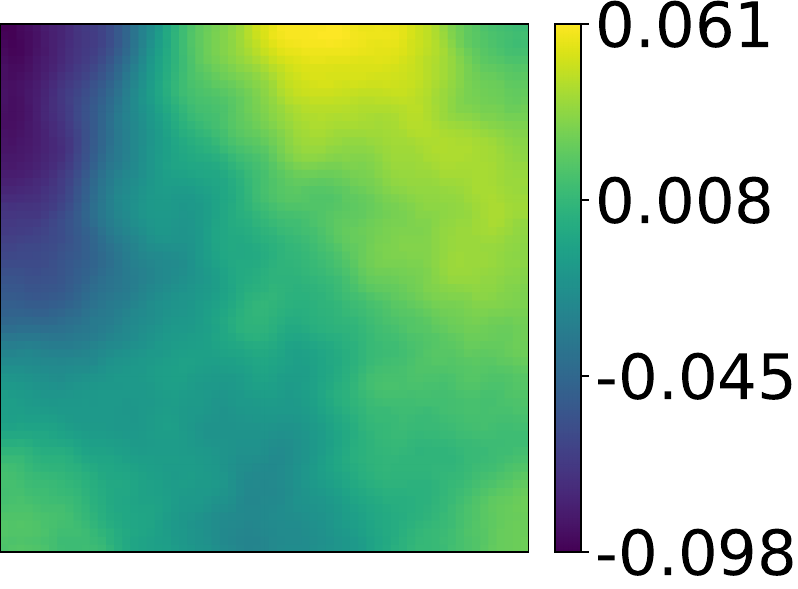}
        \label{fig:identity_samples}
    }

    \caption{Linear inverse problem. The first row shows the input field, sparse observations, reference posterior mean, predicted posterior mean, and mean error. The second row shows representative posterior samples generated by the proposed sampler.}
    \label{fig:identity_results}
\end{figure}
Figure~\ref{fig:choice_of_kernel} shows the energy spectra of the posterior for this linear inverse problem under different choices of source distribution kernels. More specifically, for any test observation $y_{\mathrm{obs}}$, the learned sampler induces an approximate posterior. Specifically, we sample $\xi^{(i)}\sim\rho$ i.i.d.\ and generate $N_\mathrm{eval}=10^3$ posterior samples
$x_{\mathrm{pred}}^{(i)}=\mathcal T_{\hat\theta}(\xi^{(i)};y_{\mathrm{obs}})$, $i=1,\ldots,N_{\mathrm{eval}}$, using Algorithm~\ref{alg:mf_sampling_encode_only}. We further let \(\hat x^{(i)}_{k_1,k_2}\) denotes its spectral coefficient at the two-dimensional frequency mode \((k_1,k_2)\), where \(k_1\) and \(k_2\) are the frequency indices in the two spatial directions. The radial frequency \(k\) groups all modes in the shell \(\mathcal S_k:=\{(k_1,k_2):\, k-\tfrac12\le
\sqrt{k_1^2+k_2^2}<k+\tfrac12\}\). We compute $E(k):=\frac{1}{|\mathcal S_k|}\sum_{(k_1,k_2)\in \mathcal S_k}\big(\frac{1} {N_\mathrm{eval}}\sum_{i=1}^{N_\mathrm{eval}}|\hat x^{(i)}_{k_1,k_2}|^2\big)$,
i.e., the average squared spectral energy over posterior samples and over all modes in the radial shell \(\mathcal S_k\). 

Here we compare four different source distributions, where \(\rho_C\) and \(\rho_1\) are consistent with the prior, while \(\rho_W\) and \(\rho_2\) are inconsistent:
\begin{equation}
\begin{dcases}
\rho_C=\mathcal N(0,(-\Delta+3^2I)^{-2.5})\\
\rho_W=\mathcal N(0,I) \\
\rho_1=\mathcal N(0,(-\Delta+I)^{-2.5})\\
\rho_2=\mathcal N\big(0,e^{-\frac{\|q-q'\|^2}{2\cdot 0.2^2}}\big).
\end{dcases}
\end{equation}
Specifically, $\rho_C$ is the natural choice, since the source distribution coincides with the Matérn-type anisotropic prior $\gamma$. $\alpha=\tfrac52$ is chosen from Theorem \ref{thm:linear_source}. The source $\rho_1$ is still prior-consistent in the sense that it has the same Mat\'ern smoothness order, and hence the same function-space regularity,
but with a mismatched length-scale parameter. The energy spectra show that the posteriors transported from the consistent sources, \(\rho_C\) and \(\rho_1\), closely match the closed-form reference posterior.

In contrast, $\rho_2$ is an RBF-type Gaussian source~\cite{seeger2004gaussian} with covariance
kernel $C_{\mathrm{RBF}}(q,q')=\exp\big(-\tfrac{\|q-q'\|^2}{2\cdot 0.2^2}\big).$ The corresponding spectral density decays exponentially,
$\widehat C_{\mathrm{RBF}}(k)
\propto
\exp\big(-\tfrac{0.2^2 |k|^2}{2}\big)$,
whereas the Matérn prior covariance $(-\Delta+3^2I)^{-2.5}$ has eigenvalues with polynomial decay,
$\widehat C_{\mathrm{Mat\acute{e}rn}}(k)
\propto
(|k|^2+3^2)^{-2.5}.$
Thus, although the RBF source defines a smoothing covariance operator, it imposes a substantially smoother geometry than the Mat\'ern prior. This over-smoothing makes \(\rho_2\) poorly aligned with the posterior geometry and, in our experiments, leads to the largest deviations from the exact closed-form posterior, especially in the high-frequency coefficients.

The white-noise source $\rho_W=\mathcal N(0,I)$ represents the opposite type of mismatch. Its covariance has no spectral decay and, therefore, places too much relative weight on high-frequency modes. Consequently, \(\rho_W\) is also not geometrically aligned with the posterior and thus also induces severe deviations.

\begin{figure}[htbp!]
    \centering
    \includegraphics[width=0.85\linewidth]{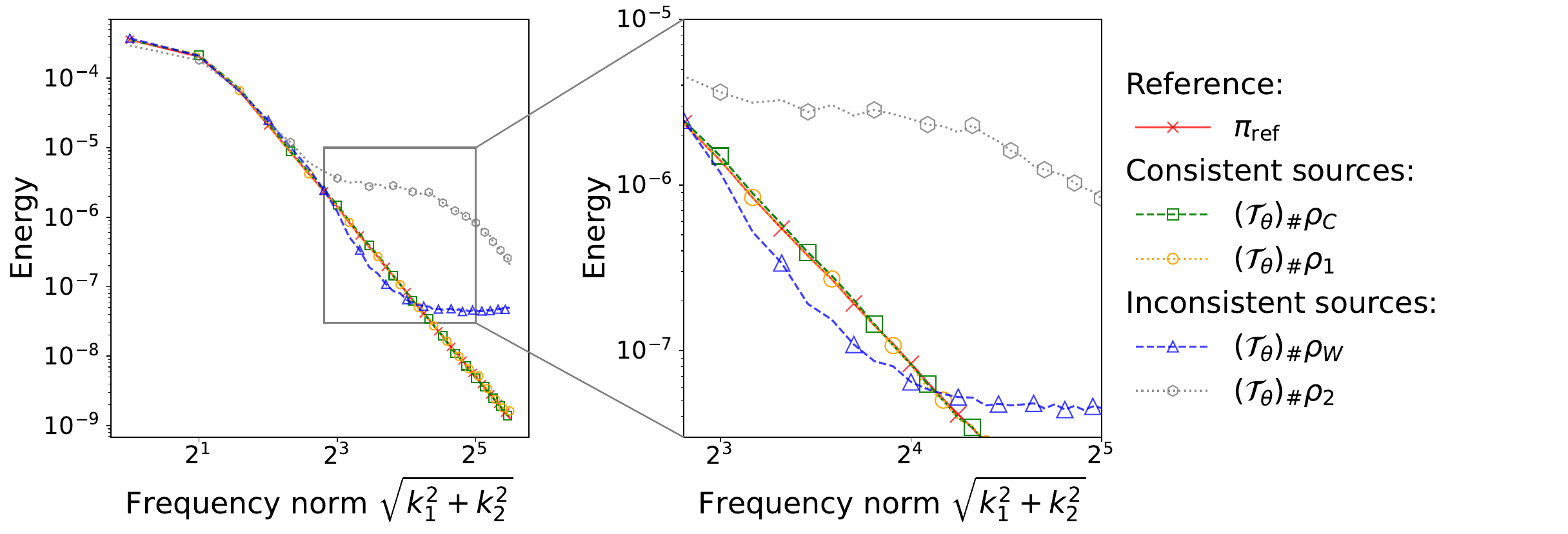}
    \caption{Energy spectra \(E(k)\) of the posterior distributions for the \(65\times65\) linear inverse problem under different choices of source kernels. }
    \label{fig:choice_of_kernel}
\end{figure}
After the qualitative spectral diagnostics, we now
define quantitative metrics for evaluating posterior quality. We use errors in two
standard posterior statistics: the posterior mean and the pointwise posterior
standard deviation. These metrics quantify the reconstructed field and its
uncertainty.

\begin{table}[!ht]
\centering
\caption{Effect of the source measure on posterior approximation for the linear inverse problem.}
\label{tab:benchmark_summary_identity}
\setlength{\tabcolsep}{6pt} 
\begin{tabular}{c c c }
\toprule
Source & $\epsilon_\mu^\mathrm{eval}$ & $\epsilon_\sigma^\mathrm{eval}$\\
\midrule
   $\rho_C$ &   $4.94\times10^{-2}$ & $2.72\times 10^{-2}$\\
   $\rho_W$   &  $8.24\times10^{-2}$ &$4.13\times10^{-2}$ \\
$\rho_{1}$ &   $6.28\times10^{-2}$&  $2.49\times10^{-2}$ \\
$\rho_{2}$ & $8.15\times10^{-2}$  &  $8.33\times10^{-2}$ \\
\bottomrule
\end{tabular}
\end{table}
For $N_\mathrm{eval}=10^3$ posterior samples
$x_{\mathrm{pred}}^{(i)}$ for the observation $y_\mathrm{obs}$, we estimate the posterior mean field $\mu_{\mathrm{pred}}$ and the (pointwise) standard deviation field $\sigma_{\mathrm{pred}}$ by:
 \vspace{-0.5em}
\begin{equation}
\mu_{\mathrm{pred}}
= \frac{1}{N_\mathrm{eval}}\sum_{i=1}^{N_\mathrm{eval}} x^{(i)}_{\mathrm{pred}},
\quad
\sigma_{\mathrm{pred}}
= \sqrt{\frac{1}{N_\mathrm{eval}-1}\sum_{i=1}^{N_\mathrm{eval}}(x^{(i)}_{\mathrm{pred}}-\mu_{\mathrm{pred}})^2},
\label{eq_std}
\end{equation}
where operations are taken pointwise over the spatial grid.

The reference posterior statistics \(\mu_{\mathrm{ref}}\) and
\(\sigma_{\mathrm{ref}}\) are computed in closed form. With a slight abuse of notation, we write the observation model after discretization as
$y_{\mathrm{obs}}
=
Gx+\eta,
~
\eta\sim\mathcal N(0,\sigma_\eta^2 I_m),$
where \(G\) is the matrix representation of the subsampling operator. Since the
prior is Gaussian, \(x\sim\mathcal N(0,C)\), the posterior is also Gaussian and
satisfies
\begin{equation}\label{eq:identity_closed_form_posterior}
\mu_{x\mid y_{\mathrm{obs}}}
=
CG^\top
(GCG^\top+\sigma_\eta^2 I_m)^{-1}
y_{\mathrm{obs}},
\quad
\Sigma_{x\mid y_{\mathrm{obs}}}
=
C
-
CG^\top
(GCG^\top+\sigma_\eta^2 I_m)^{-1}
GC .
\end{equation}
Thus, we take
$\mu_{\mathrm{ref}}
:=
\mu_{x\mid y_{\mathrm{obs}}}$ and
$\sigma_{\mathrm{ref}}
:=
\sqrt{\operatorname{diag}
(\Sigma_{x\mid y_{\mathrm{obs}}})}.$

Next, we quantify discrepancies using relative $L^2$ errors for both mean and uncertainty:
\begin{equation}\label{eq:posterior_statistics}
\epsilon_\mu^\mathrm{eval}
=
\frac{\| \mu_{\mathrm{pred}} - \mu_{\mathrm{ref}} \|_{L^2}}{\| \mu_{\mathrm{ref}} \|_{L^2}},\quad \epsilon_\sigma^\mathrm{eval}
=
\frac{\| \sigma_{\mathrm{pred}} - \sigma_{\mathrm{ref}} \|_{L^2}}{\| \sigma_{\mathrm{ref}} \|_{L^2}}.
\end{equation}
Table~\ref{tab:benchmark_summary_identity} reports these statistics and shows
that exact agreement between the source covariance and the prior covariance is
not necessary, whereas consistency with the prior geometry is crucial. Here,
the anisotropic source refers to the prior-aligned source
\(\rho_C=\mathcal N(0,C)\) with \(C=\Lambda\). In addition,
Figure~\ref{fig:identity_results}\subref{fig:identity_mu_ref}--\subref{fig:identity_mu_diff}
visualizes these statistics for the anisotropic source, while Figure~\ref{fig:identity_results}\subref{fig:identity_samples} shows some
posterior samples generated from the same source.

\subsection{PDE inverse problems}
\label{subsec:pde}
In this subsection, we evaluate the model on four PDE inverse problems. We first
describe the forward observation models.
\begin{enumerate}[leftmargin=*, label=(\arabic*),nosep]
    \item \underline{Darcy flow.}
We infer the positive permeability $a:\Omega\to\mathbb{R}^+$ on $\Omega=[0,1]^2$ from noisy observations of the hydraulic
head $u$ satisfying the steady Darcy equation
\vspace{-0.5em}
\begin{equation}
\begin{cases}
-\nabla\cdot\bigl(a\nabla u\bigr)=f  &\text{in }\Omega,\\
u=0  &\text{on }\partial\Omega.
\end{cases}
\label{eq:darcy_pde}
\end{equation}
with forward map $\mathcal G(\log a)=u$ and $f$ is fixed as a three-layer piecewise constant function taking values \(1000,2000,3000\) on \([0,1]\times[0,4/6]\), \([0,1]\times(4/6,5/6]\), and \([0,1]\times(5/6,1]\), respectively.

To enforce positivity, we place a Gaussian prior on $\log a$:
$\log a\sim\mathcal N_\mathrm{Neumann}(0,(-\Delta+3^2I)^{-2})$. See Figure~\ref{fig:darcy_results}\subref{fig:darcy_x}.
We then discretize \eqref{eq:darcy_pde} by a finite-volume TPFA scheme, yielding a sparse SPD linear system, in the $65\times65$ grid. Observations follow $y_{\mathrm{obs}}=\mathcal{O}_\mathrm{int}(u)+\eta$ where $\eta\sim\mathcal{N}(0,I_m)$. $\mathcal{O}_\mathrm{int}$ denotes an interior equidistant subsampling operator that avoids boundary points, producing $7\times7$ measurements
($m=49$) See Figure~\ref{fig:darcy_results}\subref{fig:darcy_Gobs_sensors}.
\end{enumerate}
\begin{figure}[htbp!]
    \centering

    \subfigure[$\log a$]{
        \includegraphics[width=0.18\linewidth]{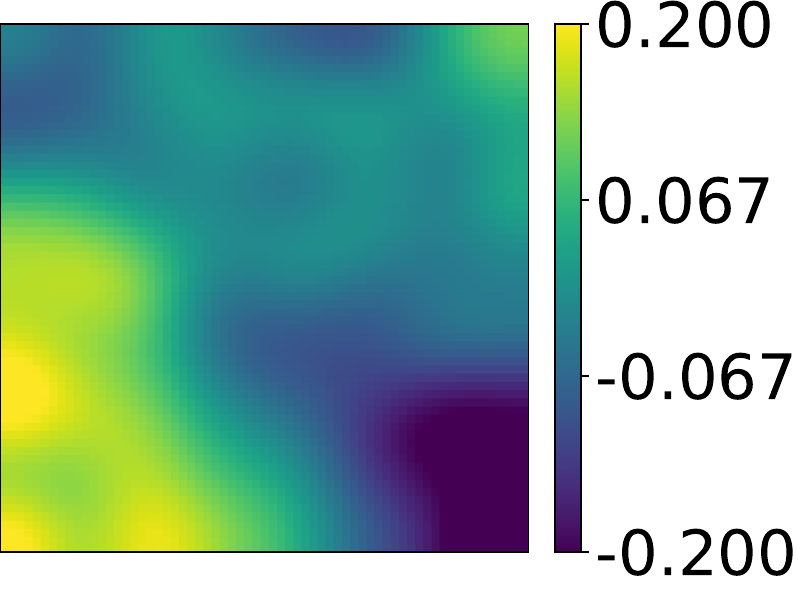}
        \label{fig:darcy_x}
    }%
    \subfigure[sensors]{
        \includegraphics[width=0.18\linewidth]{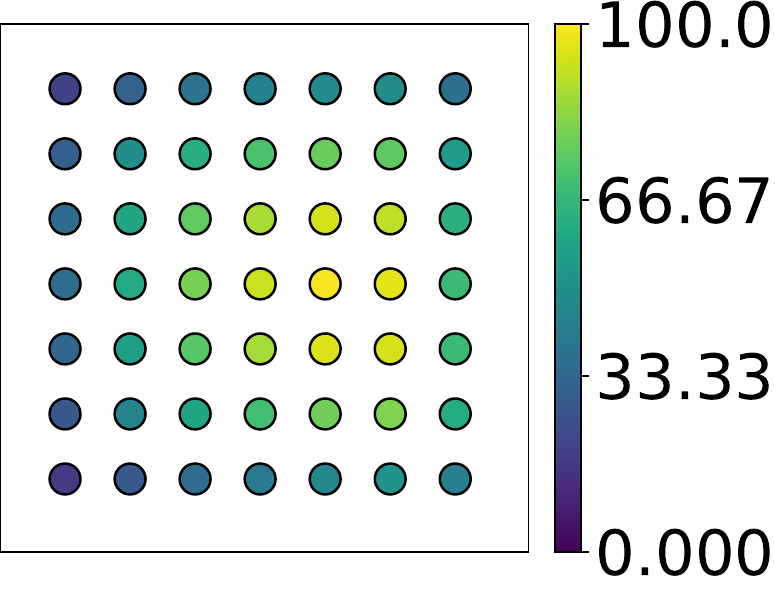}
        \label{fig:darcy_Gobs_sensors}
    }%
    \subfigure[$\mu_{\mathrm{ref}}$]{
        \includegraphics[width=0.18\linewidth]{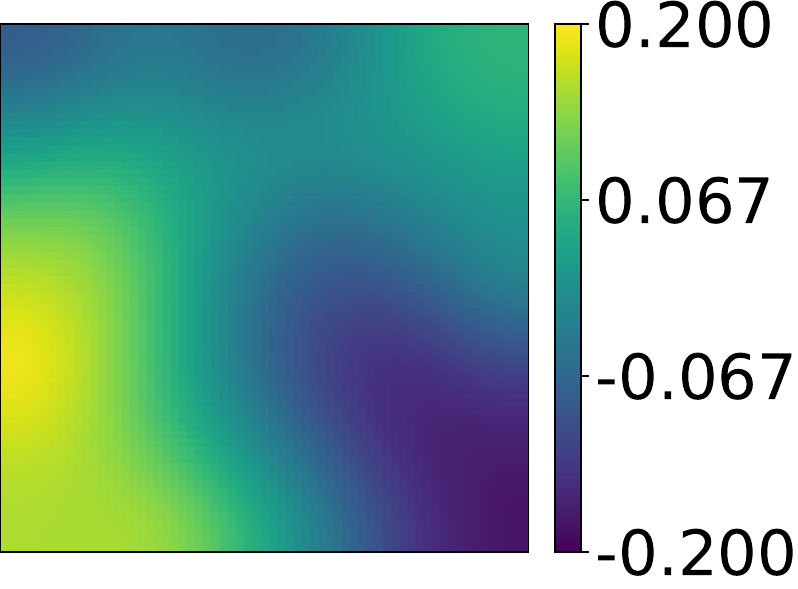}
        \label{fig:darcy_mu_ref}
    }%
    \subfigure[$\mu_{\mathrm{pred}}$]{
        \includegraphics[width=0.18\linewidth]{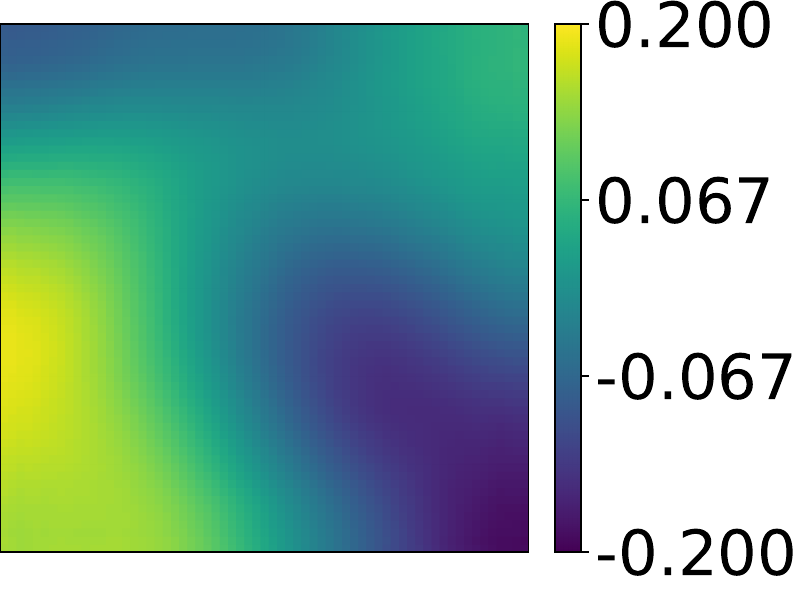}
        \label{fig:darcy_mu_pred}
    }%
    \subfigure[$\mu_{\mathrm{pred}}-\mu_{\mathrm{ref}}$]{
        \includegraphics[width=0.18\linewidth]{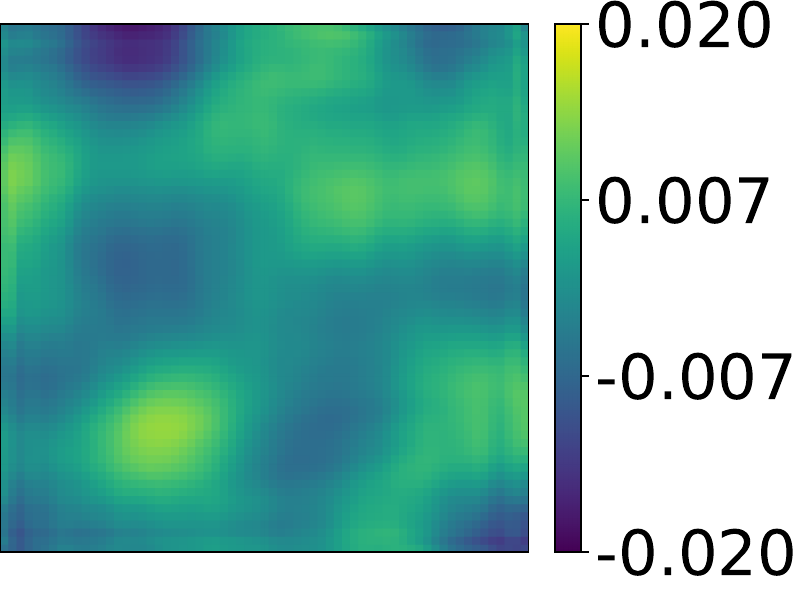}
        \label{fig:darcy_mu_diff}
    }
    \subfigure[Posterior samples]{
            \centering
            \includegraphics[width=0.18\linewidth]{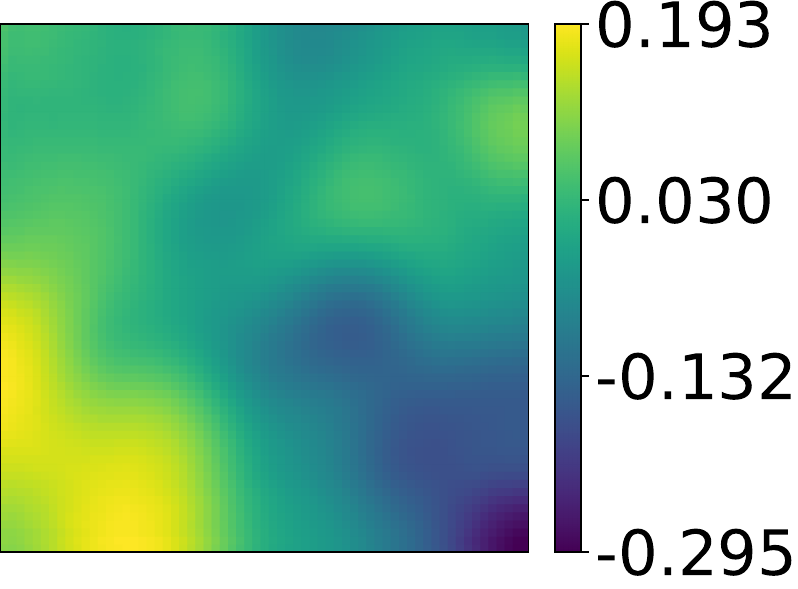}\hspace{2.5mm}
            \includegraphics[width=0.18\linewidth]{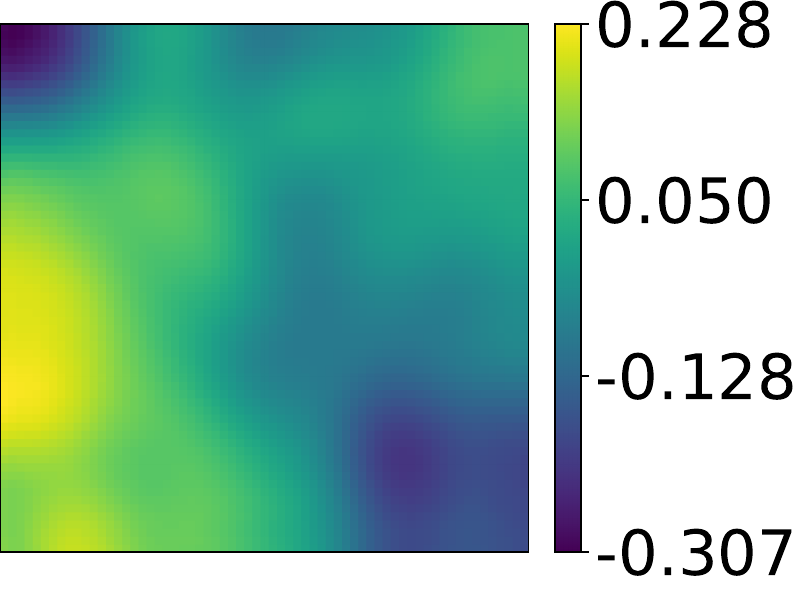}\hspace{2.5mm}
            \includegraphics[width=0.18\linewidth]{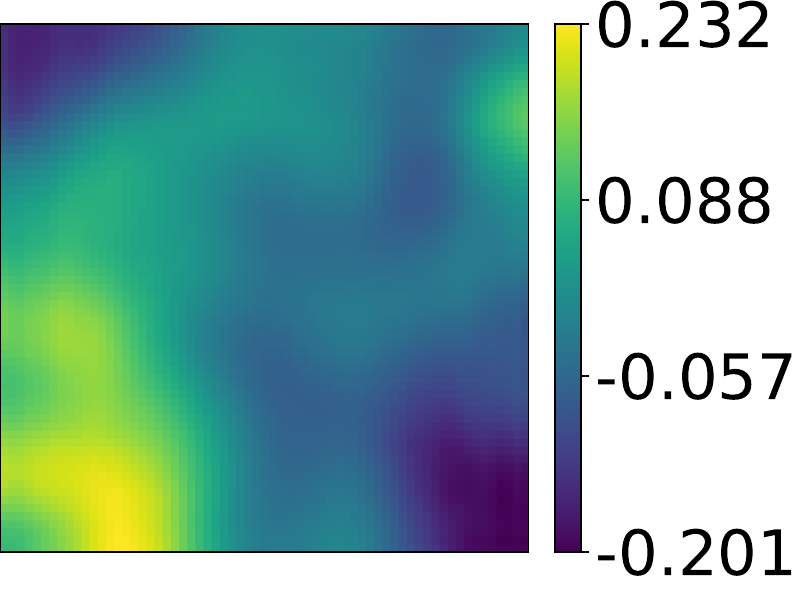}\hspace{2.5mm}
            \includegraphics[width=0.18\linewidth]{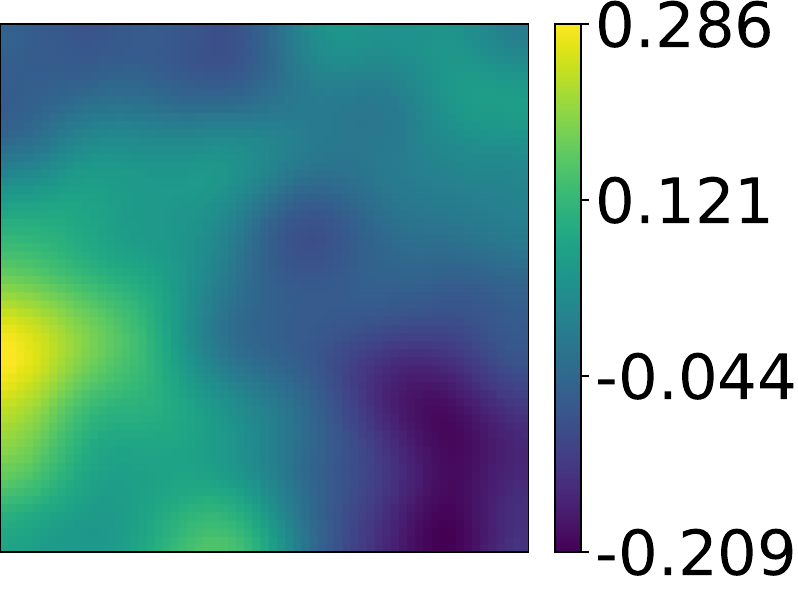}\hspace{2.5mm}
            \includegraphics[width=0.18\linewidth]{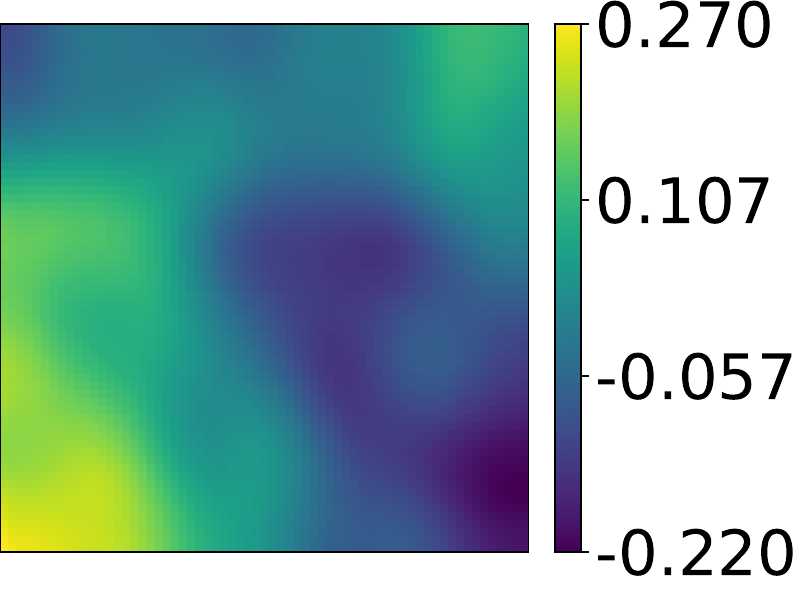}
        \label{fig:darcy_samples}
    }

    \caption{Darcy inverse problem. The first row shows the unknown field, sparse observations, reference posterior mean, predicted posterior mean, and mean error. The second row shows representative posterior samples generated by the proposed sampler.}
    \label{fig:darcy_results}
\end{figure}
 In the following fluid equations (Advection, RDE, NSE), we consider a periodic domain $\Omega=[0,1)^2$ and time variable $\tau>0$. The unknown field is drawn from the periodic Gaussian prior
$\mathcal N_{\mathrm{per}}(0,(-\Delta+3^2I)^{-2}).$ All experiments use a $64\times64$ grid. The observation operator $\mathcal O_{\mathrm{per}}$ uniformly subsamples the full periodic grid (without removing boundary points) at $8\times8$ locations ($m=64$). The standard deviation of the observation noise $\eta$ is chosen to be adapted to the output function value, specified individually.
 \begin{figure}[htbp!]
    \centering

    \subfigure[$\log u_0$]{
        \includegraphics[width=0.18\linewidth]{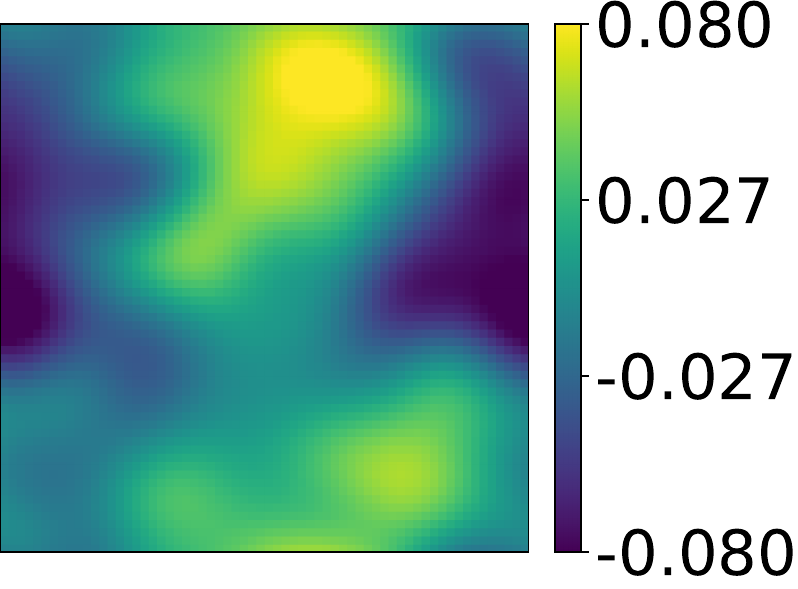}
        \label{fig:advection_x}
    }%
    \subfigure[sensors]{
        \includegraphics[width=0.18\linewidth]{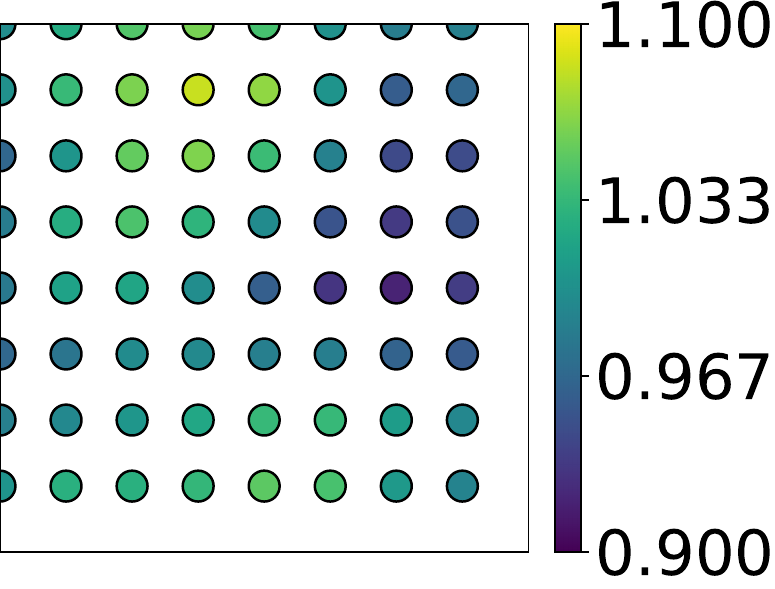}
        \label{fig:advection_Gobs_sensors}
    }%
    \subfigure[$\mu_{\mathrm{ref}}$]{
        \includegraphics[width=0.18\linewidth]{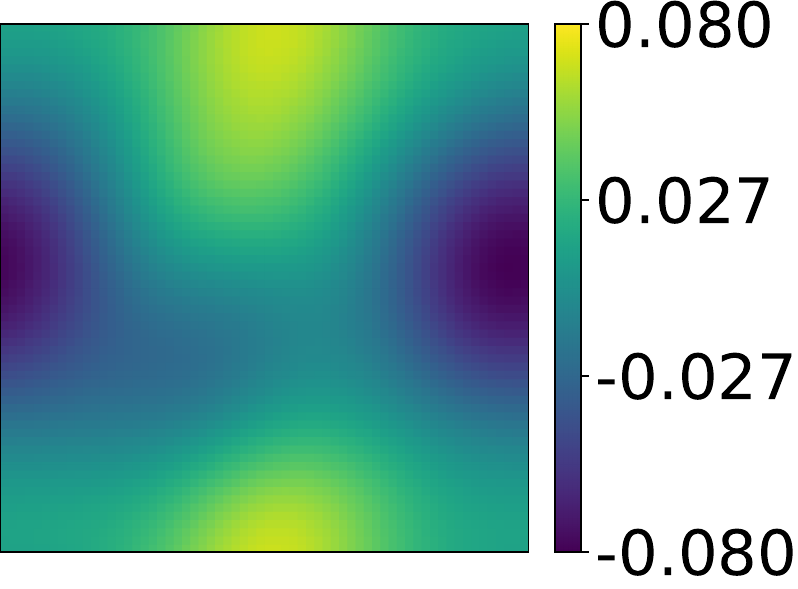}
        \label{fig:advection_mu_ref}
    }%
    \subfigure[$\mu_{\mathrm{pred}}$]{
        \includegraphics[width=0.18\linewidth]{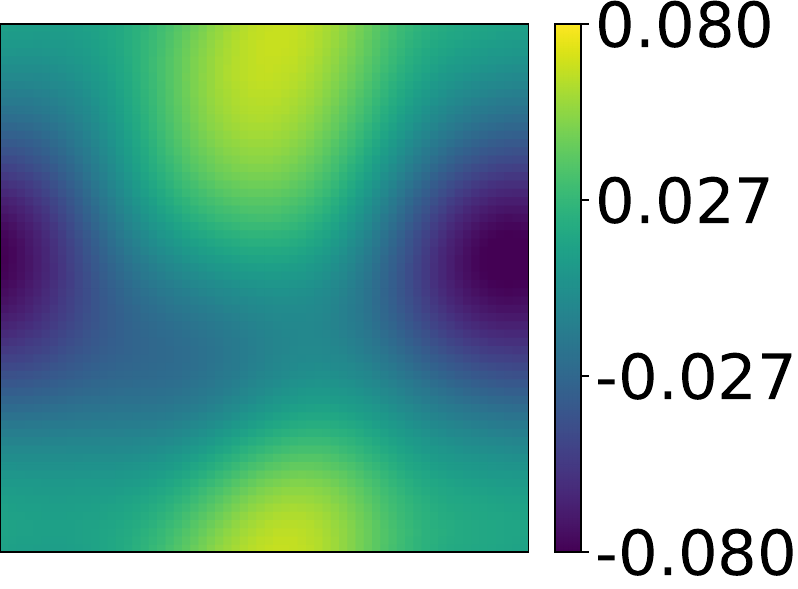}
        \label{fig:advection_mu_pred}
    }%
    \subfigure[$\mu_{\mathrm{pred}}-\mu_{\mathrm{ref}}$]{
        \includegraphics[width=0.18\linewidth]{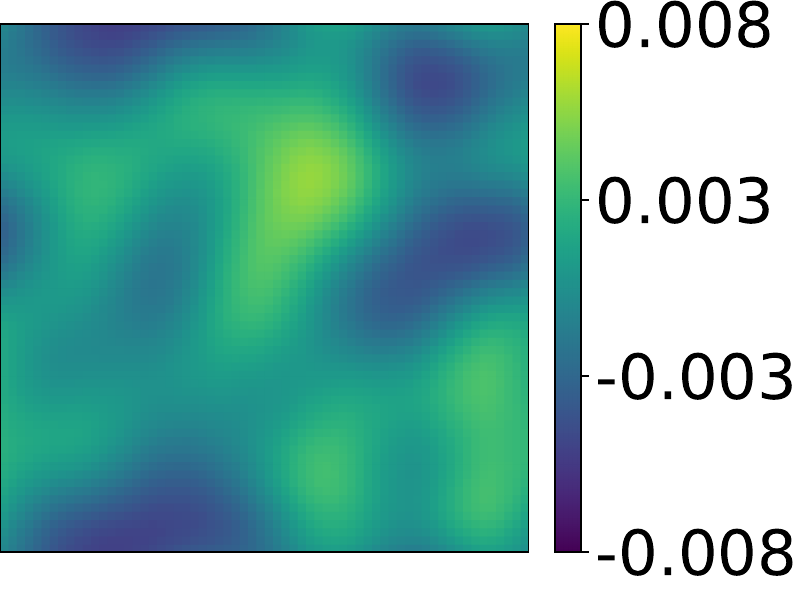}
        \label{fig:advection_mu_diff}
    }
    \subfigure[Posterior samples]{
            \centering
            \includegraphics[width=0.18\linewidth]{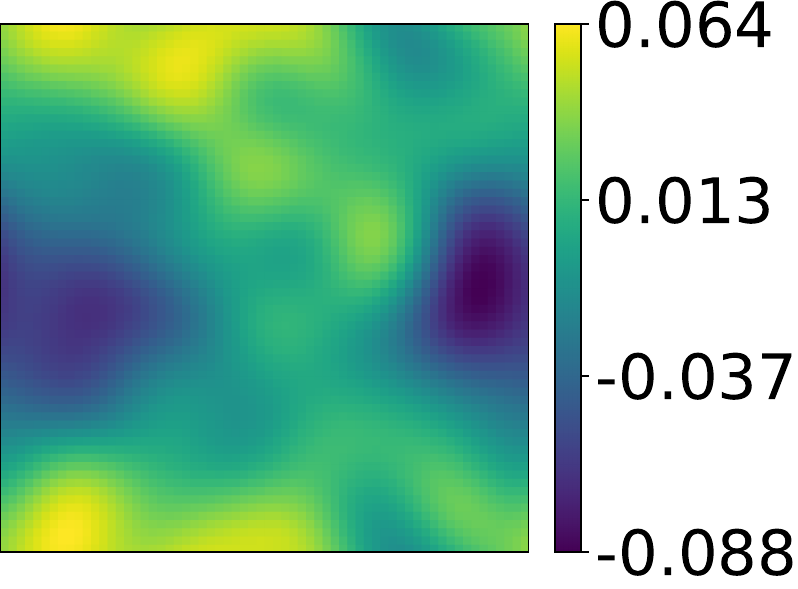}\hspace{2.5mm}
            \includegraphics[width=0.18\linewidth]{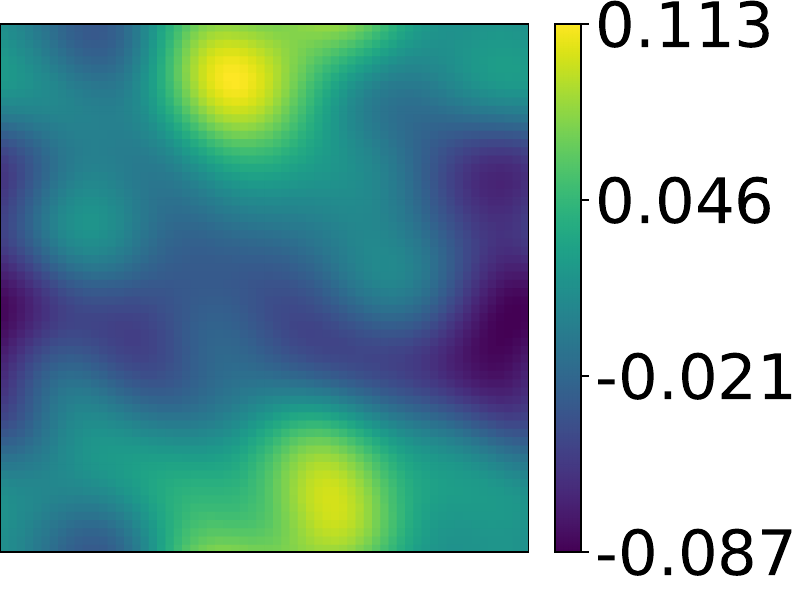}\hspace{2.5mm}
            \includegraphics[width=0.18\linewidth]{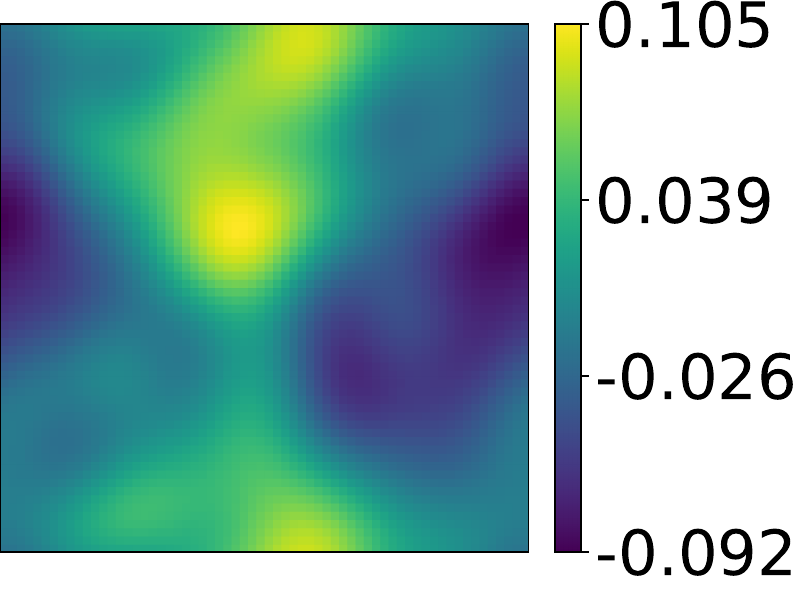}\hspace{2.5mm}
            \includegraphics[width=0.18\linewidth]{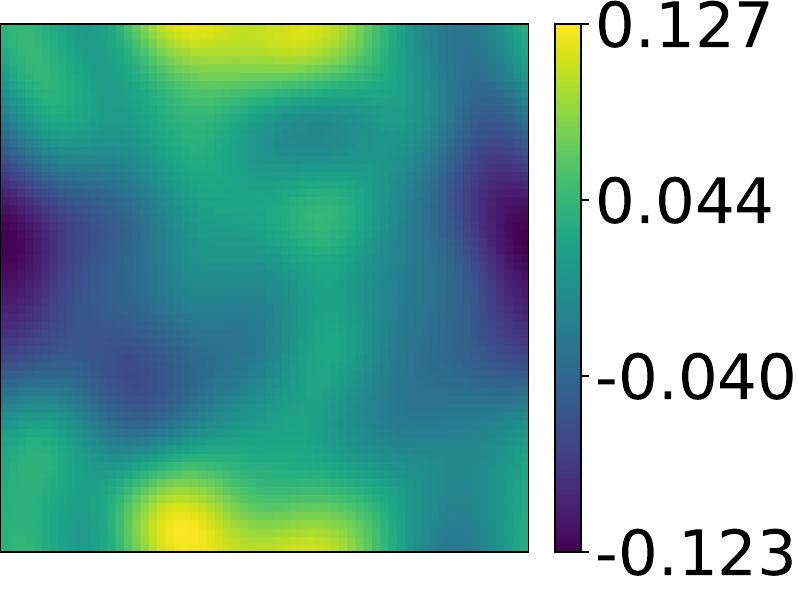}\hspace{2.5mm}
            \includegraphics[width=0.18\linewidth]{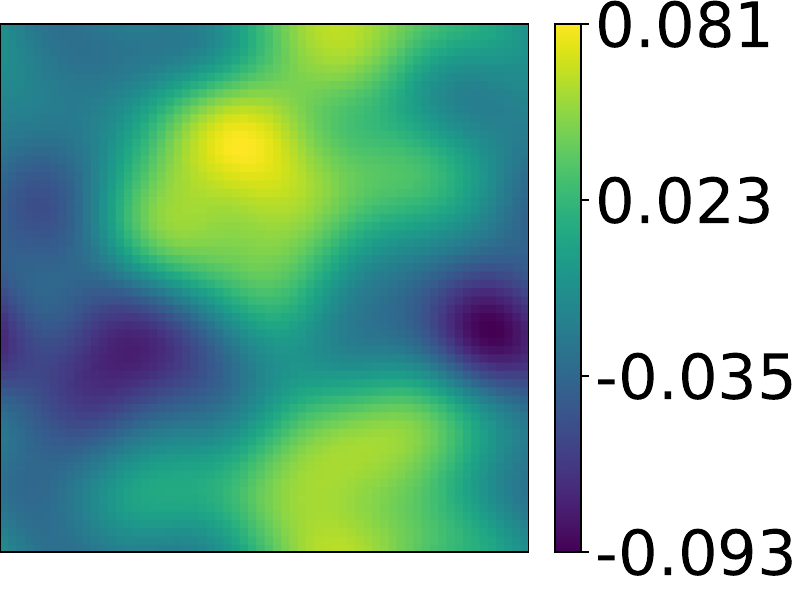}
        \label{fig:advection_samples}
    }

    \caption{Advection inverse problem. The first row shows the unknown field, sparse observations, reference posterior mean, predicted posterior mean, and mean error. The second row shows representative posterior samples generated by the proposed sampler.}
    \label{fig:advection_results}
\end{figure}
\begin{figure}[ht!]
    \centering

    \subfigure[$u_0$]{
        \includegraphics[width=0.18\linewidth]{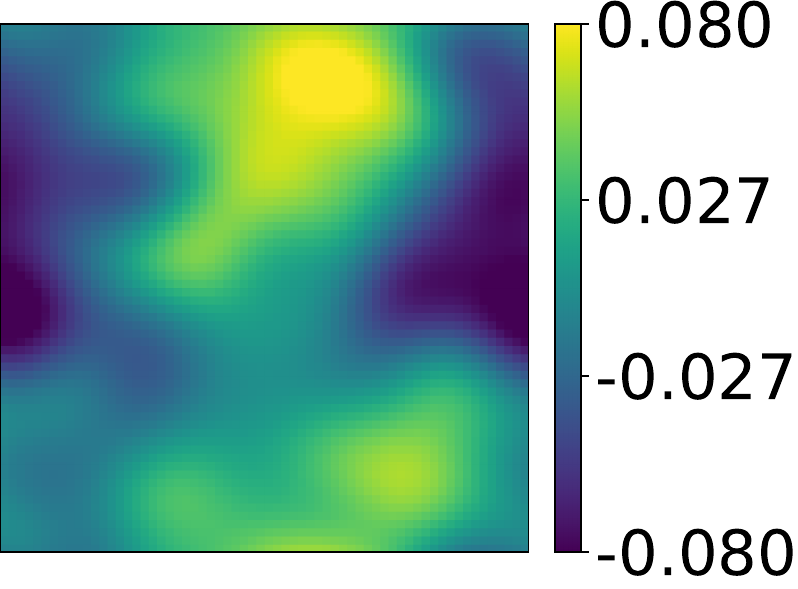}
        \label{fig:rd_x}
    }%
    \subfigure[sensors]{
        \includegraphics[width=0.18\linewidth]{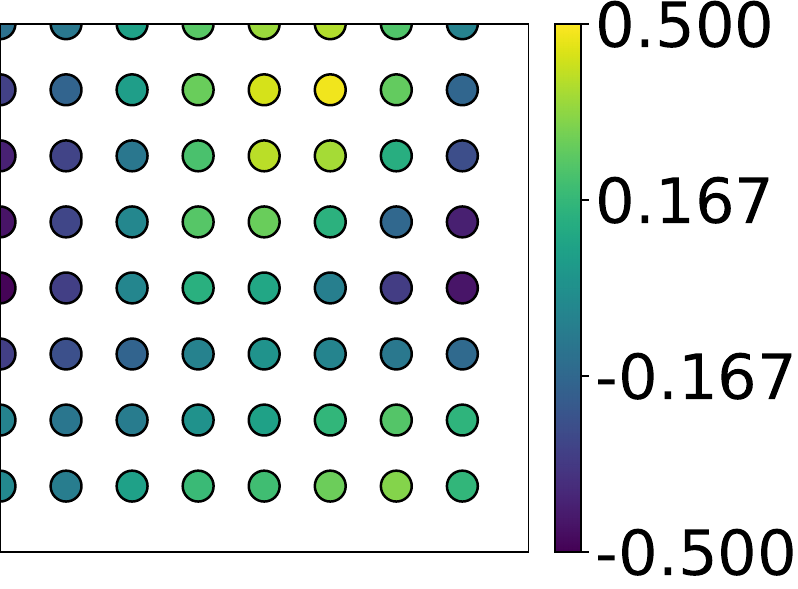}
        \label{fig:rd_Gobs_sensors}
    }%
    \subfigure[$\mu_{\mathrm{ref}}$]{
        \includegraphics[width=0.18\linewidth]{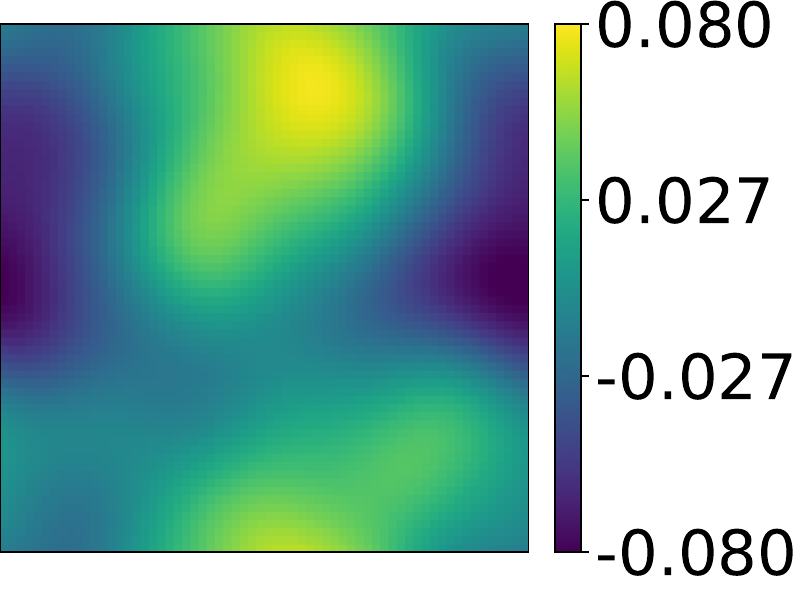}
        \label{fig:rd_mu_ref}
    }%
    \subfigure[$\mu_{\mathrm{pred}}$]{
        \includegraphics[width=0.18\linewidth]{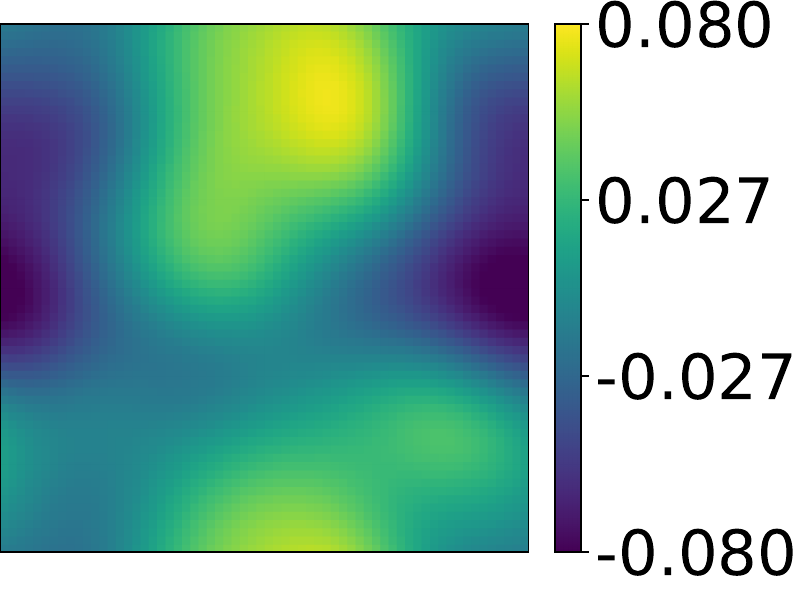}
        \label{fig:rd_mu_pred}
    }%
    \subfigure[$\mu_{\mathrm{pred}}-\mu_{\mathrm{ref}}$]{
        \includegraphics[width=0.18\linewidth]{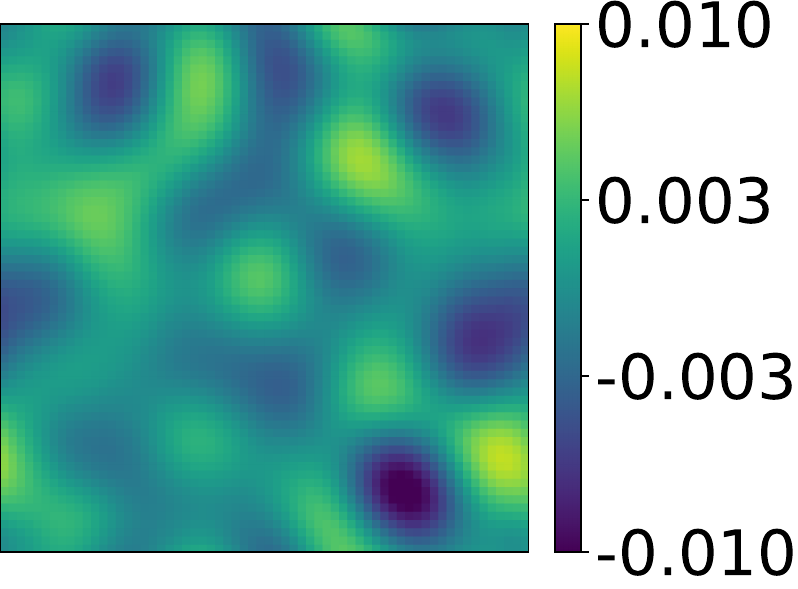}
        \label{fig:rd_mu_diff}
    }
    \subfigure[Posterior samples]{
            \centering
            \includegraphics[width=0.18\linewidth]{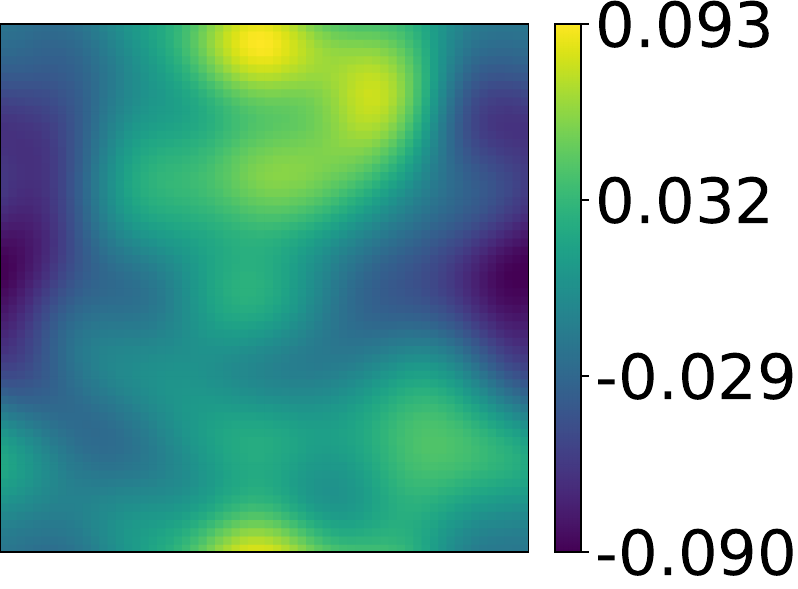}\hspace{2.5mm}
            \includegraphics[width=0.18\linewidth]{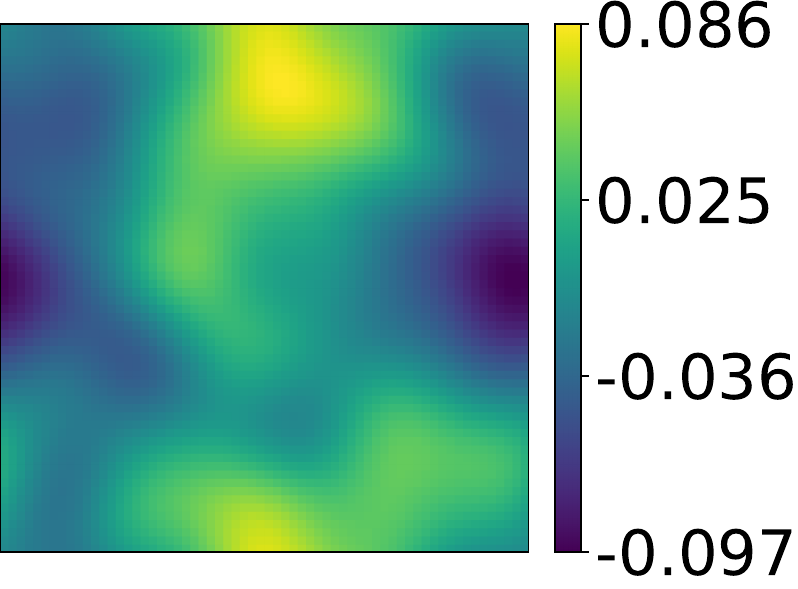}\hspace{2.5mm}
            \includegraphics[width=0.18\linewidth]{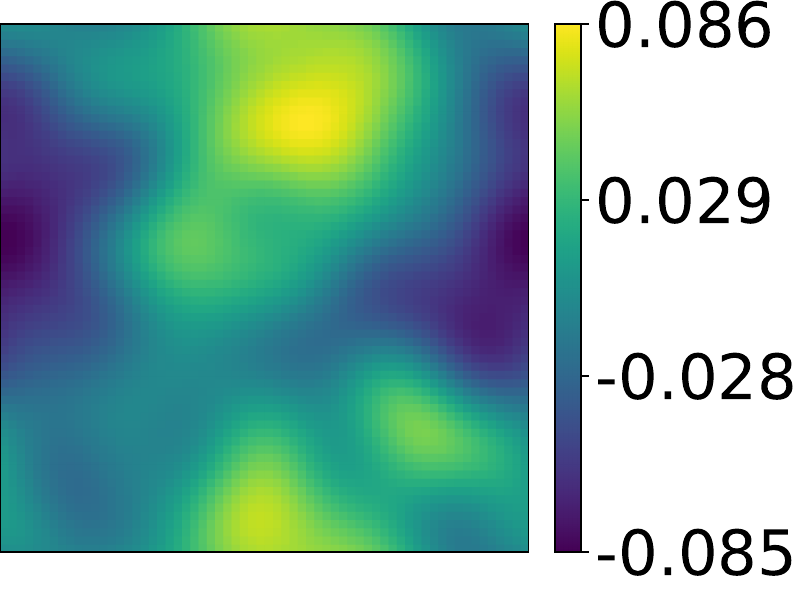}\hspace{2.5mm}
            \includegraphics[width=0.18\linewidth]{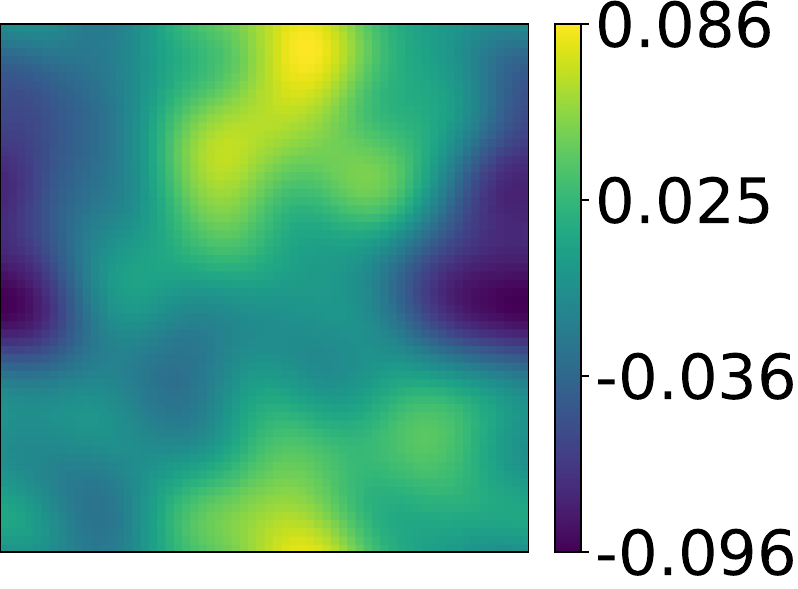}\hspace{2.5mm}
            \includegraphics[width=0.18\linewidth]{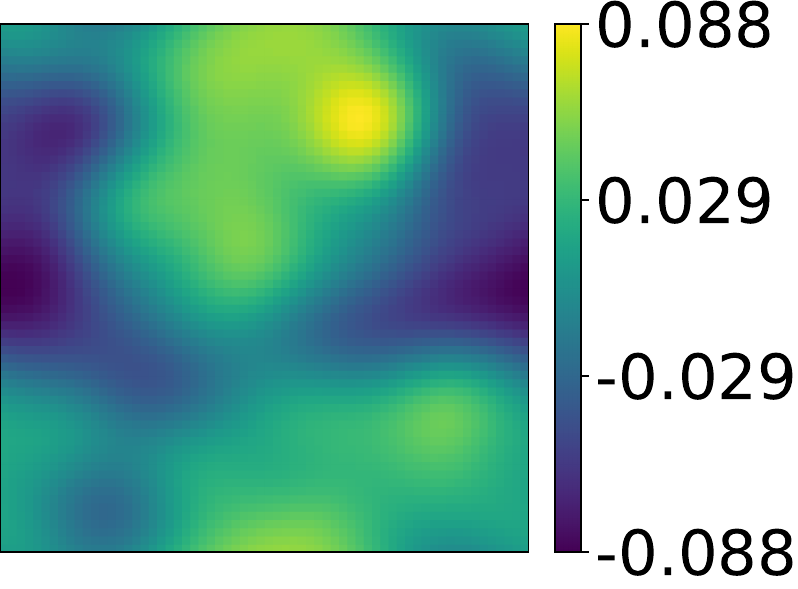}
        \label{fig:rd_samples}
    }

    \caption{Reaction-diffusion inverse problem. The first row shows the unknown field, sparse observations, reference posterior mean, predicted posterior mean, and mean error. The second row shows representative posterior samples generated by the proposed sampler.}
    \label{fig:reaction_diffusion_results}
\end{figure}
\begin{figure}[htbp!]
    \centering

    \subfigure[$u_0$]{
        \includegraphics[width=0.18\linewidth]{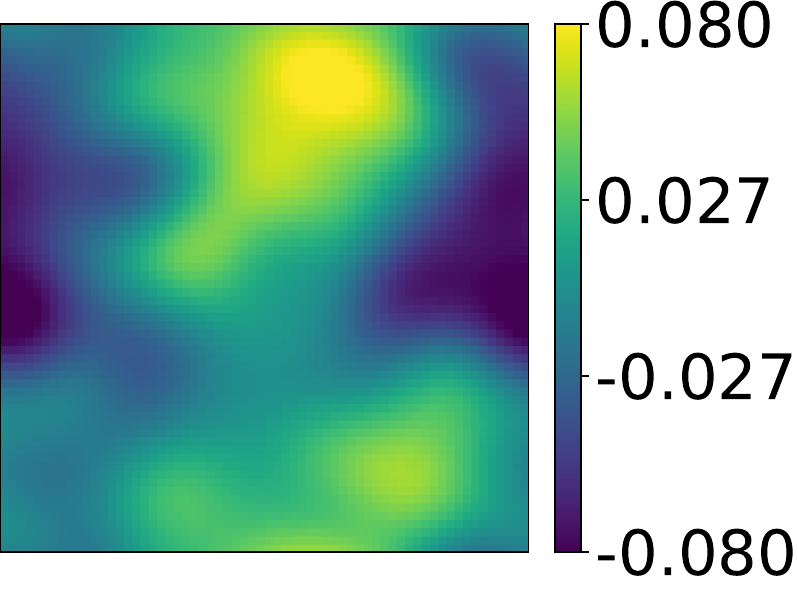}
        \label{fig:ns_x}
    }%
    \subfigure[sensors]{
        \includegraphics[width=0.18\linewidth]{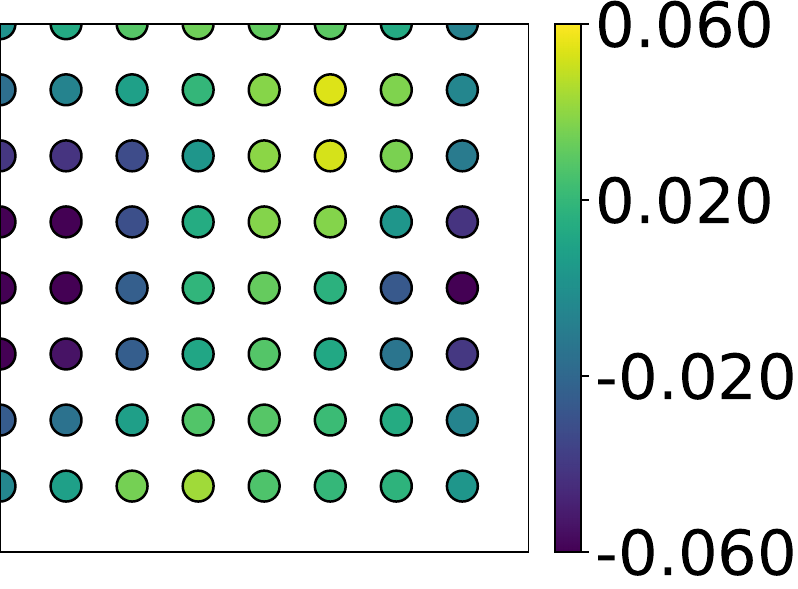}
        \label{fig:ns_Gobs_sensors}
    }%
    \subfigure[$\mu_{\mathrm{ref}}$]{
        \includegraphics[width=0.18\linewidth]{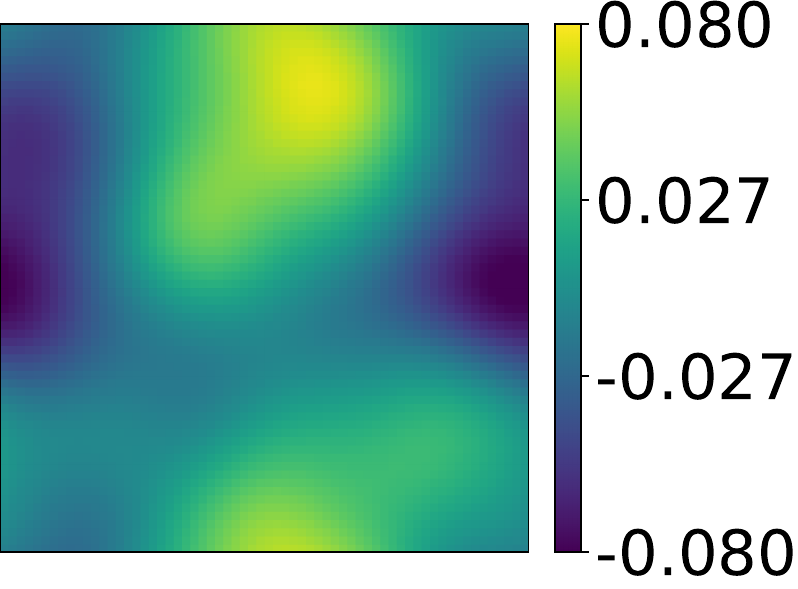}
        \label{fig:ns_mu_ref}
    }%
    \subfigure[$\mu_{\mathrm{pred}}$]{
        \includegraphics[width=0.18\linewidth]{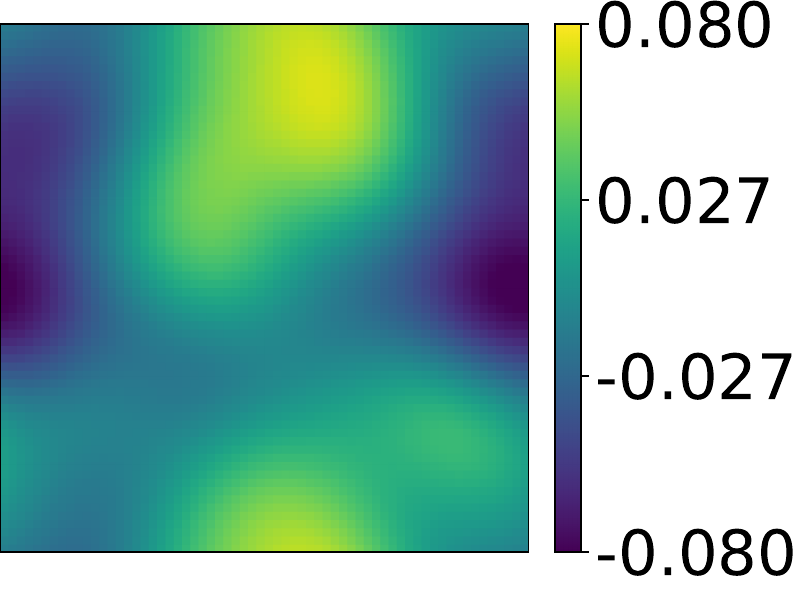}
        \label{fig:ns_mu_pred}
    }%
    \subfigure[$\mu_{\mathrm{pred}}-\mu_{\mathrm{ref}}$]{
        \includegraphics[width=0.18\linewidth]{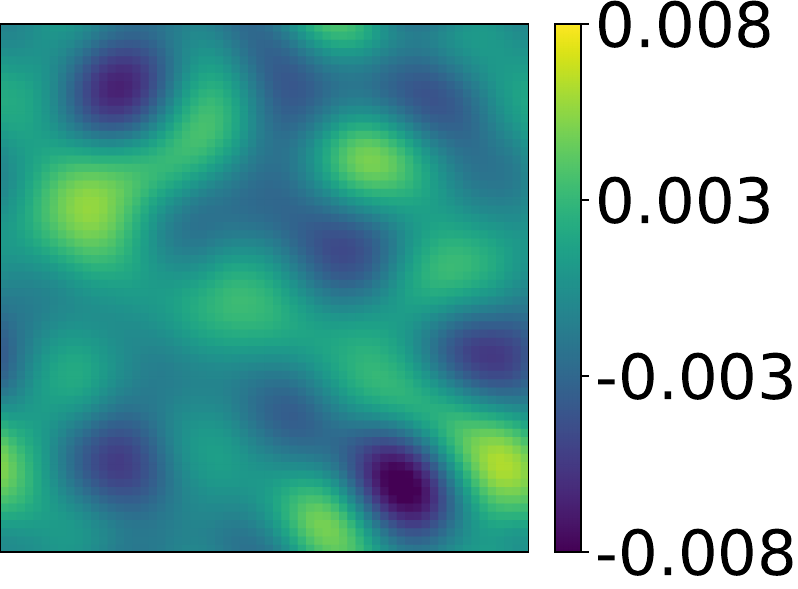}
        \label{fig:ns_mu_diff}
    }
    \subfigure[Posterior samples]{
            \centering
            \includegraphics[width=0.18\linewidth]{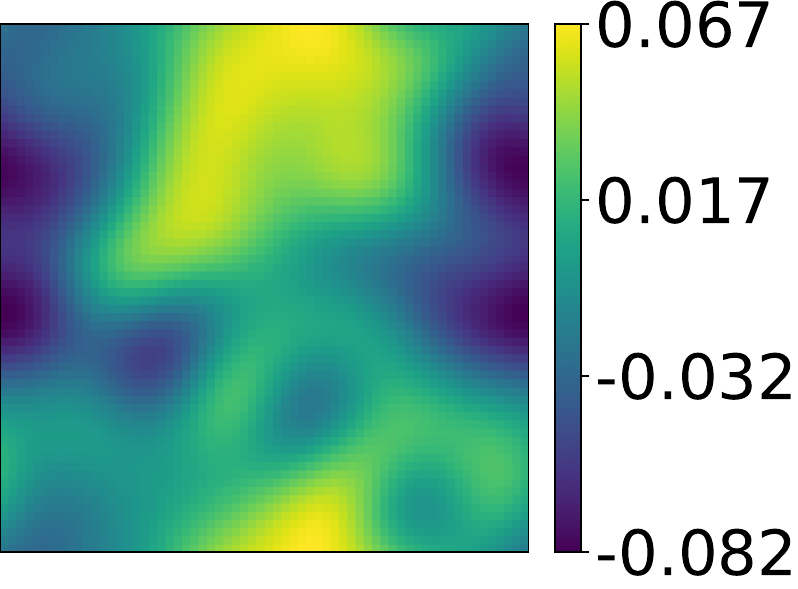}\hspace{2.5mm}
            \includegraphics[width=0.18\linewidth]{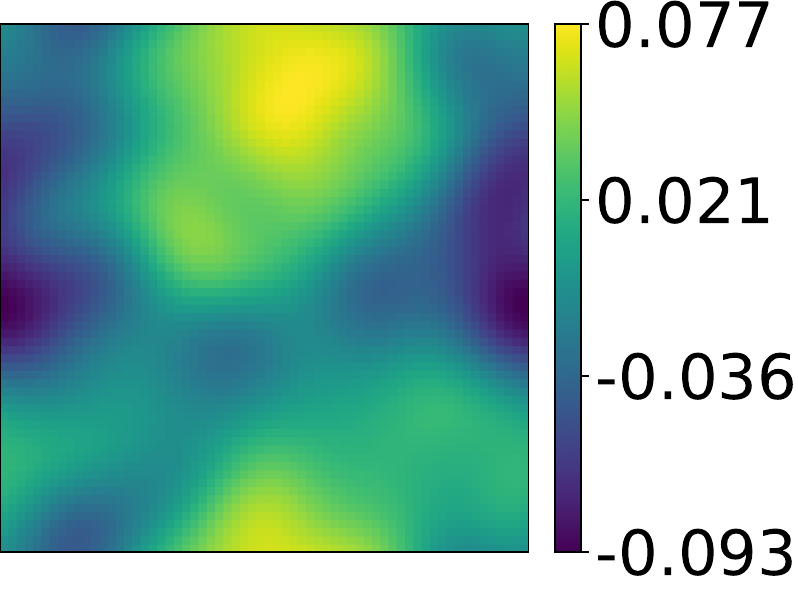}\hspace{2.5mm}
            \includegraphics[width=0.18\linewidth]{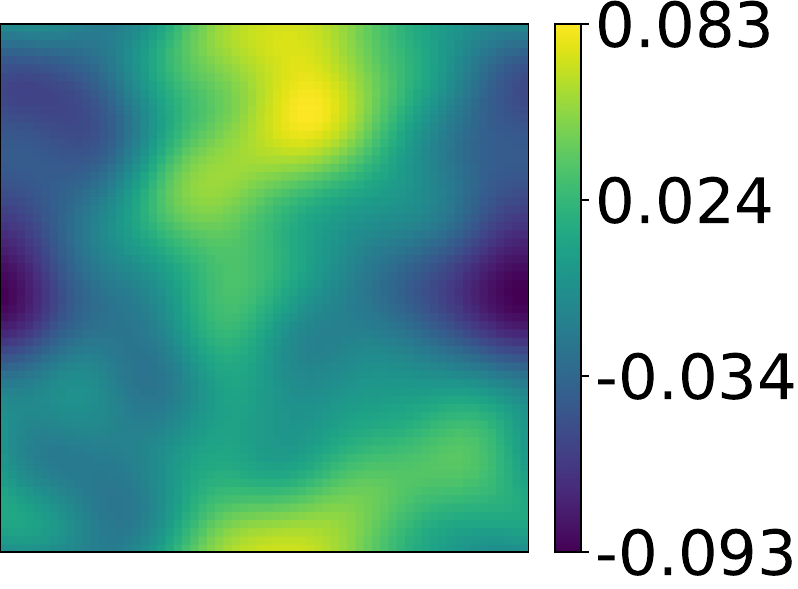}\hspace{2.5mm}
            \includegraphics[width=0.18\linewidth]{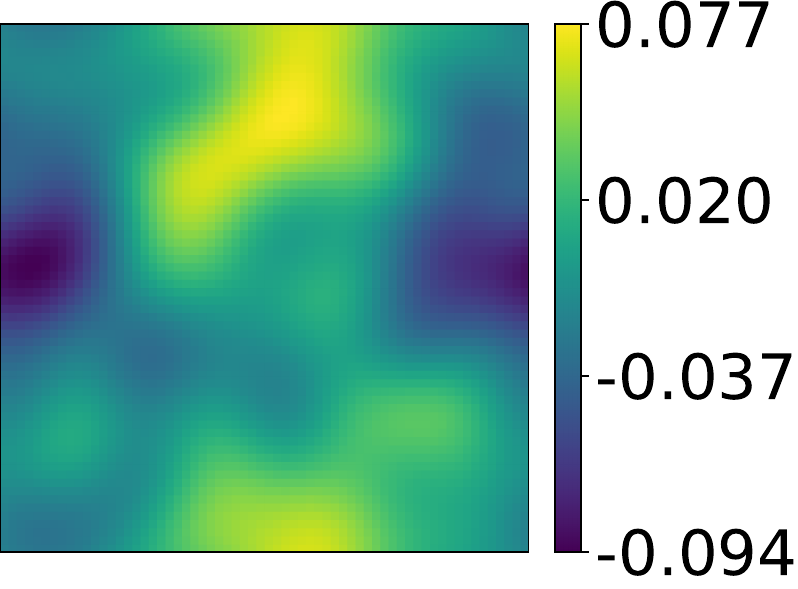}\hspace{2.5mm}
            \includegraphics[width=0.18\linewidth]{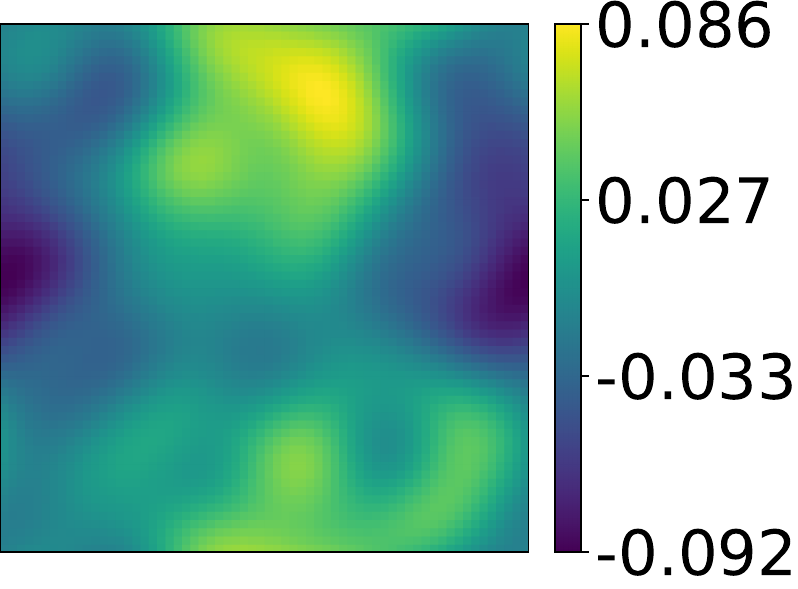}
        \label{fig:ns_samples}
    }

    \caption{Navier-Stokes inverse problem. The first row shows the unknown field, sparse observations, reference posterior mean, predicted posterior mean, and mean error. The second row shows representative posterior samples generated by the proposed sampler.}
    \label{fig:navier_stokes_results}
\end{figure}
 \begin{enumerate}[resume,leftmargin=*, label=(\arabic*),itemsep=0.2em]
\item \underline{Advection.}
In the 2D linear Advection equation,
\begin{equation}\label{eq:advection_pde}
\begin{cases}
\partial_\tau u + \mathbf b_\mathrm{adv}\cdot\nabla u = 0  &\text{in }\Omega\times(0,\infty),\\
u(\cdot,0)=u_0 &\text{in }\Omega,
\end{cases}
\end{equation}
with velocity $\mathbf b_\mathrm{adv}=(0.8,0.4)$, we consider the forward map $\mathcal G(u_0)=u(\cdot,T=1)$.
On the computational grid, we discretize \eqref{eq:advection_pde} using a first-order upwinding in space and forward Euler in time
scheme under the CFL constraint.
The observations are $y_{\mathrm{obs}}=\mathcal{O}_\mathrm{per}(u)+\eta$ with $\eta\sim\mathcal{N}(0,0.05^2I_m).$ See Figure~\ref{fig:advection_results}\subref{fig:advection_x}--\subref{fig:advection_Gobs_sensors}.
\item \underline{Reaction--Diffusion Equation (RDE).}
The 2D Allen-Cahn Reaction--Diffusion equation follows:
\begin{equation}
\begin{cases}
\partial_\tau u = D\Delta u + u - u^{3} & \text{in }\Omega\times(0,\infty),\\
u(\cdot,0)=u_0 &\text{in }\Omega,
\end{cases}
\label{eq:rd_pde}
\end{equation}
with diffusion coefficient $D=10^{-3}$. We consider the forward map $\mathcal G(u_0)=u(\cdot,T=2)$. We discretize \eqref{eq:rd_pde} on a uniform grid using the standard second-order central-difference Laplacian and forward Euler time stepping. Observations follow $y_{\mathrm{obs}}=\mathcal{O}_\mathrm{per}(u)+\eta$ where $\eta\sim\mathcal{N}(0,0.05^2I_m)$. See Figure~\ref{fig:reaction_diffusion_results}\subref{fig:rd_x}--\subref{fig:rd_Gobs_sensors}.
\item \underline{Navier--Stokes Equation(in vorticity form, NSE).}

In the 2D vorticity formulation
\begin{equation}
\begin{cases}
\partial_\tau u + \mathbf b_\mathrm{NS}\cdot\nabla u = \nu\,\Delta u + f,
& \text{in }\Omega\times(0,\infty),\\
-\Delta \psi = u,& \text{in }\Omega,\\
\mathbf b_\mathrm{NS}=\nabla^\perp\psi := (\partial_2\psi,\,-\partial_1\psi),
& \text{in }\Omega,\\
u(\cdot,0)=u_0,
& \text{in }\Omega,
\end{cases}
\label{eq:ns_vorticity}
\end{equation}
where $\nu=10^{-2}$ is the viscosity and $f(q_1,q_2)=
0.1[\sin(2\pi(q_1+q_2))+\cos(2\pi(q_1+q_2))]
$ is a fixed forcing. we consider the forward map $\mathcal{G}(u_0)=u(\cdot,T=0.2)$. We solve \eqref{eq:ns_vorticity} using a Fourier pseudo-spectral method with $2/3$ de-aliasing and an IMEX time integrator (Crank-Nicolson for diffusion and explicit advection).
Observations follow $y_{\mathrm{obs}}=\mathcal{O}_\mathrm{per}(u)+\eta$ where $\eta\sim\mathcal{N}(0,0.01^2I_m)$. See Figure~\ref{fig:navier_stokes_results}\subref{fig:ns_x}--\subref{fig:ns_Gobs_sensors}.
\end{enumerate}
\begin{table}[!htb]
\centering
\caption{Wall-clock generation time for producing posterior samples \emph{conditioned on a single fixed observation} $y_{\mathrm{obs}}$.
\emph{Hardware:} MCMC is run in parallel on $16$ cores of the CPU ($32$ thread at $3.6$GHz) cores; learned samplers are evaluated on a single NVIDIA RTX~A6000 GPU. The per-sample clock time is computed by averaging over the generation time of $1000$ posterior samples.}
\label{tab:sampling_time}
\setlength{\tabcolsep}{3pt}
\begin{tabular}{l c c c}
\toprule
Sampler & Steps   & Sampling time (s/sample) \\
\midrule
MCMC (Darcy)               & $3\times10^4$ &  $2.05\times 10^1$ \\
MCMC (Advection)           & $3\times10^4$ &  7.01 \\
MCMC (Reaction--Diffusion) & $3\times10^4$ &  $1.04\times 10^1$ \\
MCMC (Navier--Stokes)      & $3\times10^4$ & $1.97\times 10^1$ \\
\midrule
Multistep diffusion        & $320$            & $1.05\times10^{-1}$ \\
Multistep flow matching & $320$            & $1.67\times10^{-1}$\\
\midrule
One-step (MF) & $1$ &  $3.50\times10^{-4}$ \\
One-step (iMF)                         & $1$ & $3.51\times10^{-4}$ \\
One-step (PODNO)              & $1$ &  $3.17\times10^{-4}$ \\
One-step (U-Net)              & $1$           & $1.85\times10^{-3}$ \\
\bottomrule
\end{tabular}

\vspace{2pt}
\begin{minipage}{\linewidth}
{\textit{Note:} Generation of training data ($10000$) requires $432$s for Darcy, $75$s for Advection, $145$s for Reaction--Diffusion, $385$s for Navier--Stokes. The training and sampling costs of the machine
learning models are independent of the specific inverse problem and depend
only on the discretization resolution.}
\end{minipage}\vspace*{-0.5\baselineskip}
\end{table}

\begin{table}[!htb]
\centering
\caption{Posterior approximation errors of the conditional one-step sampler on PDE inverse problems.}
\label{tab:benchmark_summary}
\setlength{\tabcolsep}{6pt} 
\begin{tabular}{l l c c c }
\toprule
Dataset & Source & $\epsilon_\mu^\mathrm{eval}$ & $\epsilon_\sigma^\mathrm{eval}$\\
\midrule
\multirow{2}{*}{Darcy}
  & Anisotropic &   $5.55\times10^{-2}$ & $4.27\times 10^{-2}$\\
  & White   &  $8.26\times10^{-2}$ &$4.30\times10^{-2}$ \\
\midrule
\multirow{2}{*}{Advection}
  & Anisotropic &    $6.08\times10^{-2}$&  $5.09\times10^{-2}$ \\
  & White   &   $1.63\times10^{-1}$ & $5.14\times10^{-2}$ \\
\midrule
\multirow{2}{*}{\shortstack[l]{Reaction--\\Diffusion}}
  & Anisotropic &  $7.62\times10^{-2}$&$8.68\times10^{-2}$  \\
  & White  &  $7.89\times10^{-2}$&  $8.91\times10^{-2}$  \\
\midrule
\multirow{2}{*}{\shortstack[l]{Navier--\\Stokes}}
  & Anisotropic &  $6.41\times10^{-2}$ & $5.39\times10^{-2}$ \\
  & White   &  $8.16\times10^{-2}$ & $5.26\times10^{-2}$\\
\bottomrule
\end{tabular}
\begin{minipage}{\linewidth}
\end{minipage}

\end{table}

For the PDE inverse problems, the reference posterior is not available in closed form.
We therefore compute reference statistics as one in \eqref{eq:posterior_statistics} from MCMC samples in the truncated KL
coefficient space. The truncation is necessary since larger \(N_{\mathrm{KL}}\) substantially worsens MCMC mixing in coefficient space, making reliable reference posterior statistics practically infeasible. In contrast, our one-step sampler has no such KL-dimension restriction: for any chosen \(N_{\mathrm{KL}}\), each sample is generated by the same single network forward pass on the discretized field. However, for fair comparison, we still take the same \(N_{\mathrm{KL}}\) in our training and sampling.

Writing \(x=\Phi(c)\) with
\(c\in\mathbb R^{N_{\mathrm{KL}}}\), the prior becomes
\(c\sim\mathcal N(0,I_{N_{\mathrm{KL}}})\). We sample the corresponding
coefficient-space posterior using the affine-invariant ensemble sampler
implemented in \texttt{emcee}~\cite{goodman2010ensemble,foreman2013emcee}.
For all PDE problems, we use \(N_{\mathrm{KL}}=64\) and \(200\) walkers, discard
the first \(5{,}000\) steps as burn-in, and run approximately \(3\times10^4\)
steps in total, corresponding to about \(50\) integrated autocorrelation times
after burn-in. From the post-burn-in samples, we use
\(N_{\mathrm{eval}}=10^3\) field samples
\(x_{\mathrm{ref}}^{(i)}=\Phi(c^{(i)})\) and compute
\begin{equation}
\mu_{\mathrm{ref}}
=
\frac{1}{N_{\mathrm{eval}}}
\sum_{i=1}^{N_{\mathrm{eval}}}
x_{\mathrm{ref}}^{(i)},
\quad
\sigma_{\mathrm{ref}}
=
\sqrt{
\frac{1}{N_{\mathrm{eval}}-1}
\sum_{i=1}^{N_{\mathrm{eval}}}
(x_{\mathrm{ref}}^{(i)}-\mu_{\mathrm{ref}})^2
}.
\label{eq:mcmc_ref_stats}
\end{equation}
Then we still use \eqref{eq_std} and \eqref{eq:posterior_statistics} to quantify the posterior quality.
Figures~\ref{fig:darcy_results}\subref{fig:darcy_mu_ref}--\subref{fig:darcy_samples}, \ref{fig:advection_results}\subref{fig:advection_mu_ref}--\subref{fig:advection_samples}, \ref{fig:reaction_diffusion_results}\subref{fig:rd_mu_ref}--\subref{fig:rd_samples}, and \ref{fig:navier_stokes_results}\subref{fig:ns_mu_ref}--\subref{fig:ns_samples} summarize the posterior quality generated by our model with prior-aligned source. Table~\ref{tab:benchmark_summary} reports the results for both source choices. 

Overall, under our experimental setup with a prior-aligned anisotropic source, we achieve $\epsilon_\mu^{\mathrm{eval}}<10\%$ and $\epsilon_\sigma^{\mathrm{eval}}<10\%$ across all PDE inverse problems considered (Darcy, Advection, Reaction--Diffusion, and Navier--Stokes), indicating accurate recovery of both the posterior mean and posterior uncertainty relative to the reference posterior. 

With the white source, \(\epsilon_\mu^{\mathrm{eval}}\) and
\(\epsilon_\sigma^{\mathrm{eval}}\) are still worse than those of the
prior-aligned anisotropic source, although the gap is moderate at the tested resolutions. This is partly because the MCMC reference statistics are computed with the truncated KL expansion \(N_{\mathrm{KL}}=64\), which suppresses part of the high-frequency mismatch. 
In addition, the reported white-source results also use the source-dependent encoding in \eqref{eq:normalization}, where the white-source variables are standardized componentwise. This rescaling improves the conditioning of the
finite-dimensional coordinates, but it does not restore the prior covariance structure and therefore cannot remove the function-space
inconsistency. Indeed, when the same mean-only encoding as the anisotropic source is used, the performance of the white-source model degrades
substantially.

Moreover, under the improved Mean-Flow \cite{geng2025improved} objective, the white-source transport becomes severely unstable and can even diverge, whereas the prior-aligned anisotropic source transport remains stable; see Table~\ref{tab:improved_mean_flow}. Together with the discussion above, this suggests that the mild finite-resolution performance of the white source here is not intrinsic, but is partly masked by the
preprocessing encoding, and the finite KL truncation.

Additional ablations with alternative operator-learning parametrizations (PODNO; Table~\ref{tab:podno}) and a finite-dimensional U-Net parametrizations (Table~\ref{tab:unet}), together with comparisons to multistep diffusion models (Table~\ref{tab:diffusion}), and DPS with an unconditional prior model(Figures~\ref{fig:DPS_identity_results} and~\ref{fig:DPS_darcy_results}), are reported in Appendix~\ref{sec:alternative_models}--\ref{sec:prior_guidance_comparison}. These results provide further evidence that the proposed preconditioned one-step conditional neural operator transport is both computationally efficient and well-suited to
function-space inverse problems.

\section{Conclusion}
\label{sec:conclusion}

In this work, we presented a one-step amortized posterior sampler for Bayesian inverse problems in function space.  The method learns an observation-conditional transport map from joint simulation pairs \((x,y_{\mathrm{obs}})\), where \(x\) is drawn from the prior and \(y_{\mathrm{obs}}\) is obtained from the forward model with partial and noisy observations.  After training, the same neural operator sampler can be applied to unseen observations and generate
posterior samples in a single forward pass, without MCMC supervision, teacher models, or iterative SDE/ODE sampling.

A central message of this work is that the source measure is not merely a finite-dimensional numerical choice, but must be consistent with the function-space geometry of the posterior.  We therefore use a prior-aligned anisotropic Gaussian source and show theoretically, under suitable assumptions, that the induced conditional transport enjoys Lipschitz regularity.  The numerical results support this principle: prior-aligned sources yield more stable posterior statistics, whereas mismatched Gaussian sources, such as white noise, remain less robust.

Across the PDE inverse problems considered in this paper, the proposed one-step sampler accurately captures key posterior summaries while reducing
the sample-generation cost by orders of magnitude compared with MCMC. It also avoids the repeated numerical integration required by multistep diffusion or flow samplers, thereby removing a source of discretization error and error
accumulation. These results suggest that one-step conditional transport, combined with prior-aligned source measures and neural-operator parameterizations, provides a practical approach to fast amortized Bayesian inference for function-space PDE inverse problems.
\section*{Impact statement}
This paper studies machine-learning methods for PDE-based inverse problems in scientific computing.
While such methods may impact a broad range of downstream applications, we do not foresee
immediate societal risks that require specific discussion beyond standard considerations on
data, robustness, and responsible deployment.
\section*{Acknowledgements}
This work was supported by Singapore MOE AcRF Tier 1 Grant RG95/23 and RG17/24, NTU SUG-023162-00001, and Singapore MOE AcRF Tier 2 Grant: MOE-T2EP20224-0012.
\section*{Appendix}
\appendix
In this appendix, we provide additional details and comparisons supporting the
main text. Appendix~\ref{sec:alternative_models} reports additional model
variants and baselines. Appendix~\ref{sec:prior_guidance_comparison}
compares the proposed conditional sampler with an unconditional prior model combined with guidance. Appendix~\ref{app:notation_settings} summarizes the
notation and the network/training settings used.
\section{Additional model variants and baselines}\label{sec:alternative_models}
In this section, we report additional ablations for the proposed conditional
one-step sampler. Unless otherwise stated, all variants use the same FNO parameterization and setting as the main model. We first examine
alternative parameterizations, including PODNO and U-Net. We then
evaluate Improved Mean Flows~\cite{geng2025improved} as an additional variant. Finally, we include multistep diffusion baselines for further comparison.

\subsection{PODNO parameterization}
\label{app:fno_podno}
\begin{table}[htbp!]
\centering
\caption{Posterior approximation errors of the PODNO-parameterized Mean-Flow sampler.}
\label{tab:podno}
\begin{tabular}{l l c  c}
\toprule
Dataset & Source  & $\epsilon_\mu^\mathrm{eval}$ & $\epsilon_\sigma^\mathrm{eval}$ \\
\midrule
\multirow{2}{*}{Darcy}
  & Anisotropic & $5.55\times 10^{-2}$ & $5.45\times 10^{-2}$ \\
  & White    &$7.56\times 10^{-2}$&$4.37\times 10^{-2}$\\
\midrule
\multirow{2}{*}{Advection}
  & Anisotropic & $6.36\times 10^{-2}$ & $3.76\times 10^{-2}$ \\
  & White   & $1.43\times 10^{-1}$ & $5.08\times 10^{-2}$ \\
\midrule
\multirow{2}{*}{Reaction--Diffusion}
  & Anisotropic & $7.23\times 10^{-2}$ & $7.94\times 10^{-2}$ \\
  & White   & $8.08\times 10^{-2}$ & $7.85\times 10^{-2}$ \\
\midrule
\multirow{2}{*}{Navier--Stokes}
  & Anisotropic & $5.91\times 10^{-2}$ & $5.50\times 10^{-2}$ \\
  & White &  $8.08\times 10^{-2}$ & $5.52\times 10^{-2}$ \\
\bottomrule
\end{tabular}
\end{table}

The proposed one-step Mean Flows framework is agnostic to the choice of parameterizations.
Beyond FNO, we also evaluated Proper Orthogonal Decomposition Neural Operators (PODNO) \cite{cheng2026podno} under the same Mean Flows formulation and with matched parameter budgets: we use $288$ POD modes for PODNO and $12$ Fourier modes for FNO, yielding $1.2\times 10^{8}$ parameters in both cases.

Across all benchmarks in Table~\ref{tab:podno}, PODNO achieves posterior accuracy comparable to FNO while
offering significant computational advantages.
By projecting the dynamics onto a low-dimensional POD basis, PODNO reduces the cost
of online spectral operations and leads to approximately a speedup in training, without degrading posterior mean or uncertainty estimates.

These results indicate that the proposed framework is compatible with a range of
operator-learning architectures.
Designing neural operators that are both computationally efficient and
well-conditioned for probabilistic transport remains an important direction for
future work.
\subsection{UNet parameterization}
\label{app:unet_limitations}
\begin{table}[htbp!]
\centering
\caption{Posterior approximation errors of the U-Net-parameterized Mean-Flow sampler.}
\label{tab:unet}
\begin{tabular}{l l c c }
\toprule
Dataset & Source & $\epsilon_\mu^\mathrm{eval}$ & $\epsilon_\sigma^\mathrm{eval}$ \\
\midrule
\multirow{2}{*}{Darcy}
  & Anisotropic & $1.02\times 10^{-1}$ & $1.48\times 10^{-1}$ \\
  & White   &$1.41\times 10^{-1}$  &$1.02\times 10^{-1}$  \\
\midrule
\multirow{2}{*}{Advection}
  & Anisotropic &  $7.34\times 10^{-2}$&  $1.37\times 10^{-1}$\\
  & White   & $2.04\times 10^{-1}$ & $1.04\times 10^{-1}$ \\
\midrule
\multirow{2}{*}{Reaction--Diffusion}
  & Anisotropic &  $9.17\times 10^{-2}$&  $2.04\times 10^{-1}$\\
  & White   & $1.44\times 10^{-1}$ & $1.44\times 10^{-1}$\\
\midrule
\multirow{2}{*}{Navier--Stokes}
  & Anisotropic &  $1.06\times 10^{-1}$&  $1.31\times 10^{-1}$\\
  & White   &  $1.29\times 10^{-1}$&  $9.98\times 10^{-2}$\\
\bottomrule
\end{tabular}
\end{table}
We also tested UNet \cite{ronneberger2015u} parameterization within the same one-step Mean Flows framework.
While UNets are widely used in diffusion models for image generation, the primary limitation is structural.
It is a grid-dependent convolutional architecture, designed to exploit local
spatial correlations at a fixed resolution.
In contrast, the transport map in Bayesian PDE inverse problems is inherently an
operator acting on function spaces, with global and nonlocal dependence induced by
the forward PDE and the observation operator.

As a consequence, UNets require substantially more parameters (in our setting, $1.7\times 10^9$ for FNO compared to $1.2\times 10^8$ for UNet), yet still deliver inferior performance and struggle to represent resolution-independent transport maps as presented in Table~\ref{tab:unet}.
\subsection{Improved Mean Flows model}
\label{app:imf_white_noise}

\begin{algorithm}[!htb]
  \caption{Training the Improved Conditional Mean-Flow Predictor}
  \label{alg:imf_training_encode_only}
  \begin{algorithmic}
    \STATE {\bfseries Input:} Training pairs
    $\{(x^{(i)}, y_{\mathrm{obs}}^{(i)})\}_{i=1}^{N_{\mathrm{train}}}$sampled from the joint law of
    \((X,Y_{\mathrm{obs}})\);
    source measure $\rho$.
    \STATE {\bfseries Preprocessing:}
    \STATE \hspace{0.5em}
    Estimate encoding statistics from the training set and define
    $\tilde x=\mathcal E_x(x)$ and
    $\tilde y_{\mathrm{obs}}=\mathcal E_y(y_{\mathrm{obs}})$ as in
    \eqref{eq:normalization}.
    \STATE {\bfseries Initialize:} network parameters $\theta$.
    \REPEAT
      \STATE Sample a mini-batch $(x,y_{\mathrm{obs}})$ from the training set.
      \STATE Encode
      $\tilde x=\mathcal E_x(x)$ and
      $\tilde y_{\mathrm{obs}}=\mathcal E_y(y_{\mathrm{obs}})$.
      \STATE Sample an independent source variable $\xi\sim\rho$.
      \STATE Sample a time pair $(r,t)$ with $0\le r\le t\le 1$.
      \STATE Form the encoded interpolation path
          $z_t=(1-t)\tilde x+t\xi,
          ~
          v^{\mathrm{path}}=\xi-\tilde x$ .
      \STATE Compute the boundary-trick surrogate direction
          $v_\theta^{\mathrm{bd}}
          :=
          w_\theta(z_t,t,t;\tilde y_{\mathrm{obs}})$.
      \STATE Form the iMF composite predictor
          $\widehat v_\theta^{\mathrm{iMF}}
          :=
          w_\theta(z_t,r,t;\tilde y_{\mathrm{obs}})
          +(t-r)\,
          \mathrm{sg}\!\left(
          \mathrm{JVP}_{(z,r,t)}
          \bigl(w_\theta;(v_\theta^{\mathrm{bd}},0,1)\bigr)
          \right)$.
      \STATE Update $\theta$ by minimizing the encoded-space MSE loss
          $\|
          \widehat v_\theta^{\mathrm{iMF}}-v^{\mathrm{path}}
          \|^2$ .
    \UNTIL{convergence}
    \STATE {\bfseries Output:} trained parameters $\hat\theta$.
  \end{algorithmic}
\end{algorithm}

\begin{algorithm}[!htb]
  \caption{One-Step Conditional Posterior Sampling with iMF}
  \label{alg:imf_sampling_encode_only}
  \begin{algorithmic}
    \STATE {\bfseries Input:} Observation $y_{\mathrm{obs}}$;
    trained parameters $\hat\theta$;
    source measure $\rho$; encoders $\mathcal E_x,\mathcal E_y$.
    \STATE Encode the observation:
        $\tilde y_{\mathrm{obs}}
        =
        \mathcal E_y(y_{\mathrm{obs}})$.
    \STATE Sample $\xi\sim\rho$.
    \STATE Generate an encoded posterior sample by the one-step map:
        $\tilde x
        =
        \xi
        -
        w_{\hat\theta}(\xi,0,1;\tilde y_{\mathrm{obs}})$.
    \STATE Decode to physical space:
        $x
        =
        \mathcal E_x^{-1}(\tilde x)$.
    \STATE {\bfseries Output:} posterior sample $x$.
  \end{algorithmic}
\end{algorithm}

\begin{table}[!htb]
\centering
\caption{Posterior approximation errors of the improved Mean-Flow sampler.}
\label{tab:improved_mean_flow}
\begin{tabular}{l l c c  }
\toprule
Dataset & Source & $\epsilon_\mu^\mathrm{eval}$ & $\epsilon_\sigma^\mathrm{eval}$\\
\midrule
\multirow{2}{*}{Darcy}
  & Anisotropic  & $5.43\times10^{-2}$& $6.16\times10^{-2}$\\
  & White    & $2.80\times 10^9$& $3.14\times 10^9$\\
\midrule
\multirow{2}{*}{Advection}
  & Anisotropic & $7.19\times10^{-2}$& $5.05\times10^{-2}$\\
  & White   & $1.11\times 10^{10}$&$1.91\times 10^{5}$ \\
\midrule
\multirow{2}{*}{Reaction-Diffusion}
  & Anisotropic  & $8.23\times10^{-2}$& $9.64\times10^{-2}$\\
  & White   & $2.16\times 10^{10}$&$6.14\times 10^{5}$ \\
\midrule
\multirow{2}{*}{Navier--Stokes}
  & Anisotropic & $6.86\times 10^{-2}$& $6.52\times 10^{-2}$\\
  & White   &$1.71\times 10^{10}$ &$3.50\times 10^5$\\
\bottomrule
\end{tabular}
\end{table}

We additionally test Improved Mean Flows (iMF)~\cite{geng2025improved}, which augments Mean Flows training with an explicit total-derivative (JVP) correction in a training-time composite predictor. 

We use the same simulator pair,
encoded variables, source variable, and linear interpolation path as in
Algorithm~\ref{alg:mf_training_encode_only}--\ref{alg:mf_sampling_encode_only}.
However, we use the boundary-trick surrogate direction
$V_\theta^{\mathrm{bd}}
:=
w_\theta(Z_t,t,t;Y_{\mathrm{obs}}).$
The iMF composite predictor is then defined as the random variable
\begin{equation}
\label{eq:imf_predictor}
    \widehat V_\theta^{\mathrm{iMF}}
    :=
    w_\theta(Z_t,r,t;Y_{\mathrm{obs}})
    +(t-r)\,\mathrm{sg}\!\left(
    \mathrm{JVP}_{(z,r,t)}
    \bigl(w_\theta;(V_\theta^{\mathrm{bd}},0,1)\bigr)
    \right),
\end{equation}
where the JVP is evaluated at
\((Z_t,r,t)\). Equivalently,
\begin{equation}
\mathrm{JVP}_{(z,r,t)}
\bigl(w_\theta;(V_\theta^{\mathrm{bd}},0,1)\bigr)
:=
D_z w_\theta(Z_t,r,t; Y_{\mathrm{obs}})
    [V_\theta^{\mathrm{bd}}]
+
\partial_t w_\theta(Z_t,r,t; Y_{\mathrm{obs}}).
\label{eq:imf_jvp_def}
\end{equation}
Here \(D_z w_\theta(\cdot)[V_\theta^{\mathrm{bd}}]\) denotes the
Jacobian--vector product with respect to the state variable. The composite
quantity \(\widehat V_\theta^{\mathrm{iMF}}\) is not an independent network; it
is constructed from \(w_\theta\) only during training.

Unlike the original Mean-Flow objective, which regresses
\(w_\theta(Z_t,r,t; Y_{\mathrm{obs}})\) to the stop-gradient target
\(W_{\mathrm{tgt}}\), iMF regresses the corrected predictor
\(\widehat V_\theta^{\mathrm{iMF}}\) directly to the path velocity
\(V^{\mathrm{path}}\):
$\mathcal L_{\mathrm{iMF}}(\theta)
:=
\mathbb E
[
\|
\widehat V_\theta^{\mathrm{iMF}}
-
V^{\mathrm{path}}
\|^2
].
\label{eq:imf_loss}$
The expectation is taken over simulator pairs
\((X,Y_{\mathrm{obs}})\), independent source draws \(\Xi\sim\rho\), and time
pairs \((r,t)\). Algorithm \ref{alg:imf_training_encode_only}-\ref{alg:imf_sampling_encode_only} summarizes this variant with encoding \eqref{eq:normalization}.

Table~\ref{tab:improved_mean_flow} shows a consistent pattern across the PDE inverse problems.  With a prior-aligned anisotropic source, iMF trains stably and yields accurate posterior means and standard deviations.  With a
white-noise source, however, training can become numerically unstable, as indicated by loss blow-up or divergent values, and the resulting posterior statistics deteriorate substantially.

This difference can be understood from the source dependence of the iMF
composite predictor in~\eqref{eq:imf_predictor}. The additional correction is a JVP term evaluated
along the model-dependent boundary direction \(V_\theta^{\mathrm{bd}}\).  By \eqref{eq:imf_jvp_def}, this JVP contains the state-directional derivative $D_z w_\theta(Z_t,r,$ $t; Y_{\mathrm{obs}})[V_\theta^{\mathrm{bd}}]$. Under a white-noise source, resolution-sensitive high-frequency components can be amplified through this derivative, leading to an unstable correction in
the loss.  A prior-aligned anisotropic source suppresses these directions and makes the JVP correction better conditioned.

\subsection{Multistep diffusion model}\label{app:multistep}
We also test the PDE inverse problems using a vanilla multistep diffusion
sampler augmented with the same observation-conditioning mechanism as our proposed one-step model. For each fixed observation \(y_{\mathrm{obs}}\), the target
distribution is the posterior \(\pi(\cdot\mid y_{\mathrm{obs}})\). A score-based diffusion model first defines a forward noising process
that gradually transforms \(X|y_\mathrm{obs}\) into an approximately
Gaussian random variable. For example, the Ornstein--Uhlenbeck noising process
can be written as
\begin{equation}
    dZ_t
    =
    -\tfrac12 Z_t\,dt
    +\sqrt{C}dW_t,
    \quad
    Z_0=X|y_\mathrm{obs}.
    \label{eq:ou_forward_sde}
\end{equation}
Its marginal distribution admits the explicit representation
$    Z_t
    =
    e^{-t/2}X|y_\mathrm{obs}
    +
    \sqrt{1-e^{-t}}\,\Xi,
    ~
    \Xi\sim\mathcal N(0,C).$
Thus, as \(t\to\infty\), the law of \(Z_t\) approaches the Gaussian
distribution. In practice, the process is truncated at a large finite terminal
time.

After normalizing the diffusion time to \(t\in[0,1]\), we use the finite-time
Ornstein--Uhlenbeck schedule
\begin{equation}
    Z_t=\alpha_t X|y_\mathrm{obs
    }+\beta_t \Xi,
    \quad
    \alpha_t=e^{-T_dt/2},
    \quad
    \beta_t=\sqrt{1-e^{-T_dt}},
    \quad t\in[0,1],
    \label{eq:alpha_beta_schedule}
\end{equation}
where \(T_d>0\) is the terminal diffusion time used in the experiments. This
schedule satisfies \(\alpha_0=1\), \(\beta_0=0\), and, for sufficiently large
\(T_d\), \(\alpha_1\approx0\), \(\beta_1\approx1\).

Let \(p_t(\cdot\mid y_{\mathrm{obs}})\) denote the conditional law of \(Z_t\)
given \(Y_{\mathrm{obs}}=y_{\mathrm{obs}}\).  The reverse-time dynamics of the
Ornstein--Uhlenbeck process is driven by the conditional score
\(C\nabla_z\log p_t(z\mid y_{\mathrm{obs}})\). In the original diffusion time, it takes the form
\begin{equation}
    d\widetilde Z_t
    =
    \big(
        \tfrac12 \widetilde Z_t
        +
        C\nabla \log p_{T_d-t}
        (\widetilde Z_t|y_\mathrm{obs})
    \big)dt
    +
    \sqrt{C}d\widetilde W_t,
    \quad
    \widetilde Z_0\sim \pi(\cdot|y_\mathrm{obs}).
    \label{eq:reverse_sde_ou}
\end{equation}
For the Gaussian noising schedule \eqref{eq:alpha_beta_schedule}, Tweedie's
formula gives
\begin{equation}
    C\nabla\log p_t(z)
    =
    \mathbb E\big[
        \tfrac{\alpha_t X-z}{\beta_t^2}
        \,\big|\,
        Z_t=z,Y_\mathrm{obs}=y_\mathrm{obs}
    \big].
    \label{eq:tweedie_score_ou}
\end{equation}
Accordingly, a conditional score network \(s_\theta\) can be trained through the denoising
score-matching objective \cite{lu2025mathematical}
\begin{equation}
    \mathcal L_{\mathrm{score}}(\theta)
    :=
    \mathbb E\big[
        \big\|
        s_\theta(Z_t,t;Y_\mathrm{obs})
        -
        \tfrac{\alpha_t X-Z_t}
        {\beta_t^2}
        \big\|^2
    \big],
    \label{eq:score_matching_objective}
\end{equation}
whose population minimizer is the conditional score in \eqref{eq:tweedie_score_ou}. New
samples are then generated by solving the reverse dynamics
\eqref{eq:reverse_sde_ou}, which requires many time steps and repeated
evaluations of the learned score.
\begin{table}[htbp!]
\centering
\caption{Posterior approximation errors of the multistep diffusion baseline.}
\label{tab:diffusion}
\begin{tabular}{l l  c c}
\toprule
Dataset & Source &  $\epsilon_\mu^\mathrm{eval}$ & $\epsilon_\sigma^\mathrm{eval}$ \\
\midrule
\multirow{2}{*}{Darcy}
  & Anisotropic & $1.08\times10^{-1}$& $7.00\times10^{-2}$ \\
  & White    & $1.19\times10^{-1}$& $7.71\times10^{-2}$  \\
\midrule
\multirow{2}{*}{Advection}
  & Anisotropic  &$9.90\times10^{-2}$ & $4.30\times10^{-2}$\\
  & White    &$2.99\times10^{-1}$ & $8.24\times10^{-2}$\\
\midrule
\multirow{2}{*}{Reaction-Diffusion}
  & Anisotropic  & $1.08\times10^{-1}$& $1.02\times10^{-1}$\\
  & White    & $1.16\times10^{-1}$& $9.21\times10^{-2}$\\
\midrule
\multirow{2}{*}{Navier--Stokes}
  & Anisotropic  & $9.97\times10^{-2}$& $8.08\times10^{-2}$\\
  & White    & $1.09\times10^{-1}$& $7.63\times10^{-2}$\\
\bottomrule
\end{tabular}
\end{table}

\begin{figure}[h]
    \centering
    \subfigure[$\epsilon_\mu^\mathrm{eval}$]{
        \includegraphics[width=0.3\linewidth]{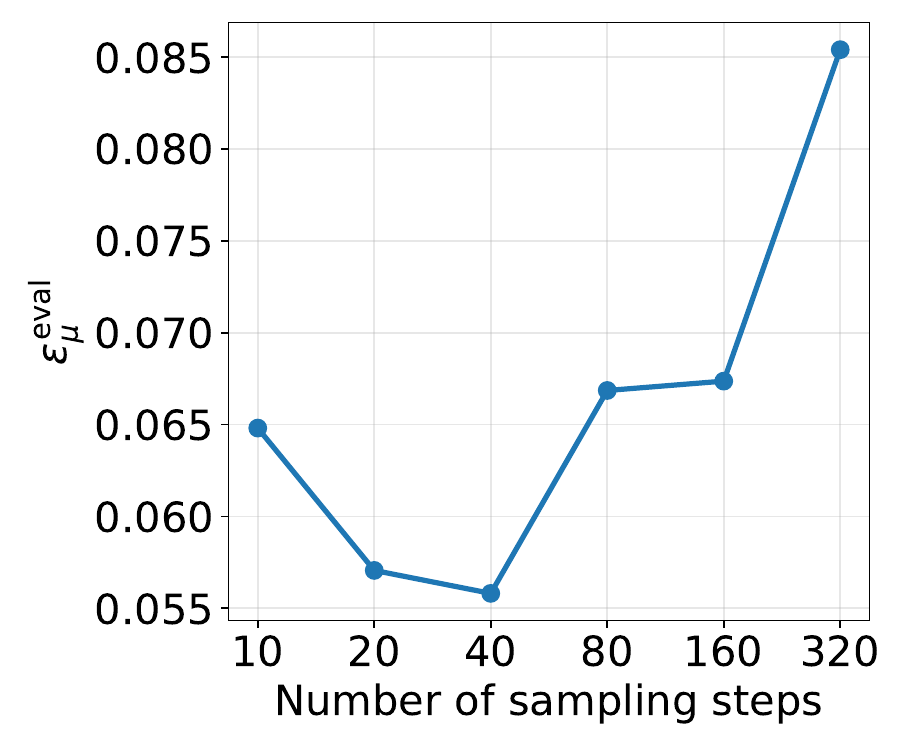}
        \label{fig:error_diffusion_mean}
    }
    \subfigure[$\epsilon_\sigma^\mathrm{eval}$]{
        \includegraphics[width=0.3\linewidth]{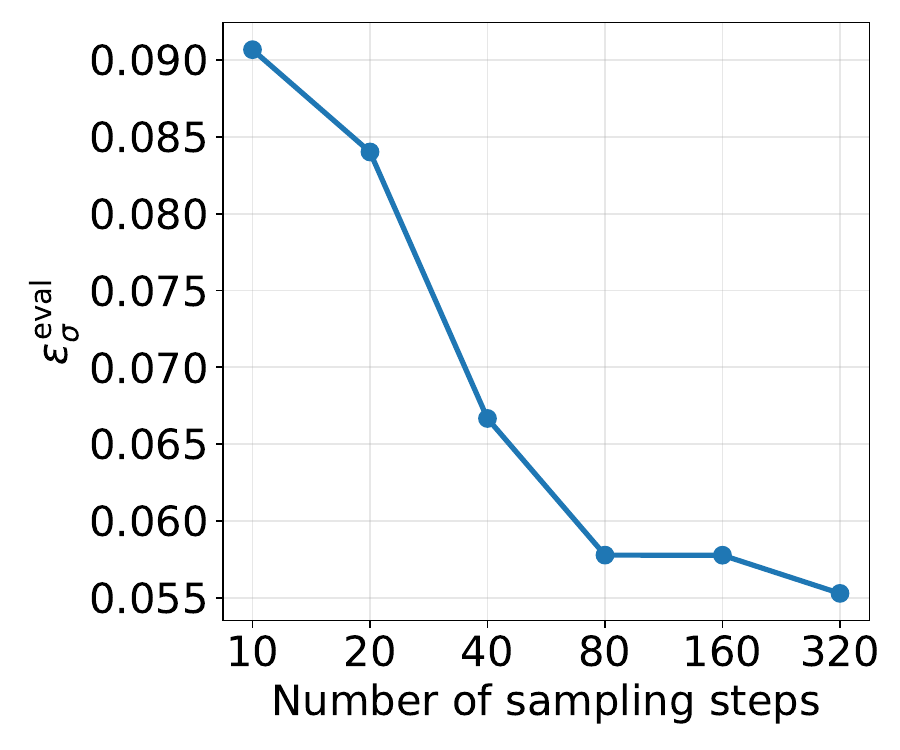}
        \label{fig:error_diffusion_std}
    }%
    \caption{Effect of the number of test-time sampling steps on posterior accuracy for the linear inverse problem for the multistep diffusion model. 
    We use the same model trained with 320 sampling steps and vary only the number of 
    sampling steps at inference time.}
    \label{fig:multistep_posterior_quality}
\end{figure}

As shown in Table~\ref{tab:diffusion}, with \(T_d=10\) and \(320\) sampling steps (best setting), the multistep sampler does not consistently improve the posterior approximation compared to our one-step sampler; it often gives larger mean and standard deviation errors than the one-step model. This is likely due to multistep SDE samplers introducing an additional trajectory-wise error accumulation mechanism: each sampling step relies on an imperfectly learned score and a finite time discretization, and the resulting local errors can accumulate along the numerical trajectory. 

This interpretation is supported by Figure~\ref{fig:multistep_posterior_quality},
where we keep the trained multistep diffusion model fixed and vary only the
number of sampling steps. As the number of sampling steps increases, \(\epsilon_\mu^\mathrm{eval}\) even becomes larger.


\section{Comparison with the prior-only model with guidance}\label{sec:prior_guidance_comparison}
\begin{figure}[!htb]
    \centering

    \subfigure[$\mu_\mathrm{ref}$]{
        \includegraphics[width=0.18\linewidth]{identity65_ref_mean.pdf}
        \label{fig:identity65_ref_mean}
    }%
    \subfigure[$\mu_\mathrm{ours}$]{
        \includegraphics[width=0.18\linewidth]{identity65_sol_mean.pdf}
        \label{fig:identity65_sol_mean}
    }%
    \subfigure[$\mu_\mathrm{DPS}$]{
        \includegraphics[width=0.18\linewidth]{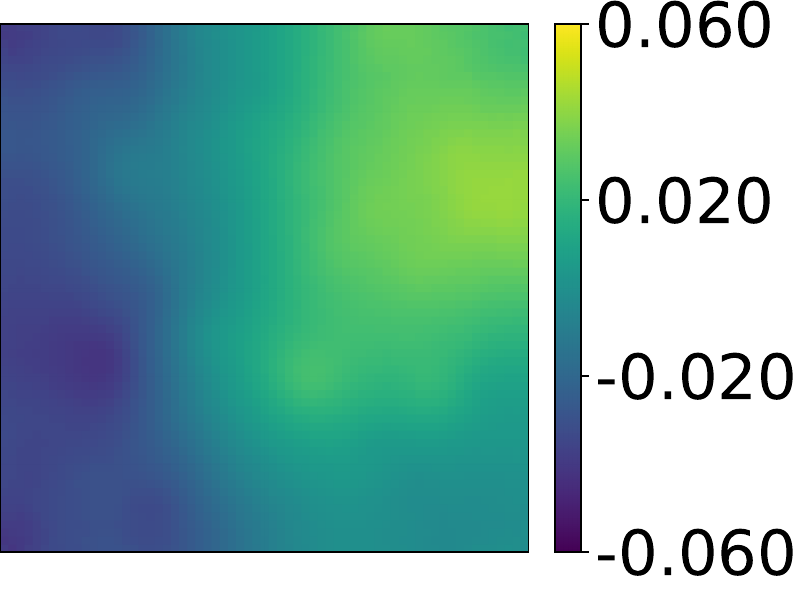}
        \label{fig:identity65_dps_mean_zeta1e-1}
    }%
    \subfigure[$\mu_{\mathrm{ours}}-\mu_{\mathrm{ref}}$]{
        \includegraphics[width=0.18\linewidth]{identity65_diff_mean.pdf}
        \label{fig:identity65_diff_mean}
    }%
    \subfigure[$\mu_{\mathrm{DPS}}-\mu_{\mathrm{ref}}$]{
        \includegraphics[width=0.18\linewidth]{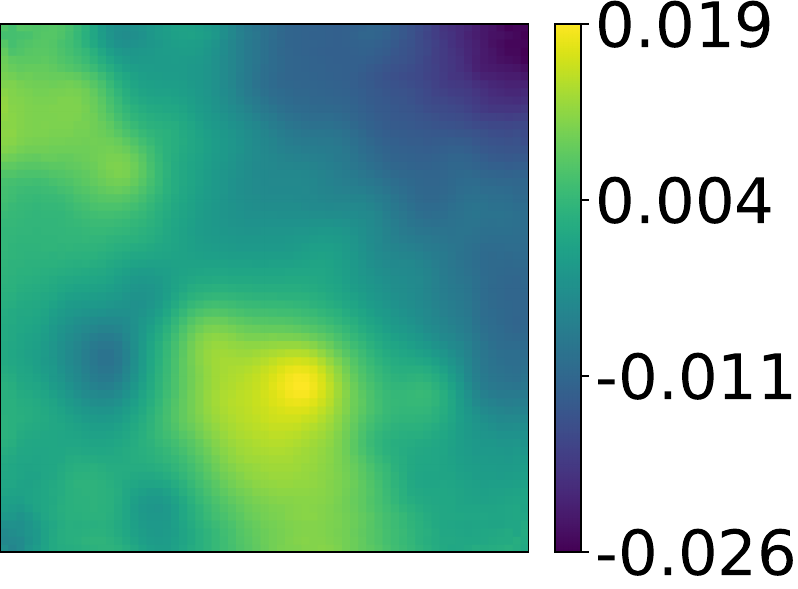}
        \label{fig:identity65_diff_mean_dps_zeta1e-1}
    }
    \subfigure[$\sigma_\mathrm{ref}$]{
        \includegraphics[width=0.18\linewidth]{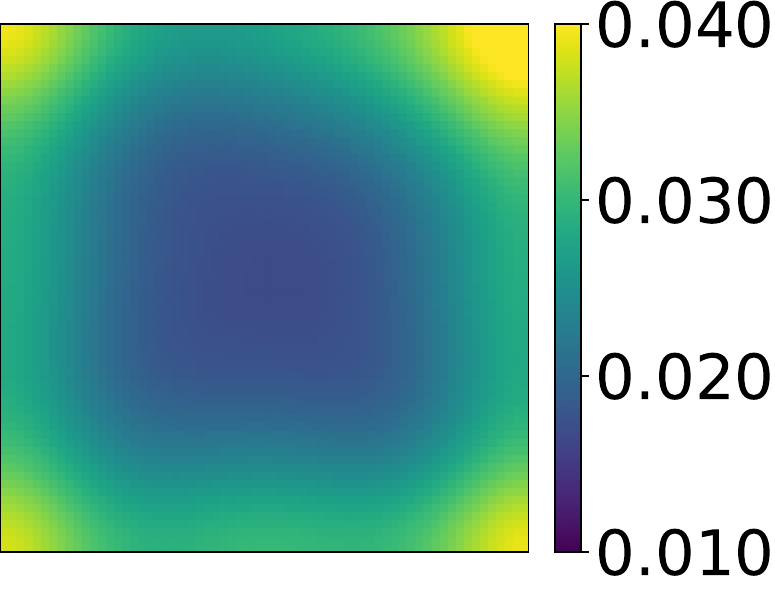}
        \label{fig:identity65_ref_std}
    }%
    \subfigure[$\sigma_\mathrm{ours}$]{
        \includegraphics[width=0.18\linewidth]{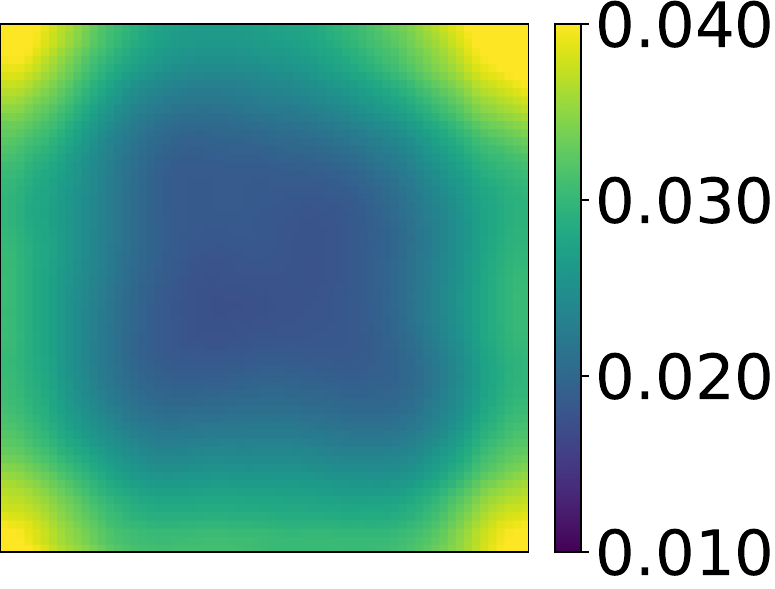}
        \label{fig:identity65_sol_std}
    }%
    \subfigure[$\sigma_{\mathrm{DPS}}$]{
        \includegraphics[width=0.18\linewidth]{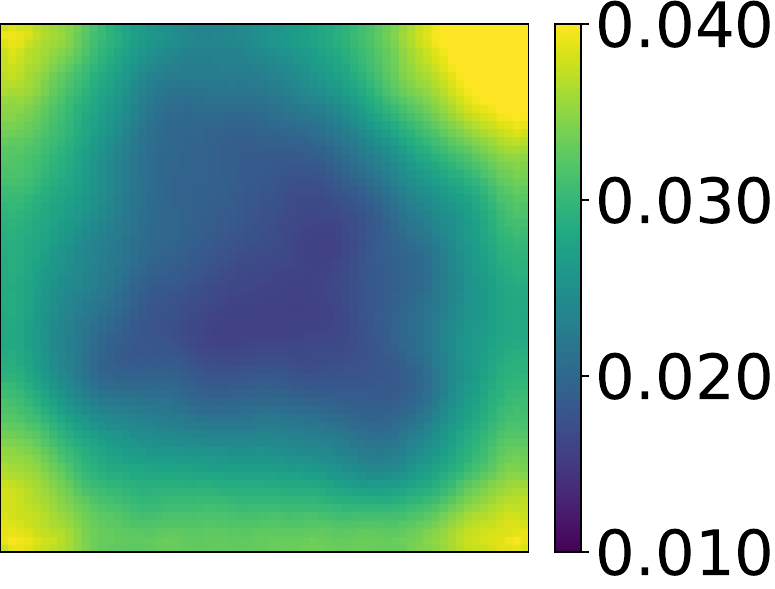}
        \label{fig:identity65_dps_std_zeta1e-1}
    }%
    \subfigure[$\sigma_{\mathrm{ours}}-\sigma_{\mathrm{ref}}$]{
        \includegraphics[width=0.18\linewidth]{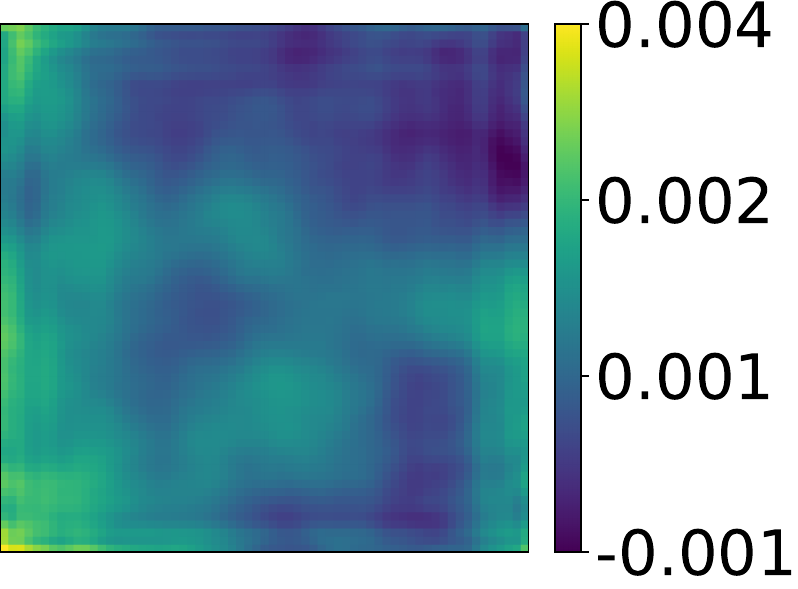}
        \label{fig:identity65_diff_std}
    }%
    \subfigure[$\sigma_{\mathrm{DPS}}-\sigma_{\mathrm{ref}}$]{
        \includegraphics[width=0.18\linewidth]{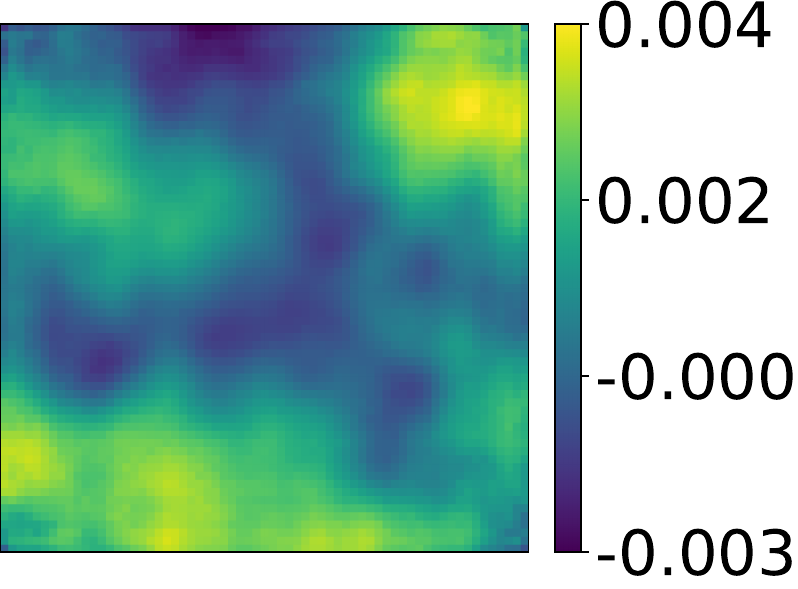}
        \label{fig:identity65_diff_std_dps_zeta1e-1}
    }

\caption{Comparison of posterior statistics from our model and DPS inference on the same $65\times65$ linear test instance as in Figure~\ref{fig:identity_results}. 
DPS uses an unconditionally trained diffusion model with a guidance weight, and we report the best-performing result obtained by tuning this weight over $[0,1]$, attained at $0.1$.}
    \label{fig:DPS_identity_results}
\end{figure}

\begin{figure}[!htb]
    \centering

    \subfigure[$\mu_\mathrm{ref}$]{
        \includegraphics[width=0.18\linewidth]{darcy65_ref_mean.pdf}
        \label{fig:darcy65_ref_mean}
    }%
    \subfigure[$\mu_\mathrm{ours}$]{
        \includegraphics[width=0.18\linewidth]{darcy65_sol_mean.pdf}
        \label{fig:darcy65_sol_mean}
    }%
    \subfigure[$\mu_\mathrm{DPS}$]{
        \includegraphics[width=0.18\linewidth]{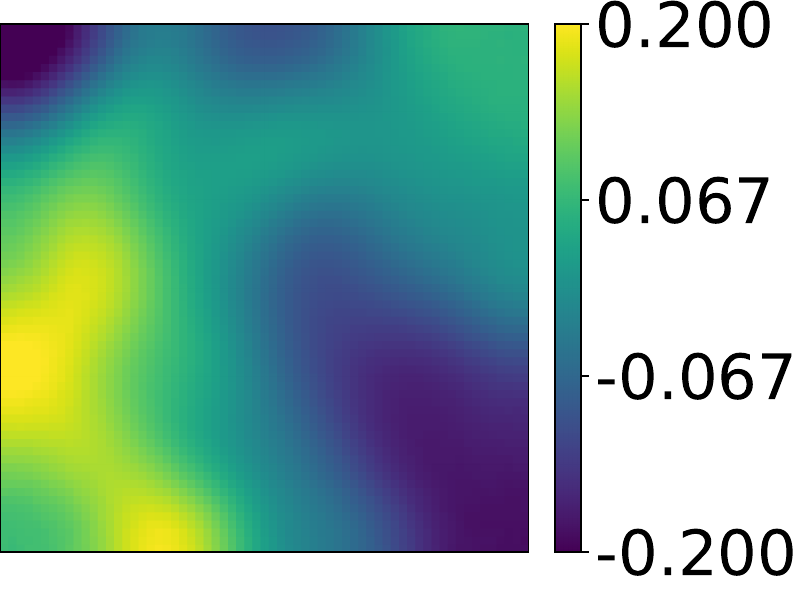}
        \label{fig:darcy65_dps_mean_zeta1e-1}
    }%
    \subfigure[$\mu_{\mathrm{ours}}-\mu_{\mathrm{ref}}$]{
        \includegraphics[width=0.18\linewidth]{darcy65_diff_mean.pdf}
        \label{fig:darcy65_diff_mean}
    }%
    \subfigure[$\mu_{\mathrm{DPS}}-\mu_{\mathrm{ref}}$]{
        \includegraphics[width=0.18\linewidth]{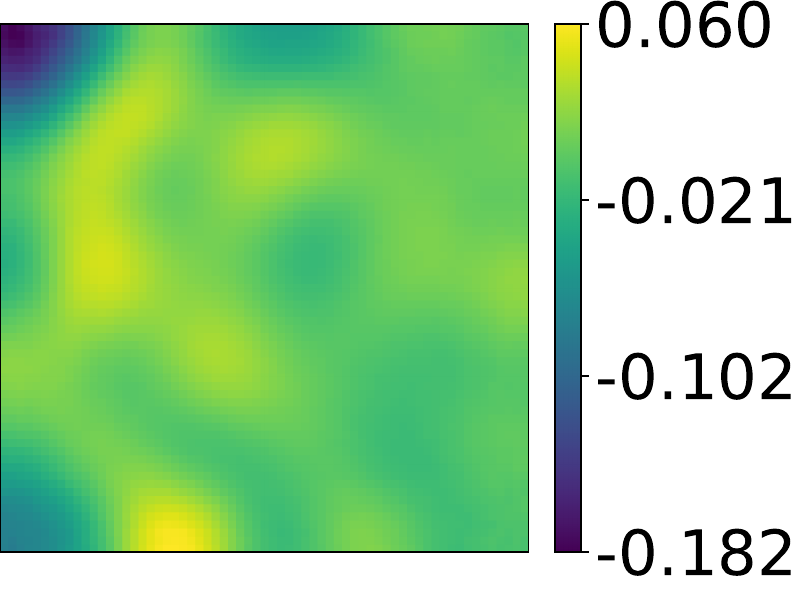}
        \label{fig:darcy65_diff_mean_dps_zeta1e-1}
    }
    \subfigure[$\sigma_\mathrm{ref}$]{
        \includegraphics[width=0.18\linewidth]{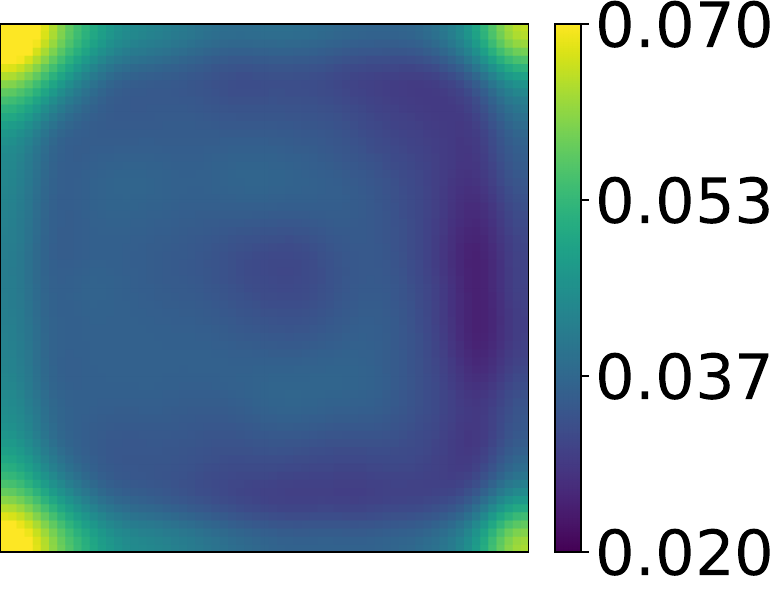}
        \label{fig:darcy65_ref_std}
    }%
    \subfigure[$\sigma_\mathrm{ours}$]{
        \includegraphics[width=0.18\linewidth]{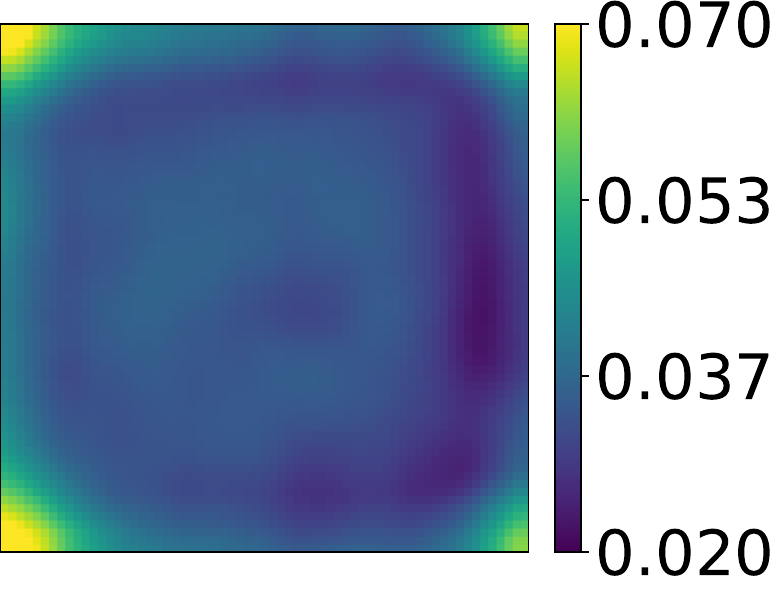}
        \label{fig:darcy65_sol_std}
    }%
    \subfigure[$\sigma_{\mathrm{DPS}}$]{
        \includegraphics[width=0.18\linewidth]{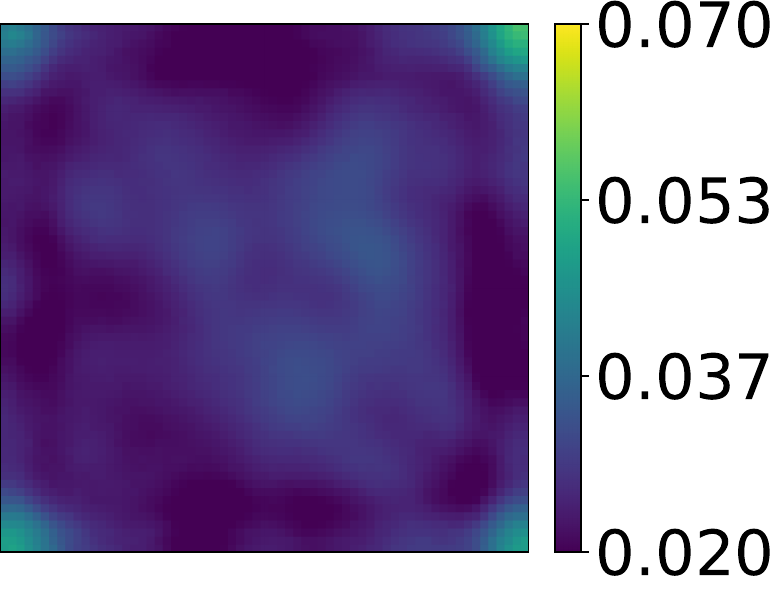}
        \label{fig:darcy65_dps_std_zeta1e-1}
    }%
    \subfigure[$\sigma_{\mathrm{ours}}-\sigma_{\mathrm{ref}}$]{
        \includegraphics[width=0.18\linewidth]{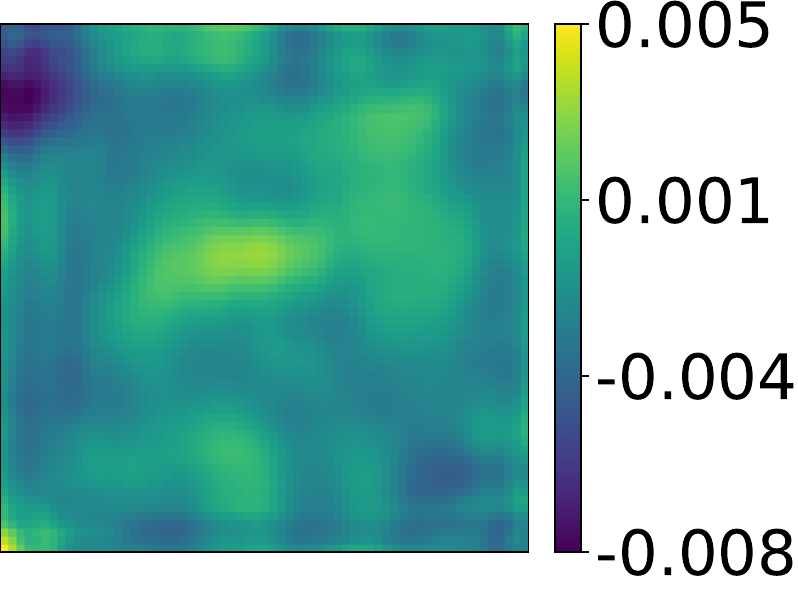}
        \label{fig:darcy65_diff_std}
    }%
    \subfigure[$\sigma_{\mathrm{DPS}}-\sigma_{\mathrm{ref}}$]{
        \includegraphics[width=0.18\linewidth]{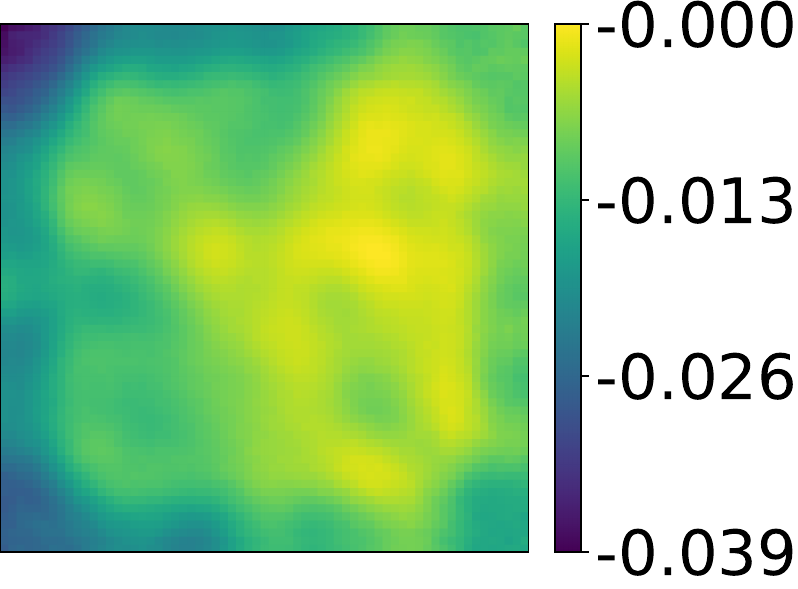}
        \label{fig:darcy65_diff_std_dps_zeta1e-1}
    }

    \caption{Comparison of posterior statistics from our model and DPS inference on the same $65\times65$ darcy test instance as in Figure~\ref{fig:darcy_results}. DPS uses an unconditionally trained diffusion model with a guidance weight, and we report the best-performing result obtained by tuning this weight over $[0,1]$, attained at $0.1$.}
    \label{fig:DPS_darcy_results}
\end{figure}
We also compare with DPS~\cite{chung2023diffusion}, which combines a prior-only diffusion model with guidance-based sampling. This type of method is modular and reusable, since the unconditional prior model can be trained once and then used
for different problems (e.g., different levels of observation noise $\eta$) by changing the guidance term. However, the posterior is not learned directly: the observation is incorporated only during sampling through an approximate likelihood guidance term. Thus, DPS is an approximate posterior sampler rather than an exact amortized conditional sampler for
\(\pi(\cdot\mid y_{\mathrm{obs}})\) as our model which has been proved in Theorem~\ref{thm:posterior_conditioned_population_minimizer}. In addition, for PDE inverse problems, this guidance step requires repeated evaluations of the forward solver and its gradient, making sampling problem-specifically expensive. 

These differences have been shown in numerical experiments: DPS takes \(0.290\)s/sample for the linear inverse problem and \(43.048\)s/sample for Darcy, while our one-step sampler takes only \(3.50\times10^{-4}\)s/sample.
Moreover, even after tuning the guidance weight, DPS yields inaccurate posterior errors in the function-space inverse-problem setting. For the linear inverse problem, DPS gives
\(\epsilon_\mu^{\mathrm{eval}}=0.2809\) and
\(\epsilon_\sigma^{\mathrm{eval}}=0.0700\), while our one-step sampler achieves
\(\epsilon_\mu^{\mathrm{eval}}=0.0494\) and
\(\epsilon_\sigma^{\mathrm{eval}}=0.0272\), as reported in
Table~\ref{tab:benchmark_summary_identity}. For the Darcy problem, DPS gives
\(\epsilon_\mu^{\mathrm{eval}}=0.3117\) and
\(\epsilon_\sigma^{\mathrm{eval}}=0.3394\), whereas our method achieves
\(\epsilon_\mu^{\mathrm{eval}}=0.0555\) and
\(\epsilon_\sigma^{\mathrm{eval}}=0.0427\) as reported in Table~\ref{tab:benchmark_summary}. Visual comparisons are shown in
Figures~\ref{fig:DPS_identity_results}--\ref{fig:DPS_darcy_results}.

\section{Notation and Network/Training Settings}
\label{app:notation_settings}
We summarize the notation in Table~\ref{tab:notation_summary} and the network architecture and training hyperparameters in Table~\ref{tab:network_setting}. 

\begin{table}[htbp!]
\centering
\caption{Notation.}
\label{tab:notation_summary}
\begin{small}
\begin{tabular}{l l}
\toprule
Symbol & Description \\
\midrule
$\Omega$ & Spatial domain \\
$d$ & Spatial dimension \\
$s$ & Sobolev regularity exponent defining the distribution space $\mathcal{H}^{-s}$ \\

\midrule
$X,x,\widetilde X$ & Unknown random function, its realization, and encoded variable \\
$Y_{\mathrm{obs}},y_{\mathrm{obs}},\widetilde Y_{\mathrm{obs}}$
& Observation random variable, its realization, and encoded observation \\
$\Xi,\xi$ & Source random variable and its realization, sampled from $\rho$ \\
$Z_t,z_t$ & Interpolated state and its realization at time $t$ \\
$V^{\mathrm{path}},v^{\mathrm{path}}$ & Path velocities, $V^{\mathrm{path}}:=\Xi-X$ and $v^{\mathrm{path}}:=\xi-x$\\
$v,v_\theta$ & Exact and learned instantaneous marginal velocity fields \\
$W_{\mathrm{tgt}}$ & Stop-gradient Mean-Flow target \\
$w,w_\theta$ & Exact and learned Mean-Flow averaged velocity fields \\
$r$, $t$ & Start and end time of the Mean-Flow interpolation \\

\midrule
$\mathcal{T}$ & Transport map from source to posterior \\
$\mathcal E_x,\mathcal E_y$ & State and observation encoders \\
$\Lambda$ & Prior covariance operator \\
$C$ & Anisotropic source covariance operator \\
$\gamma$ & Prior distribution, $\gamma\sim\mathcal N(0,\Lambda)$\\
$\rho$ & Source distribution; $\rho_C\sim\mathcal N(0,C)$ and $\rho_W\sim\mathcal N(0,I)$\\
$\pi$ & Posterior distribution \\
$\Gamma$ & Observation-noise covariance matrix \\
$\eta$ & Measurement noise \\
$\mathcal{G}$ & Forward operator \\
$\mathcal O$ & Observation operator \\
$\mathcal G_{\mathrm{obs}}$ & Observed forward map, $\mathcal G_{\mathrm{obs}}=\mathcal O\circ\mathcal G$\\

\midrule
$\theta$ & Network weights (trainable parameters)\\
$\hat\theta$ & Trained parameters used at inference \\

\midrule
$\mu_{\mathrm{ref}}$ & Reference posterior mean \\
$\sigma_{\mathrm{ref}}$ & Reference posterior pointwise standard deviation \\
$\mu_{\mathrm{pred}}$ & Predicted posterior mean from the learned model \\
$\sigma_{\mathrm{pred}}$ & Predicted posterior pointwise standard deviation from the learned model \\
$\epsilon_{\mu}^{\mathrm{eval}}$ & Relative $L^2(\Omega)$ error of posterior mean \\
$\epsilon_{\sigma}^{\mathrm{eval}}$ & Relative $L^2(\Omega)$ error of posterior standard deviation \\
\bottomrule
\end{tabular}
\end{small}
\end{table}

\begin{table}[htbp!]
\centering
\caption{Network architecture and training hyperparameters.}
\label{tab:network_setting}
\begin{small}
\begin{tabular}{l c c}
\toprule
Setting & Symbol & Value \\
\midrule
FNO Fourier modes & $K$ & 12 \\
FNO layers & $L$ & 4 \\
Network width & $N_{\mathrm{width}}$ & 96 \\
Learning rate & $\mathrm{lr}$ & $10^{-4}$ \\
Flow ratio & $\nu_{\mathrm{flow}}$ & 0.25 \\
Batch size & $N_{\mathrm{batch}}$ & 20 \\
Epochs & $N_{\mathrm{epoch}}$ & 1000 \\
Training size & $N_{\mathrm{train}}$ & 10000 \\
Posterior samples per test observation & $N_{\mathrm{eval}}$ & 1000 \\
Time-pair sampling & $\lambda$ & Uniform on \(0\le r\le t\le1\) \\
\midrule
Grid resolution (Linear, Darcy) & $N\times N$ & $65\times 65$ \\
Grid resolution (Advection, Reaction-Diffusion, Navier--Stokes)
& $N\times N$ & $64\times 64$ \\
Observation count (Linear, Darcy) & $m$ & 49 \\
Observation count (Advection, Reaction-Diffusion, Navier--Stokes) & $m$ & 64 \\
KL modes (anisotropic prior/source, Linear) & $N_{\mathrm{KL}}$ & $65^2-1$ \\
KL modes (anisotropic prior/source, PDE datasets) & $N_{\mathrm{KL}}$ & 64 \\
\bottomrule
\end{tabular}
\end{small}
\end{table}

\bibliography{cas-refs}
\bibliographystyle{plain}
\end{document}